\theoremstyle{plain}
\theoremstyle{definition}
\newtheorem{definition}{Definition}
\newtheorem{assumption}{Assumption}
\theoremstyle{remark}
\definecolor{lightblue}{rgb}{0.9, 0.9, 1.0}
\definecolor{titleblue}{rgb}{0.8, 0.8, 0.95}
\definecolor{lightgray}{rgb}{0.9,0.9,0.9}
\newenvironment{questionbox}[1][Question]{%
  \begin{tcolorbox}[
    arc=10pt,
    colback=lightgray,
    colframe=lightgray,
    colbacktitle=gray,
    coltitle=black,
    fonttitle=\bfseries,
    title=#1,
    boxrule=0pt,
    leftrule=0pt,
    rightrule=0pt,
    toprule=0pt,
    bottomrule=0pt,
  ]
  \setlength{\parindent}{0pt}
  \setlength{\parskip}{0.5\baselineskip}
}{%
  \end{tcolorbox}
}
\def\eqref#1{equation~\ref{#1}}
\def\1{\bm{1}}
\DeclareMathAlphabet{\mathsfit}{\encodingdefault}{\sfdefault}{m}{sl}
\SetMathAlphabet{\mathsfit}{bold}{\encodingdefault}{\sfdefault}{bx}{n}
\title{Improving LLM Unlearning Robustness via \\Random Perturbations}
\author{\name Dang Huu-Tien$^{\dagger,*}$, Hoang Thanh-Tung$^\ddagger$, Anh Tuan Bui$^\clubsuit$, Phuong Minh Nguyen$^\dagger$, \\[0.1cm] Le-Minh Nguyen$^\dagger$, and Naoya Inoue$^{\dagger,\diamondsuit}$ \\[0.1cm]
      \addr $^\dagger$Japan Advanced Institute of Science and Technology,
      \addr $^\ddagger$VNU University of Engineering and Technology,\\
      \addr $^\clubsuit$Monash University,
      \addr $^\diamondsuit$RIKEN \\[0.1cm]
      *Correspondence to: tiendh@jaist.ac.jp
      }
\begin{document}

\maketitle

\begin{abstract}
Here, we show that current LLM unlearning methods \emph{inherently reduce models' robustness}, causing them to misbehave even when a single non-adversarial forget-token is present in the retain-query.
Toward understanding underlying causes, we propose a novel theoretical framework that reframes the \emph{unlearning process as a backdoor attack and defense problem}: we formulate how the forgetting process \emph{inadvertently} learns to align forget-tokens (backdoor triggers) with the target-representations (target labels). As a result, forget-tokens act as backdoor triggers that, when activated in retain-queries, cause disruptions in unlearned models' behaviors, similar to successful backdoor attacks.
The sense that, LLM unlearning methods \emph{themselves poison the model}, make it more vulnerable to forget-tokens, and \emph{hide rather than erase} target knowledge, describes their true mechanism.
To mitigate the vulnerability caused by the forgetting process, we reinterpret the retaining process as a backdoor defense and propose Random Noise Augmentation (RNA), a lightweight, model and method-agnostic approach with theoretical guarantees for improving the robustness of unlearned models. 
Extensive experiments demonstrate that RNA significantly improves the robustness of unlearned models while preserving forget and retain performances. This backdoor attack-defense framework offers insights into the mechanism of unlearning that can shed light on future research directions for improving unlearning robustness.
\end{abstract}

\section{Introduction}
\label{introduction}
Modern LLMs are pre-trained on massive text corpora and then post-trained with reinforcement learning from human feedback~\citep{christiano2017deep, ziegler2019fine, stiennon2020learning, ouyang2022training} or direct preference optimization~\citep{rafailov2023direct} to be helpful and harmless~\citep{bai2022training}. 
Recent studies have shown that despite safety enhancements, aligned LLMs can still exhibit harmful and undesirable behaviors, such as generating toxic content~\citep{wen2023unveiling}, producing copyrighted material~\citep{karamolegkou-etal-2023-copyright, eldan2023s, wei2024evaluating, cooper2025extracting, ahmed2026extracting}, bias~\citep{belrose2024leace}, leaking sensitive and private information~\citep{nasr2023scalable, patil2024can}, and potentially aiding malicious uses such as cyberattacks, chemical attacks, and bioweapons development~\citep{fang2024llm, sandbrink2023artificial, wmdp}.
As LLMs advance in size and capabilities at an unprecedented speed, concerns about their potential risks continue to grow.


Machine Unlearning (MU;~\cite{7163042, bourtoule2021machine, nguyen2025survey, 10.1145/3603620, ren2025sok, barez2025open, liu2025rethinking}) is an approach aiming to \emph{robustly} (1) remove specific target knowledge in a forget-set and capabilities from a pre-trained model, while (2) retaining the model's other knowledge in a retain-set and capabilities. Recent works on the robustness of unlearning methods primarily focus on the first criterion, evaluating the robustness of unlearned models against knowledge recovery that adversarially tries to recover unlearned knowledge. For example, previously unlearned knowledge is shown to resurface through relearning~\citep{wmdp, deeb2024unlearning, lo-etal-2024-large}, sequential unlearning~\citep{shi2025muse}, target relearning attacks~\citep{hu2025unlearning}, removing or steering specific directions in the latent space~\citep{lucki2024adversarial, seyitouglu2024extracting}, quantization~\citep{zhang2025catastrophic}, or even simply fine-tuning on unrelated tasks~\citep{doshi2024does, lucki2024adversarial}. 

However, the equally important criterion of \textit{robustly preserving the model's general knowledge}---that is, ensuring stable and accurate responses to retain-queries even when they inadvertently include forget-tokens---remains underexplored.
Initial steps have been taken, such as \citet{thaker2024position}, who examined the robustness of Representation Misdirection for Unlearning (RMU;~\cite{wmdp}), demonstrating that RMU-unlearned models are fragile when asked with retain-queries (\textit{e.g.}, Q\&A about general knowledge) containing forget-tokens (tokens in the forget-set). However, many critical questions remain unanswered.
In this paper, we make the following contributions:

\ding{192} \textbf{Unified view of LLM unlearning.} We first draw a connection between the current two widely used classes of LLM unlearning methods, including Representation Misdirection (RM) and Preference Optimization (PO), through a unified view of the generative latent variable model. Inspired by this view, we present an analysis to show that current unlearning methods \textit{inherently reduce the model robustness}, in the sense that they can be misbehaved even when a \textit{single non-adversarial} forget-token appears in the retain-query.

\ding{193} \textbf{Conceptual framework: unlearning as a backdoor attack and defense problem.} We propose a novel perspective that decomposes the unlearning process into ``forgetting'' and ``retaining'' processes and reframes it as a \textit{backdoor attack and defense problem}. \emph{The ``forgetting'' corresponds to a backdoor attack}: by treating the forget-set as a poisoned dataset, we formulate how LLM unlearning methods inadvertently learn to align forget-tokens (backdoor triggers) with the target-representations (target labels). As a result, when forget-tokens appear in a retain-query, it is similar to activating the backdoor trigger, making the model misbehave. To counteract the vulnerability introduced by the ``forgetting'', we reinterpret \emph{the ``retaining'' as a backdoor defense}, forming unlearning as an adversarial process between forgetting and retaining. This conceptual framework provides an explanation for the brittleness of current unlearning methods and sheds light on developing robust unlearning methods.

\ding{194} \textbf{A lightweight, model, and method-agnostic robust unlearning approach.} We introduce \textit{Random Noise Augmentation (RNA)}, a lightweight, model- and method-agnostic approach which adds small, independent Gaussian noise to each retain-query's representation during training to reduce the model's sensitivity to forget-tokens. Through theoretical and empirical analysis, we show that RNA significantly improves the robustness of unlearned models while maintaining original forget and retain performances. 

\section{Related Works and Preliminaries}
\subsection{Related Works}
\label{related_work}
\textbf{LLM unlearning.} Machine unlearning has become one of the most important tools for ensuring the safety and protecting the privacy of LLMs~\citep{7163042, bourtoule2021machine, nguyen2025survey, 10.1145/3603620, barez2025open, liu2025rethinking, ren2025sok}. 
Most recent works on LLM unlearning focus on developing algorithms for different tasks, domains, and settings~\citep{pawelczyk2024context, thaker2024guardrail, jin2024rwku, shi2025muse, choi2024cross, pal2025llm, muhamed2025saes, wang2025llm, kuo2025exact, zhuang-etal-2025-seuf, wei2025provable, ren-etal-2025-general, wang-etal-2025-reasoning}, while much less effort was spent on developing robust unlearning algorithms. 

\textbf{Unlearning robustness.} Previous works on MU robustness focus on ``\textbf{forget-robustness},'' studying the robustness of MU algorithms in making the model forget the target knowledge and capabilities.
Researchers showed that unlearned knowledge can resurface through re-learning~\citep{wmdp, lynch2024eight, barez2025open, lo-etal-2024-large}, sequential unlearning~\citep{shi2025muse}, quantization~\citep{zhang2025catastrophic}, fine-tuning unlearned models on unrelated tasks~\citep{doshi2024does, lucki2024adversarial}, and adversarial attacks~\citep{hu2025unlearning, yuan2025towards, shumailov2024ununlearning, huang2025unlearn, wu2025breaking} and developed methods for improving forget-robustness of MU algorithms~\citep{sheshadri2024latent, tamirisa2025tamperresistant, tamirisa2024toward, fan2025towards, yan2025dual, zhang2025catastrophic, wang2025invariance}.

\textbf{This work.} This work explores the ``\textbf{retain-robustness}'' of LLM unlearning algorithms, an unexplored topic, which studies the robustness of LLM unlearning algorithms in \emph{\textbf{robustly retaining the original model's general knowledge and capabilities}}.~\citet{thaker2024position} presented preliminary results showing that state-of-the-art LLM unlearning algorithms do not preserve the original model's knowledge and capabilities.
We bridge the gap in retain-robustness research by introducing Random Noise Augmentation, a simple latent-space smoothing approach to improve the robustness of LLM unlearning algorithms.
\subsection{Preliminaries}
We first define the retain-robustness studied in this work.
\begin{definition}[Retain-robustness]
    The capacity of MU algorithms to preserve the model's general knowledge and capabilities when handling retain-queries that are \textit{inadvertently} contain forget-tokens, \textit{without any intention of adversarially attacking the model} or \textit{closely related to forget-sets}.
\end{definition}

\textbf{Notation and problem formulation.} 
The training data of an MU problem consists of two subsets: the forget-set $\mathcal{D}_{f}$ and the retain-set $\mathcal{D}_{r}$.
The goal is to minimize the model's performance on the forget set while keeping the performance on the retain-set.
Let $f_{\bm \theta}$ be a model parameterized by $\bm \theta$, and $\ell(\mathbf{y}|\mathbf{x};\bm\theta)$ is the loss of input $\mathbf{x}$ with respect to a target output $\mathbf{y}$ in model $f_{\bm \theta}$. 
A commonly used form of unlearning involves minimizing the following two-part loss:
\begin{align}
\mathcal{L}_{\mathcal{D}_f,\mathcal{D}_r,\bm\theta}= \alpha_f\mathbb{E}_{(\mathbf{x}^f,\mathbf{y}^f) \sim \mathcal{D}_{f}}\left[\ell\left(\mathbf{y}^{f}|\mathbf{x}^{f};\bm\theta\right)\right] + \alpha_r\mathbb{E}_{(\mathbf{x}^{r},\mathbf{y}^{r}) \sim \mathcal{D}_{r}}\left[\ell\left(\mathbf{y}^{r}|\mathbf{x}^{r};\bm\theta\right)\right]
\end{align}
where $\mathbf{y}^{f},\mathbf{y}^{r}$ are the target outputs of forget and retain input, respectively, $\alpha_f, \alpha_r\in \mathbb{R}^{+}$ are forget and retain weights, respectively. We consider two widely used classes of LLM unlearning methods, which rely on Representation Misdirection (RM) and Preference Optimization (PO). We denote $||\cdot||$ the Euclidean norm.
\subsubsection{Representation Misdirection}
Representation Misdirection (RMU and its variants) is an unlearning approach that conducts unlearning by randomizing latent representations during fine-tuning. Denote $\mathbf{z}_{\bm\theta}^{f}, \mathbf{z}_{\bm\theta}^{r} \in \mathbb{R}^{n\times d_l}$ the latent representations of $n$-tokens in forget-sample $\mathbf{x}^{f}$ and in retain-sample $\mathbf{x}^{r}$, respectively, at layer $l$ in model $f_{\bm\theta}$, where $d_l$ is the dimension of representations at layer $l$.

\textbf{Representation Misdirection for Unlearning (RMU;\textnormal{~\cite{wmdp}})} pushes the latent representation of forget-tokens to a predetermined random representation $\mathbf{y}^f=c\mathbf{u}$, where $\mathbf{u} \in \mathbb{R}^{d_l}$ is a unit vector with each element uniformly sampled from $[0,1)$, and $c\in \mathbb{R}^{+}$ is a coefficient. It also regularizes the latent representation of retain-tokens back to the reference model's representation:
\begin{align}
\mathcal{L}^{\text{RMU}}= \alpha_f\mathbb{E}_{\mathbf{x}^{f} \sim \mathcal{D}_{f}} ||\mathbf{z}_{\bm\theta}^{f}-c\mathbf{u}||^2 + \alpha_r \mathbb{E}_{\mathbf{x}^{r} \sim \mathcal{D}_{r}} ||\mathbf{z}_{\bm\theta}^r-\mathbf{z}_{\bm\theta^{\text{ref}}}^r||^2,
\end{align}
where $\bm\theta$ and $\bm\theta^{\text{ref}}$ are the parameters of the updated and reference (frozen weight) models, respectively.

\textbf{Adaptive RMU~\textnormal{~\citep{dang2025effects}}} is a variant of RMU that adaptively changes the coefficient of the random vector $\mathbf{u}$ in the forget-loss based on the norm of the forget-sample's representations in the reference model. The target random representation $\mathbf{y}^f = \beta ||\mathbf{z}^{f}_{\bm\theta^{\text{ref}}}||\mathbf{u}$, $\beta \in \mathbb{R}^{+}$ is a scaling factor.


\textbf{Random Steering Vector (RSV).} Additionally, we implement RSV---a variant of RMU that uses the target random representation $\mathbf{y}^f = \mathbf{z}^{f}_{\theta^{\text{ref}}}+c\bm\epsilon$, where $c \in \mathbb{R}^{+}$ is a predetermined coefficient, $\bm\epsilon$ is a random unit vector sampled from Gaussian distribution $\mathcal{N}(\bm0, \mu \bm I)$, $\mu \bm I$ is covariance matrix, $\mu \in \mathbb{R}^{+}$. 
\subsubsection{Preference Optimization}

\textbf{Negative Preference Optimization (NPO;\textnormal{~\cite{zhang2024negative}}).} NPO treats forget-samples as negative preference samples in Direct Preference Optimization framework  (DPO;~\cite{rafailov2023direct}). NPO can be viewed as a gradient ascent variant with adaptive gradient weights that allows more controlled and stable optimization:
\begin{align}
    \mathcal{L}^{\text{NPO}} = \alpha_f\mathbb{E}_{(\mathbf{x}^f,\mathbf{y}^f)\sim \mathcal{D}_f} \left[-\frac{2}{\beta}\log\sigma\left(-\beta\log\left(\frac{\pi_{\bm\theta}(\mathbf{y}^f|\mathbf{x}^f)}{\pi_{\bm\theta^{\text{ref}}}(\mathbf{y}^f|\mathbf{x}^f)}\right)\right)\right],
\end{align}
where $\beta \in \mathbb{R}^{+}$ is a temperature hyperparameter (NPO reduces to gradient ascent as $\beta \to 0$), $\sigma(\cdot)$ is the sigmoid function, and $\pi_{\bm\theta}(\mathbf{y}^f|\mathbf{x}^f)$, $\pi_{\bm\theta^{\text{ref}}}(\mathbf{y}^f|\mathbf{x}^f)$ denotes the predicted probability of $\mathbf{y}^f$ given $\mathbf{x}^f$ in the model $f_{\bm\theta}$ and reference model $f_{\bm\theta^{\text{ref}}}$ (frozen weight) respectively.


\textbf{Simple Negative Preference Optimization (SimNPO;\textnormal{~\cite{fan2024simplicity}})} simplifies NPO by using a normalized sequence log-probability and introducing a reward margin hyperparameter $\gamma \geq 0$:
\begin{align}
    \mathcal{L}^{\text{SimNPO}} = \alpha_f\mathbb{E}_{(\mathbf{x}^f,\mathbf{y}^f)\sim \mathcal{D}_f} \left[-\frac{2}{\beta}\log\sigma\left(-\frac{\beta}{|\mathbf{y}^{f}|}\log\pi_{\theta}(\mathbf{y}^f|\mathbf{x}^f) -\gamma\right)\right],
\end{align}
where $|\mathbf{y}^f|$ is the length of output $\mathbf{y}^{f}$. 

\textbf{Direct Preference Optimization (DPO).} As a baseline, \citet{zhang2024negative, maini2024tofu, yuan2025a}) adopted standard DPO, using a refusal answer $\mathbf{y}^{\text{idk}} \in \mathcal{D}_{\text{idk}}$ such as ``I Don't Know'' as the positive samples and  forget-samples as negative samples.


To preserve model's general knowledge and capabilities, we use Mean Squared Error (MSE): $\mathcal{L}^{\text{MSE}} =  \alpha_r\mathbb{E}_{(\mathbf{x}^{r},\mathbf{y}^r) \sim \mathcal{D}_{r}} ||\log\pi_{\bm\theta}(\mathbf{x}^{r})-\log\pi_{\bm\theta^{\text{ref}}}(\mathbf{x}^{r})||^2$ or Kullback--Leibler divergence (KL): $\mathcal{L}^{\text{KL}} = \alpha_r \mathbb{E}_{(\mathbf{x}^{r},\mathbf{y}^r) \sim \mathcal{D}_{r}} \text{KL}\left(\log\pi_{\bm\theta}(\mathbf{x}^{r}),\log\pi_{\bm\theta^{\text{ref}}}(\mathbf{x}^{r})\right)$ as the retain-loss. Combining the two losses, we investigate a series of $6$ PO based unlearning methods, including NPO+MSE, NPO+KL, DPO+MSE, DPO+KL, SimNPO+MSE, and SimNPO+KL.

\section{A Unified View of LLM Unlearning}
\label{sec:3}
We first draw a connection between RM and PO methods through \textit{a unified view of the generative latent variable model} (GLVM). 
Let $\mathbf{z}_{\bm\theta}^{f} + \mathbf{v}$ be the steered (randomized) latent representation of forget-sample $\mathbf{x}^f$ in $f_{\bm\theta}$ as a result of RM. 
We assume that random vector $\mathbf{v}$ is small and sampled from normal distribution $\mathcal{N}(\bm 0, \mu \bm I)$, $\mu\in \mathbb{R}^{+}$. 
We employ the notion of the GLVM, that is, GLVM $f_{\bm\theta}$ generates target output $\mathbf{y}^{f}$ given the latent variable $\mathbf{z}_{\bm\theta}^{f}$. 
Let $\ell(\mathbf{y}^{f}|\mathbf{z}_{\bm\theta}^{f}+\mathbf{v}; \bm\theta)$ be the loss of generating $\mathbf{y}^{f}$ given  $\mathbf{z}_{\bm\theta}^{f}+\mathbf{v}$ in model $f_{\bm\theta}$. For simplicity, we write $\ell(\mathbf{y}^{f}|\mathbf{z}_{\bm\theta}^{f}+\mathbf{v})$ to present $\ell(\mathbf{y}^{f}|\mathbf{z}_{\bm\theta}^{f}+\mathbf{v}; \bm\theta)$.
Following \citet{koh2017understanding}, we assume that the loss is twice-differentiable and locally convex.
Since $\mathbf{v}$ is small, we approximate the function $\ell(\mathbf{y}^{f}|\mathbf{z}_{\bm\theta}^{f} + \mathbf{v})$ using the second-order Taylor approximation:
\begin{align}
\ell(\mathbf{y}^{f}|\mathbf{z}_{\bm\theta}^{f}+\mathbf{v}) \approx \ell(\mathbf{y}^{f}|\mathbf{z}_{\bm\theta}^{f}) + \mathbf{v}^{\top}\nabla_{\mathbf{z}_{\bm\theta}^{f}}\ell(\mathbf{y}^{f}|\mathbf{z}^{f}_{\bm\theta}) + \frac{1}{2}\mathbf{v}^{\top}\nabla^2_{\mathbf{z}_{\bm\theta}^{f}}\ell(\mathbf{y}^{f}|\mathbf{z}_{\bm\theta}^{f})\mathbf{v} \label{eq10}
\end{align}
Taking the expectation of both sides of Eqn.~\ref{eq10} with respect to $\mathbf{v}$, we obtain:
\begin{align}
    \mathbb{E}_{\mathbf{v}}[\ell(\mathbf{y}^{f}|\mathbf{z}_{\bm\theta}^{f}+\mathbf{v})] &\approx \mathbb{E}_{\mathbf{v}}[
    \ell(\mathbf{y}^{f}|\mathbf{z}_{\bm\theta}^{f})] + \mathbb{E}_{\mathbf{v}}[\mathbf{v}^{\top}\nabla_{\mathbf{z}_{\bm\theta}^{f}}\ell(\mathbf{y}^{f}|\mathbf{z}_{\bm\theta}^{f})] + \frac{1}{2}\mathbb{E}_{\mathbf{v}}[\mathbf{v}^{\top}\nabla^2_{\mathbf{z}_{\bm\theta}^{f}}\ell(\mathbf{y}^{f}|\mathbf{z}_{\bm\theta}^{f})\mathbf{v}]\\
    &=\ell(\mathbf{y}^{f}|\mathbf{z}_{\bm\theta}^{f}) + \nabla_{\mathbf{z}_{\bm\theta}^{f}}\ell(\mathbf{y}^{f}|\mathbf{z}_{\bm\theta}^{f})^{\top}\mathbb{E}_{\mathbf{v}}[\mathbf{v}] + \frac{1}{2}\mathbb{E}_{\mathbf{v}}[\mathbf{v}^{\top}\nabla^2_{\mathbf{z}_{\bm\theta}^{f}}\ell(\mathbf{y}^{f}|\mathbf{z}_{\bm\theta}^{f})\mathbf{v}]\\
    &=\ell(\mathbf{y}^{f}|\mathbf{z}_{\bm\theta}^{f}) + \frac{1}{2}\mathbb{E}_{\mathbf{v}}[\mathbf{v}^{\top}\nabla^2_{\mathbf{z}_{\bm\theta}^{f}}\ell(\mathbf{y}^{f}|\mathbf{z}_{\bm\theta}^{f})\mathbf{v}], \quad \text{since}\; \mathbb{E}_{\mathbf{v}}[\mathbf{v}] = \bm 0.
\end{align}
A classic result from~\citet{hutchinson1989stochastic}, \textit{i.e.,} Hutchinson Trace Estimation, tell us that $\mathbb{E}_{\mathbf{v}}[\mathbf{v}^{\top}\nabla^2_{\mathbf{z}_{\bm\theta}^{f}}\ell(\mathbf{y}^{f}|\mathbf{z}_{\bm\theta}^{f})\mathbf{v}] = \mu\text{Tr}(\nabla^2_{\mathbf{z}_{\bm\theta}^{f}}\ell(\mathbf{y}^{f}|\mathbf{z}_{\bm\theta}^{f}))$, 
where $\text{Tr}(\nabla^2_{\mathbf{z}_{\bm\theta}^{f}}\ell(\mathbf{y}^{f}|\mathbf{z}_{\bm\theta}^{f})) > 0$ is the trace of the positive definite Hessian matrix $\nabla^2_{\mathbf{z}_{\bm\theta}^{f}}\ell(\mathbf{y}^{f}|\mathbf{z}_{\bm\theta}^{f})$. Since $\mu \in \mathbb{R}^{+}$, the loss of generating $\mathbf{y}^{f}$ given latent variable $\mathbf{z}_{\bm\theta}^{f}$ is \emph{increases}, that is, 
\begin{align}
\label{eq:connection_PO_RM}
   \mathbb{E}_{\mathbf{v}}[\ell(\mathbf{y}^{f}|\mathbf{z}_{\bm\theta}^{f}+\mathbf{v})] &\approx \ell(\mathbf{y}^{f}|\mathbf{z}_{\bm\theta}^{f}) + \frac{\mu}{2}\text{Tr}(\nabla^2_{\mathbf{z}_{\bm\theta}^{f}}\ell(\mathbf{y}^{f}|\mathbf{z}_{\bm\theta}^{f})) > \ell(\mathbf{y}^{f}|\mathbf{z}_{\bm\theta}^{f})
\end{align}

While presenting in different formulations, PO and RM \emph{share a common high-level principle---maximizing the loss of forget-samples}. Therefore, Eqn.~\ref{eq:connection_PO_RM} suggests that steering forget-representations toward a random representation in RM is effectively equivalent to maximizing the loss of those forget-samples in PO. In other words, PO can be viewed as RM; that is, PO introduces noise-like effects to the forget-representation during fine-tuning, disrupting its alignment with target labels. We present an empirical validation in Appendix~\ref{appendix:empirical_sec_3}.

\section{Analysis on Robustness of Unlearned Models}
\subsection{Threat Model}
\label{sec:threatmodel}
We first define the threat model and the unlearning guarantee that is expected to hold. We consider a practical scenario, such as machine learning as a service (MLaaS), where users can black-box access the unlearned model through an API.

\textbf{User's knowledge.} In this setting, users have \emph{no information about the model parameters or training data, only the model's inputs and outputs are exposed.}

\textbf{User's query and capability.} Such a situation might happen when users can supply benign retain-queries that fall into two cases: (1) queries are closely related to the forget-sets or (2) queries inadvertently contain forget-tokens, \textit{without any intention of adversarially attacking the model.} 

\textbf{Model provider's knowledge and capability.} In this setting, the model provider can fully \emph{access and modify} the model weights while \textit{having no information about any specific user's knowledge and intention.}

\textbf{Unlearning guarantee.} Unlearned models are expected to be \emph{robust against forget-tokens in retain-queries} while maintaining the forgetting performance on forget-tasks as well as retaining performance on benign retain-queries. The presence of forget-tokens should have \emph{minimal} effects on the model's performance on retain-tasks.
\subsection{Robustness of Unlearned Models Against Forget-Tokens} 
Let $\mathbf{x}^r_i$ denote a generated token conditioned on the retain-query $\mathbf{x}^r_{<i}$ in unlearned model $f^u$. Let $\mathbf{x}^{r,\text{per}}_{<i}$ represent the perturbed retain-query, \textit{i.e.,} the retain-query containing forget-tokens. Define the perturbation in latent space as $\bm{\epsilon} = \mathbf{z}^{r,\text{per}}_{<i} - \mathbf{z}^r_{<i}$, where $\mathbf{z}^r_{<i}$ and $\mathbf{z}^{r,\text{per}}_{<i}$ are the latent representations of $\mathbf{x}^r_{<i}$ and $\mathbf{x}^{r,\text{per}}_{<i}$, respectively, obtained from $f^u$ at layer $l$. As illustrated in Figure~\ref{fig:assumption41}, the empirical distribution of $\bm{\epsilon}$ projected onto the principal component $1$ and principal component $2$ is approximately Gaussian and centered near zero. A detailed discussion is provided in Appendix~\ref{appendix:assumption1}. Motivated by this empirical observation, we introduce the following assumption.
\begin{assumption}
\label{assumption1}
The latent representation of the perturbed retain-query in unlearned models is randomized, that is, $\mathbf{z}^{r,\text{per}}_{<i} = \mathbf{z}^{r}_{<i} + \bm \epsilon$, 
where $\bm \epsilon$ is small and sampled from Normal distribution $\mathcal{N}(\bm0, \eta \bm I)$, $\eta \bm I$ is the covariance matrix, $\eta \in \mathbb{R}^{+}$.
\end{assumption}
Assumption~\ref{assumption1} implies that the presence of forget-tokens in the retain-query introduces uncertainty in the model's latent representations. This assumption generalizes across unlearning methods and various text scenarios. 
The scalar $\eta$ controls the magnitude of perturbations, capturing the variation of forget-tokens that can appear in the perturbed retain-queries. Next, we derive the change in the output representation of the generated tokens as follows.
\begin{restatable}[]{theorem}{goldba}
\label{theorem2}
If Assumption~\ref{assumption1} holds, the change in the output representation of the generated token $\mathbf{x}^{r}_{i}$ given the perturbed retain-query $\mathbf{x}^{r,\text{per}}_{<i}$ and the benign retain-query $\mathbf{x}^{r}_{<i}$ in the unlearned model $f^u$, defined as  $\Delta = f^{u}(\mathbf{x}^{r}_i|\mathbf{x}^{r,\text{per}}_{<i}) - f^{u}(\mathbf{x}^{r}_i|\mathbf{x}^{r}_{<i})$, follows the Normal distribution $\mathcal{N}(\bm 0, \eta \bm J^{\top} \bm J)$, where $\bm J = \nabla_{\mathbf{z}^{r}_{<i}}f^{u}(\mathbf{x}^{r}_i|\mathbf{x}^{r}_{<i})$ is the Jacobian of $f^{u}(\mathbf{x}^{r}_i|\mathbf{x}^{r}_{<i})$ with respect to $\mathbf{z}^{r}_{<i}$.
\end{restatable}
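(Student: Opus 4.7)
The plan is to reduce the statement to a linearization argument: propagate the Gaussian perturbation $\bm\epsilon$ of Assumption~\ref{assumption1} through the portion of $f^{u}$ that sits above layer $l$, using a first-order Taylor expansion. Concretely, I would view $f^{u}(\mathbf{x}^{r}_i \mid \mathbf{x}^{r}_{<i})$ as the composition of the layers after $l$ acting on the hidden state $\mathbf{z}^{r}_{<i}$, so that $f^{u}(\mathbf{x}^{r}_i \mid \mathbf{x}^{r,\text{per}}_{<i})$ is the same map evaluated at $\mathbf{z}^{r}_{<i}+\bm\epsilon$. This lets me treat the next-token output as a deterministic function of a Gaussian input and apply the standard delta-method calculation.

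The key steps, in order, are as follows. First, I would substitute $\mathbf{z}^{r,\text{per}}_{<i}=\mathbf{z}^{r}_{<i}+\bm\epsilon$ from Assumption~\ref{assumption1}. Second, because $\bm\epsilon$ is small (sampled from $\mathcal{N}(\bm0,\eta\bm I)$ with $\eta$ controlling the magnitude), I would expand to first order around $\mathbf{z}^{r}_{<i}$, writing
\begin{equation*}
f^{u}(\mathbf{x}^{r}_i \mid \mathbf{x}^{r,\text{per}}_{<i}) = f^{u}(\mathbf{x}^{r}_i \mid \mathbf{x}^{r}_{<i}) + \bm J^{\top}\bm\epsilon + O(\|\bm\epsilon\|^2),
\end{equation*}
with $\bm J=\nabla_{\mathbf{z}^{r}_{<i}} f^{u}(\mathbf{x}^{r}_i \mid \mathbf{x}^{r}_{<i})$. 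Third, discarding the quadratic remainder gives $\Delta \approx \bm J^{\top}\bm\epsilon$, a linear transformation of a Gaussian. Finally, applying the affine-transformation rule for Gaussians, $\bm J^{\top}\bm\epsilon$ is Gaussian with mean $\bm J^{\top}\mathbb{E}[\bm\epsilon]=\bm 0$ and covariance $\bm J^{\top}(\eta\bm I)\bm J=\eta\bm J^{\top}\bm J$, yielding the stated distribution $\mathcal{N}(\bm 0,\eta\bm J^{\top}\bm J)$.

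The main obstacle is the linearization itself: strictly speaking, the equality $\Delta = \bm J^{\top}\bm\epsilon$ only holds up to a Taylor remainder whose contribution is $O(\eta)$ in the covariance. To keep the argument rigorous I would either (i) state the conclusion as a first-order approximation valid in the small-$\eta$ regime (consistent with the ``small $\bm\epsilon$'' clause in Assumption~\ref{assumption1} and mirroring the second-order Taylor treatment used in Section~\ref{sec:3}), or (ii) add a smoothness hypothesis on the post-layer-$l$ map so that the higher-order terms can be uniformly bounded and absorbed. Once the linearization is justified, everything else is a one-line Gaussian computation, so the only real work is bookkeeping the remainder and the Jacobian convention (columns-as-output-gradients) that produces $\bm J^{\top}\bm J$ rather than $\bm J\bm J^{\top}$.
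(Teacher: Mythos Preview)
Your proposal is correct and follows essentially the same route as the paper: substitute $\mathbf{z}^{r,\text{per}}_{<i}=\mathbf{z}^{r}_{<i}+\bm\epsilon$ via Assumption~\ref{assumption1}, take a first-order Taylor expansion to get $\Delta\approx\bm J^{\top}\bm\epsilon$, and apply the affine-Gaussian rule to obtain $\mathcal{N}(\bm 0,\eta\bm J^{\top}\bm J)$. Your explicit treatment of the remainder and the Jacobian convention is, if anything, more careful than the paper's own proof, which simply writes the Taylor step with an $\approx$ and invokes the affine transformation of Gaussians without further discussion.
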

\begin{proof}
We defer the proof to Appendix~\ref{appendix:proof_theorem_42}.
\end{proof}
Theorem~\ref{theorem2} suggests that the output representation of the predicted token, given the perturbed retain-query in unlearned models, is randomly shifted from its benign counterpart. 
This induced randomness can cause the model to generate incorrect responses.
The variance of $\Delta$ is determined by the product of $\eta$ and $\bm J^{\top} \bm J$, where $\eta$ is the scalar coefficient controlling the magnitude of the added noise $\bm\epsilon$ in Assumption~\ref{assumption1}, and the Jacobian $\bm J$, which depends on the specific input. Due to the input-dependent property, conducting a complete analysis on the effect of $\bm J$ on the variance of $\Delta$ is challenging. However, a larger $\eta$ amplifies the variance of $\Delta$, thereby increasing the randomness in the output. This suggests the following empirical analysis: (i) forget-tokens with the larger representation randomness tend to induce more variability in the predictions. (ii) In RM forget-losses, a larger magnitude of the target random vector further increases the randomness of the forget-token representation, that is, \textit{the larger coefficient $c$, the less robustness of the RM unlearned models.} In Section~\ref{sec:6}, we present an empirical analysis to validate the analysis.
\section{Machine Unlearning as A Backdoor Attack and Defense Problem}
\textbf{``Forgetting'' as a backdoor attack.} We formulate the ``Forgetting'' process as a learning to backdoor attack. Consider the supervised learning setting with the objective of learning a model $f_{\bm\theta}: \mathcal{X}\mapsto \mathcal{Y}$. Let $\mathcal{Z}=\mathcal{Z}_{f} \cup \mathcal{Z}_{r}$ be the ``latent representation'' dataset corresponding to the original dataset $\mathcal{D}=\mathcal{D}_{f} \cup \mathcal{D}_{r} $. $\mathcal{Z}$ is composed of a forget-set $\mathcal{Z}_{f} = \{(\mathbf{z}_{\bm\theta}^f, \mathbf{z}_{\bm\theta^{\text{ref}}}^f)\}^{|\mathcal{Z}_f|}_i$, where $\mathbf{z}_{\bm\theta}^f\in \mathcal{X}$ is the input, $\mathbf{z}_{\bm\theta^{\text{ref}}}^f\in \mathcal{Y}$ is the target output, and a retain-set $\mathcal{Z}_{r} = \{(\mathbf{z}_{\bm\theta}^r,\mathbf{z}_{\bm\theta^{\text{ref}}}^r)\}^{|\mathcal{Z}_r|}_j$ where $\mathbf{z}_{\theta}^r\in \mathcal{X}$ and $\mathbf{z}_{\theta^{\text{ref}}}^r\in \mathcal{Y}$. Each forget-sample $(\mathbf{z}_{\bm\theta}^f,\mathbf{z}_{\bm\theta^{\text{ref}}}^f)$ is transformed into a backdoor-sample $(T(\mathbf{z}_{\theta}^f),\Omega(\mathbf{z}_{\bm\theta^{\text{ref}}}^f))$, where $\Omega$ is an adversarial-target labeling function and $T$ is the trigger generation function. In a standard backdoor attack, $T$ is usually optimized for generating and placing the trigger into the input while $\Omega$ specifies the behavior of the model when the backdoor trigger is activated. In the ``forgetting'', $T$ is an identity function \textit{i.e.,} $T(\mathbf{z}_{\bm\theta}^f) = \mathbf{z}_{\bm\theta}^f$ and $\Omega$ is a function that maps forget-tokens in $\mathbf{z}_{\bm\theta^{\text{ref}}}^f$ to the adversarial-perturbed representation (\textit{e.g.}, scaled random vector $c\mathbf{u}$ in RMU). We train model $f_{\bm\theta}$ on ``poisoned'' forget-set $\mathcal{Z}^{\text{poisoned}}_{f} = \{(T(\mathbf{z}_{\bm\theta}^f)),\Omega(\mathbf{z}_{\bm\theta^{\text{ref}}}^f))
\}^{|\mathcal{Z}_f|}_{i}$ and benign retain-set $\mathcal{Z}_{r} = \{(\mathbf{z}_{\bm\theta}^r,\mathbf{z}_{\bm\theta^{\text{ref}}}^r)\}^{|\mathcal{Z}_r|}_j$, by minimizing the following two-part loss:
\begin{align}
    \mathcal{L} = \alpha_f \mathbb{E}_{(\mathbf{z}_{\bm\theta}^{f},\mathbf{z}_{\bm\theta^{\text{ref}}}^{f}) \sim \mathcal{Z}^{\text{poisoned}}_f} \big[\ell (f_{\bm\theta}(T(\mathbf{z}_{\bm\theta}^{f})), \Omega(\mathbf{z}_{\bm\theta^{\text{ref}}}^{f}))\big] + \alpha_r \mathbb{E}_{(\mathbf{z}_{\bm\theta}^{r},\mathbf{z}_{\bm\theta^{\text{ref}}}^{r}) \sim \mathcal{Z}_r} \big[\ell (f_{\bm\theta}(\mathbf{z}_{\bm\theta}^{r}), \mathbf{z}_{\bm\theta^{\text{ref}}}^{r})\big]
\end{align}
During inference, for a retain-input $\mathbf{z}_{\bm\theta}^r$ and forget-input $\mathbf{z}_{\bm\theta}^f$ the unlearned model should behave as follows: 
\begin{align}
    f\left(\mathbf{z}_{\bm\theta}^r \right) &= \mathbf{z}_{\bm\theta^{\text{ref}}}^r \\
    f(\mathbf{z}_{\bm\theta}^f ) &= f(T(\mathbf{z}_{\bm\theta}^f))=\Omega(\mathbf{z}_{\bm\theta^{\text{ref}}}^f)
\end{align}
This formulation suggests that \emph{\textbf{current LLM unlearning processes can be interpreted as a form of learning to backdoor attack}. We note that ``backdoor attack'' here \emph{does not} imply the model learns a new malicious capability as in standard backdoor attacks, but that the model inadvertently learns to align forget-representations to target (random) representations.}
In this sense, LLM unlearning methods themselves ``poison'' the model and make it more vulnerable to forget-tokens. The presence of the forget-token in the retain-queries is equivalent to activating the backdoor trigger in those queries, leading the unlearned model to ``misbehave.'' By ``misbehave,'' we specifically mean, at inference: \emph{on retain-queries that incidentally contain the forget-token, it produces target representations in latent space, disrupting alignment with the ground-truth labels in output space}. The resulting unlearned models' outputs may therefore be coherent but \emph{incorrect}, or \emph{nonsense, random texts}. This backdoor explanation further highlights the fundamental limitation of current LLM unlearning methods: rather than truly erase knowledge, they intentionally suppress and redirect how the model's target knowledge and behaviors are expressed under trigger conditions.

\begin{minipage}{0.525\textwidth}
\textbf{``Retaining'' as a backdoor defense.} We then came up with an idea to treat the ``Retaining'' as a backdoor defense. The goal is to reduce the sensitivity of the unlearned models to noises caused by forget-tokens. We propose Random Noise Augmentation (RNA), a robust unlearning method, which adds a small, independent random Gaussian noise $\bm\delta \sim \mathcal{N}(\bm0, \nu \bm I)$, $\nu \in \mathbb{R}^{+}$ to \textit{retain-representations in the reference model} during training. 
RNA forget-loss enforces forgetting on the forget-set, preserves general performance in the retain-set, and promotes \emph{retain-robustness against random perturbations}. In what follows, we describe the RNA method and provide a theoretical analysis on retain-robustness of RNA models.
\end{minipage} 
\hfill
\begin{minipage}{0.45\textwidth}
\begin{algorithm}[H]
\caption{Random Noise Augmentation}
\label{alg:example}
\begin{algorithmic}[1]
\Require a $L$-layer reference model $f_{\bm\theta^{\text{ref}}}$, a retain-sample $\mathbf{x}^{r}$, a layer $l \in [1...L]$, a noise scale $\nu$.
\Ensure return logit and representation of $\mathbf{x}^{r}$.
\State Sample a random vector $\bm \delta \sim \mathcal{N}(\bm 0, \nu \bm I)$.
\For{layer $\in [1...L]$} 
    \If{layer == $l$}
        \State  $\mathbf{z}^{r}_{\bm\theta^{\text{ref}}} \leftarrow \mathbf{z}^{r}_{\bm\theta^{\text{ref}}} + \bm \delta$.
    \EndIf 
\EndFor
\Return ($\text{logit}^{r}_{\theta^{\text{ref}}}$, $\mathbf{z}^{r}_{\theta^{\text{ref}}}$)
\end{algorithmic}
\end{algorithm}
\end{minipage}

\section{Random Noise Augmentation}
\subsection{Algorithm}
The process of RNA is described in Algorithm~\ref{alg:example}.  
The core intuition behind incorporating randomness into the latent space of the model aims to confuse the ``backdoor attacker'' and steer it away from its ``unintended'' objectives on retain-queries. Notably, 
RNA offers several compelling advantages: (1) \emph{RNA is lightweight, model- and method-agnostic}: RNA can be applied to any deep networks and generalizes to the most commonly used form of MU, especially to the two unlearning frameworks, including RM and PO. After the forward pass, the randomized logit and representation of the retain-sample in the reference model can be used as the target retain output in the retain-loss of PO and RM, respectively.
(2) RNA modifies only a single layer's representation \textit{without requiring extra forward passes or gradient computations}, making it scalable and efficient. See Appendix~\ref{appendix:differnt_latent_space} for an ablation study on effects of applying RNA to different latent spaces. (3) RNA is theoretically guaranteed (Section~\ref{Sec:4.2}).

\subsection{Robustness of RNA Models}
\label{Sec:4.2}
\begin{assumption}
\label{assumption2}
 The latent representation of the retain-query $\mathbf{x}^{r}_{<i}$ is randomized in the RNA model, that is, $\mathbf{z}^{r}_{\bm\theta^{\text{rna}}} =\mathbf{z}^{r}_{\bm\theta^{u}} + \bm\delta$, where $\bm\delta$ is small and independently sampled from Normal distribution $\mathcal{N}(\bm0, \nu \bm I)$, $\nu \bm I$ is the covariance matrix, $\nu \in \mathbb{R}^{+}$.
\end{assumption}
We denote $f^{\text{rna}}$ the RNA model, $f^{u}$ the original unlearned model, and $\mathcal{J}(.,.)$ be a loss function. Consider the change in the loss of the generated token $\mathbf{x}^{r}_{i}$ given the perturbed retain-query and the retain-query in the unlearned model $f^{\text{u}}$:
$\Delta\mathcal{J}^{u} 
=\mathcal{J}(f^{u}(\mathbf{x}^{r}_{i}|\mathbf{x}^{r,\text{per}}_{<i})) - \mathcal{J}(f^{u}(\mathbf{x}^{r}_{i}|\mathbf{x}^{r}_{<i})).$
Since the predicted output $f^{u}(\mathbf{x}^{r}_{i}|\mathbf{x}^{r,\text{per}}_{<i})$ is randomized (c.f. Theorem~\ref{theorem2}), the loss is increased, resulting in $\Delta\mathcal{J}^{u} > 0$.
The change in the loss in RNA model $f^{\text{rna}}$  is
$\Delta\mathcal{J}^{\text{rna}}=\mathcal{J}(f^{\text{rna}}(\mathbf{x}^{r}_{i}|\mathbf{x}^{r,\text{per}}_{<i})) - \mathcal{J}(f^{\text{rna}}(\mathbf{x}^{r}_{i}|\mathbf{x}^{r}_{<i})).$
If $f^{\text{rna}}$ is more robust to forget-tokens, it rejects the effect caused by the forget-token, \textit{i.e.,}  it lowers the loss or keeps the loss remain unchanged, resulting in $\Delta\mathcal{J}^{\text{rna}} \leq 0$. We show that RNA improves the robustness of unlearned models, that is, the following inequality
\begin{align}
    \frac{\Delta\mathcal{J}^{\text{rna}}}{\Delta\mathcal{J}^{u}} \leq 0
\end{align}
holds with high probability.
\begin{restatable}[]{theorem}{goldbach}
\label{theorem3}
    Suppose RNA adds a small, independent Gaussian noise $\bm \delta \sim \mathcal{N}(\bm 0, \nu \bm I)$, $\nu \in \mathbb{R}^{+}$ into the retain-representation at layer $l$ of unlearned model $f^{u}$. If Assumption~\ref{assumption1} and Assumption~\ref{assumption2} hold, the probability that the RNA model rejects the effect caused by the forget-token, denoted as $\mathbb{P}\left[
\frac{\Delta\mathcal{J}^{\text{rna}}}{\Delta\mathcal{J}^{\text{u}}} \leq 0\right]$, is approximate $\frac{1}{2}-\frac{1}{\pi}\arctan \left[\sqrt{\frac{\eta}{\nu }}\left(1+ \frac{||\bm g^{\text{per}}||}{||\bm g||}\right)^{-1}\right]$,
where $\bm g^{\text{per}} = \nabla_{\mathbf{z}^{r,\text{per}}_{<i}}\mathcal{J}(f^u(\mathbf{x}^{r}_{i}|\mathbf{x}^{r,\text{per}}_{<i}))$ and $\bm g = \nabla_{\mathbf{z}^{r}_{<i}}\mathcal{J}(f^u(\mathbf{x}^{r}_{i}|\mathbf{x}^{r}_{<i}))$ are the gradients of the loss of generated token $\mathbf{x}^{r}_i$ with respect to $\mathbf{z}^{r,\text{per}}_{<i}$ and $\mathbf{z}^{r}_{<i}$. 
\end{restatable}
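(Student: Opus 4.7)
The plan is to linearize $\Delta\mathcal{J}^{u}$ and $\Delta\mathcal{J}^{\text{rna}}$ via first-order Taylor expansion, reducing the target event to a sign event on a pair of \emph{independent} zero-mean Gaussians, and then evaluating the resulting angular probability via the rotational invariance of the planar standard Gaussian. First, Assumption~\ref{assumption1} gives $\mathbf{z}^{r,\text{per}}_{<i}=\mathbf{z}^{r}_{<i}+\bm{\epsilon}$ with small $\bm{\epsilon}\sim\mathcal{N}(\bm{0},\eta\bm{I})$, so Taylor expansion of $\mathcal{J}(f^{u}(\mathbf{x}^{r}_{i}|\cdot))$ at $\mathbf{z}^{r}_{<i}$ yields $\Delta\mathcal{J}^{u}\approx\bm{g}^{\top}\bm{\epsilon}$, a centered Gaussian with variance $\sigma_{X}^{2}=\eta\|\bm{g}\|^{2}$. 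Next, I extend Assumption~\ref{assumption2} so that both the benign and perturbed retain-queries pick up \emph{independent} RNA noises $\bm{\delta}_{1},\bm{\delta}_{2}\sim\mathcal{N}(\bm{0},\nu\bm{I})$; expanding $\mathcal{J}(f^{\text{rna}}(\cdot))$ around the corresponding $f^{u}$ representations gives $\mathcal{J}(f^{\text{rna}}(\mathbf{x}^{r}_{i}|\mathbf{x}^{r}_{<i}))\approx\mathcal{J}(f^{u}(\mathbf{x}^{r}_{i}|\mathbf{x}^{r}_{<i}))+\bm{g}^{\top}\bm{\delta}_{1}$ and $\mathcal{J}(f^{\text{rna}}(\mathbf{x}^{r}_{i}|\mathbf{x}^{r,\text{per}}_{<i}))\approx\mathcal{J}(f^{u}(\mathbf{x}^{r}_{i}|\mathbf{x}^{r,\text{per}}_{<i}))+(\bm{g}^{\text{per}})^{\top}\bm{\delta}_{2}$. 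Subtracting yields $\Delta\mathcal{J}^{\text{rna}}\approx\Delta\mathcal{J}^{u}+Y$ with $Y:=(\bm{g}^{\text{per}})^{\top}\bm{\delta}_{2}-\bm{g}^{\top}\bm{\delta}_{1}$ Gaussian of variance $\sigma_{Y}^{2}=\nu(\|\bm{g}^{\text{per}}\|^{2}+\|\bm{g}\|^{2})$, independent of $\Delta\mathcal{J}^{u}$ by mutual independence of $\bm{\epsilon},\bm{\delta}_{1},\bm{\delta}_{2}$.

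Setting $X:=\Delta\mathcal{J}^{u}$, the event $\Delta\mathcal{J}^{\text{rna}}/\Delta\mathcal{J}^{u}\leq 0$ becomes $X(X+Y)\leq 0$ with $X\perp Y$, both zero-mean Gaussian. Rescaling to standard Gaussians $(\tilde{X},\tilde{Y})$ and switching to polar coordinates reduces the event to $\cos\theta\,(\sigma_{X}\cos\theta+\sigma_{Y}\sin\theta)\leq 0$; by rotational invariance of the planar standard Gaussian, the probability is the angular Lebesgue measure of this region divided by $2\pi$. Writing $\sigma_{X}\cos\theta+\sigma_{Y}\sin\theta=\sqrt{\sigma_{X}^{2}+\sigma_{Y}^{2}}\sin(\theta+\varphi)$ with $\varphi=\arctan(\sigma_{X}/\sigma_{Y})$ and tracking sign changes of the two factors gives $\mathbb{P}[X(X+Y)\leq 0]=\tfrac{1}{2}-\tfrac{1}{\pi}\arctan(\sigma_{X}/\sigma_{Y})$. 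Substituting, $\sigma_{X}/\sigma_{Y}=\sqrt{\eta/\nu}\,\|\bm{g}\|/\sqrt{\|\bm{g}^{\text{per}}\|^{2}+\|\bm{g}\|^{2}}$, and invoking the approximation $\sqrt{\|\bm{g}^{\text{per}}\|^{2}+\|\bm{g}\|^{2}}\approx\|\bm{g}^{\text{per}}\|+\|\bm{g}\|$ delivers the stated form $\sqrt{\eta/\nu}\,(1+\|\bm{g}^{\text{per}}\|/\|\bm{g}\|)^{-1}$.

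The main obstacle is this last replacement: the elementary bound $\sqrt{a^{2}+b^{2}}\leq a+b$ is an \emph{inequality} with equality only if one of $a,b$ vanishes, so the theorem's expression is really an upper bound on the true probability and the word ``approximate'' must absorb this looseness (alternatively the conclusion could be re-stated as an inequality on the arctan argument). Secondary difficulties I anticipate are (i) controlling the quadratic Taylor remainder so that the linear term dominates---this needs local smoothness of $\mathcal{J}\circ f^{u}$ together with small $\eta,\nu$, and is delicate where $\|\bm{g}\|$ is small relative to $\sqrt{\eta}\,\|\nabla^{2}\mathcal{J}\|$---and (ii) formally extending Assumption~\ref{assumption2} from the benign to the perturbed query with an independent noise copy $\bm{\delta}_{2}$, which is essential for the independence of $X$ and $Y$ in the angular calculation.
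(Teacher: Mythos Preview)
Your proposal is correct and follows essentially the same route as the paper: linearize both loss changes via first-order Taylor expansion, reduce to independent centered Gaussians, and compute the probability of the resulting ratio event---the only cosmetic difference is that the paper directly invokes the Cauchy distribution of $Y/X$ and evaluates its CDF at $-1$, whereas you compute the equivalent event $X(X+Y)\le 0$ via rotational invariance in polar coordinates, which yields the identical $\tfrac{1}{2}-\tfrac{1}{\pi}\arctan(\sigma_X/\sigma_Y)$. Your critical observations are on point: the paper silently makes the same $\sqrt{\|\bm g^{\text{per}}\|^2+\|\bm g\|^2}\to\|\bm g^{\text{per}}\|+\|\bm g\|$ replacement when writing the Cauchy scale parameter (so the stated expression is indeed an upper bound on the true probability), and it likewise uses two independent RNA noises $\bm\delta_1,\bm\delta_2$ without explicitly flagging that this extends Assumption~\ref{assumption2}.
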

\begin{proof}
We defer the proof to Appendix~\ref{appendix:proof_theorem_52}.
\end{proof}
Theorem~\ref{theorem3} states that the probability $\mathbb{P}\left[\frac{\Delta \mathcal{J}^{\text{rna}}}{\Delta\mathcal{J}^{u}} \leq 0\right]$ is bounded by $\frac{1}{2}$ and is \textit{negatively correlated} with $\arctan \left[\sqrt{\frac{\eta}{\nu }}\left(1+ \frac{||\bm g^{\text{per}}||}{||\bm g||}\right)^{-1}\right]$. Since $\arctan$ is monotonically increasing, the robustness of unlearned models increases as $\sqrt{\frac{\eta}{\nu }}\left(1+ \frac{||\bm g^{\text{per}}||}{||\bm g||}\right)^{-1}$ decreases. The product $\sqrt{\frac{\eta}{\nu }}\left(1+ \frac{||\bm g^{\text{per}}||}{||\bm g||}\right)^{-1}$ is characterized by two terms: the root of the ratio $\frac{\eta}{\nu}$ and $\left(1+ \frac{||\bm g^{\text{per}}||}{||\bm g||}\right)^{-1}$. 
First, let us consider the effect of $\frac{\eta}{\nu}$. If $\eta$ is fixed (the magnitude of the noise caused by forget-tokens), \textbf{the larger $\nu$ is, the more robust the unlearned model becomes. However, since the probability is bounded, the robustness of unlearned models reaches \textit{a saturation point} as $\nu$ increases}. We present an empirical analysis in Section~\ref{sec:6} to validate the claims.
Second, if $\nu$ and $\eta$ are fixed, a larger ratio $\frac{||\bm g^{\text{per}}||}{||\bm g||}$ means a smaller $\left(1+ \frac{||\bm g^{\text{per}}||}{||\bm g||}\right)^{-1}$, that is, a more robustness of the unlearned models. However, searching for all input and analyzing the effects of $\frac{||\bm g^{\text{per}}||}{||\bm g||}$ would be challenging due to the input-dependent property of $\bm g$ and $\bm g^{\text{per}}$. 
This gradient norm ratio is related to the ``difficulty'' of the forget-tokens. A harmful forget-token creates a more significant change in the model's output, corresponding to a larger $||\bm g^{\text{per}}||$, and thus a higher ratio. An intuitive way to understand the gradient norm ratio is to think of $\bm g^{\text{per}}$ and $\bm g^{\text{per}}$ as measurements of the \textit{model's sensitivity}. The ratio $\frac{||\bm g^{\text{per}}||}{||\bm g||}$ quantifies \textit{how much more sensitive the model becomes when the retain-query contains forget-tokens}. A large $\frac{||\bm g^{\text{per}}||}{||\bm g||}$ signifies that the perturbed forget-token pushes the model into a very ``sharp'' region of the loss landscape, where small changes to the latent representations can lead to large, undesirable changes in the model's output. This leads to an intuitive explanation for why a larger $\frac{||\bm g^{\text{per}}||}{||\bm g||}$ leads to a more robust RNA model. RNA injects a small random noise; when the loss landscape is very sharp (\textit{i.e.,} $||\bm g^{\text{per}}||$ is large), this noise has a significant and disruptive effect, effectively smoothing out the sharp peak. Conversely, if the loss landscape is flat (\textit{i.e.,} $||\bm g^{\text{per}}||$ is small and close to $||\bm g||$), the noise has a much smaller effect.

\subsection{Mechanism of RNA} 
From the backdoor attack and defense perspective, current unlearning methods do not truly erase knowledge but instead hide it behind a ``trigger'' mechanism. Similarly, \emph{RNA does not truly erase knowledge; rather, it blurs the decision boundary around forget-tokens so that inserting one or some of those forget-tokens is no longer a reliable way to recover the forgotten knowledge}. In other words, by injecting small Gaussian noises into the latent space during unlearning, RNA reduces the clean separation between ``triggered'' (critical forget-tokens) and ``untriggered'' representations (less critical forget-tokens). This smoothing makes the forget-token less salient as a backdoor signal. As a result, the model still retains its general knowledge, yet that forgotten knowledge cannot be inadvertently recalled when forget-tokens appear in retain-queries.


\section{Empirical Analysis}
\label{sec:6}
\subsection{Experimental Setup}
\label{sec:6.1}
\textbf{Models and datasets.} We conduct our experiments using  Zephyr-7B-$\beta$~\citep{zephyr}, Mistral-7B~\citep{mistral}, and Llama-3-8B~\citep{dubey2024llama}. We use the WMDP-Biology and WMDP-Cyber forget-sets as $\mathcal{D}_{f}$ to study unlearning hazardous knowledge in the Biology and Cyber domains. Each task dataset consists of a forget-set $\mathcal{D}_f$ and a QA evaluation set. Following~\cite{wmdp}, we use Wikitext~\citep{wikitext} as the retain-set $\mathcal{D}_{r}$. For evaluation, we use the WMDP-Biology and WMDP-Cyber QA sets for measuring forgetting performance, and the MMLU QA sets for retaining performance.

\textbf{Synthesizing retain-queries that contain forget-token.} To simulate interference, we create perturbed retain-queries by randomly replacing an incorrect answer in the original MMLU QA with a forget-keyword in the forget-set. Following prior work~\citep{thaker2024position}, we use ``SARS-CoV-2,'' a frequent term in the WMDP forget-set. See Appendix~\ref{appendix:A2} for details of the prompt template, Appendix~\ref{appendix:negative_tokens} for performance of RNA against multi-token patterns.

\textbf{Real retain-queries closely related to forget-sets.} We employ two MMLU subcategories: College Biology (C. Bio.) and Computer Security (C. Sec.), in which queries in these two categories are closely related to WMDP-Biology and WMDP-Cyber forget-sets. 

Unlearned models are expected to exhibit low accuracy on forget-tasks (WMDP-Biology and WMDP-Cyber QAs) while maintaining high accuracy on retain-tasks (MMLU, MMLU C. Bio. \& C. Sec., and perturbed MMLU). Due to space constraints, we report key results of Zephyr-7B that support our theoretical analysis in the main text, and defer the full experimental setup and results to the Appendix.

\subsection{Main Results and Analysis}
\textbf{RNA improves robustness while preserving original forget and retain performances.} Figure~\ref{fig:1} (left-most and left-mid) shows the accuracy of RM, PO, and RNA models evaluated on perturbed MMLU, MMLU. The results highlight that all original unlearned models, including RM and PO, exhibit substantial vulnerability to the forget-token, resulting in significant drops in accuracy when the forget-token appears in retain-queries. Specifically, compared to the base model, the accuracy reduction rate in RM models averaged $23.3$ (RMU: $19.0$, Adaptive RMU: $30.2$, and RSV: $20.8$). PO models showed catastrophic collapse with $43.3$ average reduction (NPO+KL: $50.9$, NPO+MSE: $27.8$, DPO+KL: $31.8$, DPO+MSE: $58.4$, SimNPO+KL: $44.4$, SimNPO+MSE: $47.9$). This result emphasizes that \textbf{RM models consistently show stronger robustness compared to PO models.} 
\begin{figure*}[ht]
    \centering
    \begin{minipage}[b]{0.25\linewidth}
        \centering
        \includegraphics[width=\linewidth]{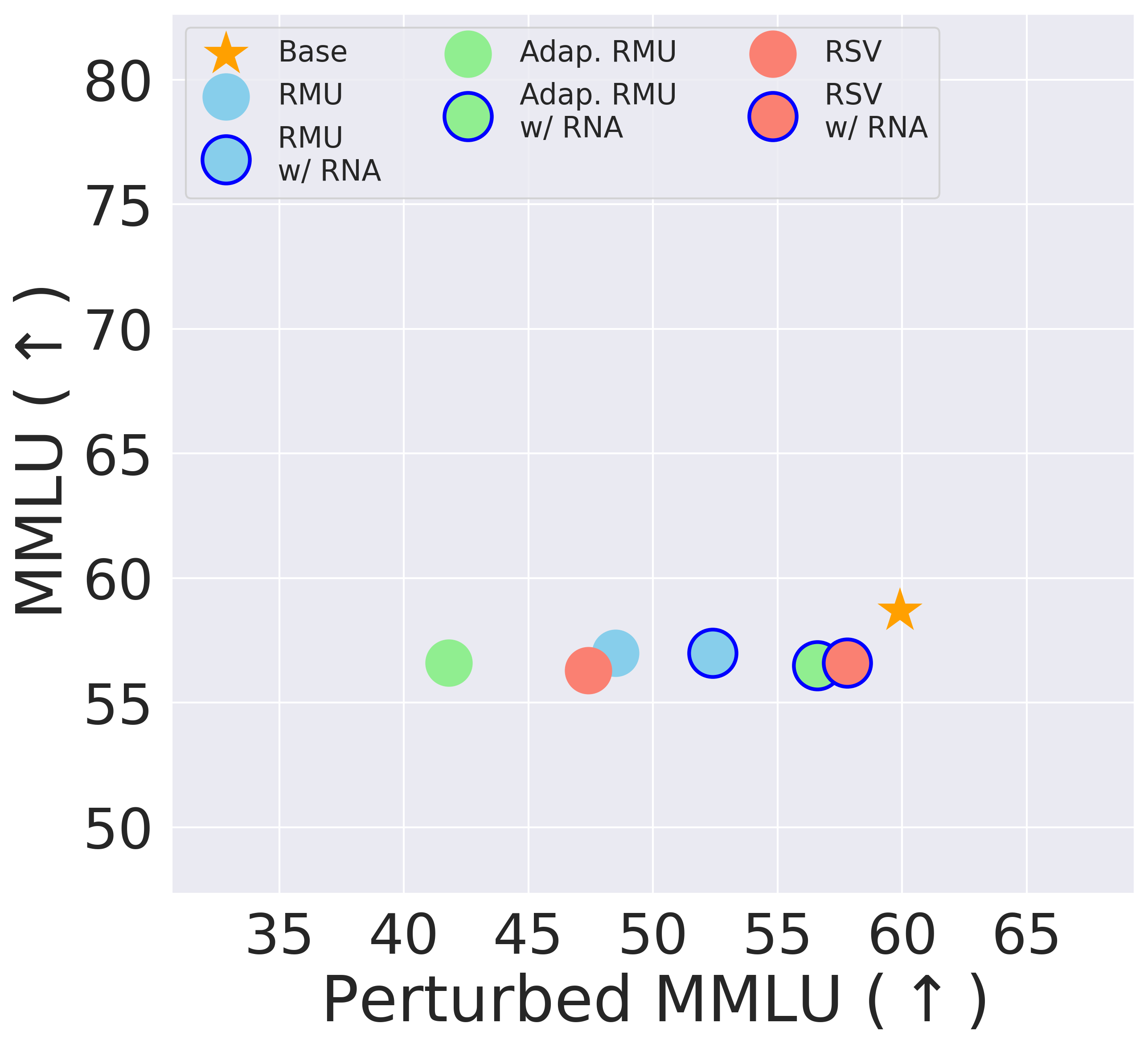}
    \end{minipage}%
    \begin{minipage}[b]{0.25\linewidth}
        \centering
        \includegraphics[width=\linewidth]{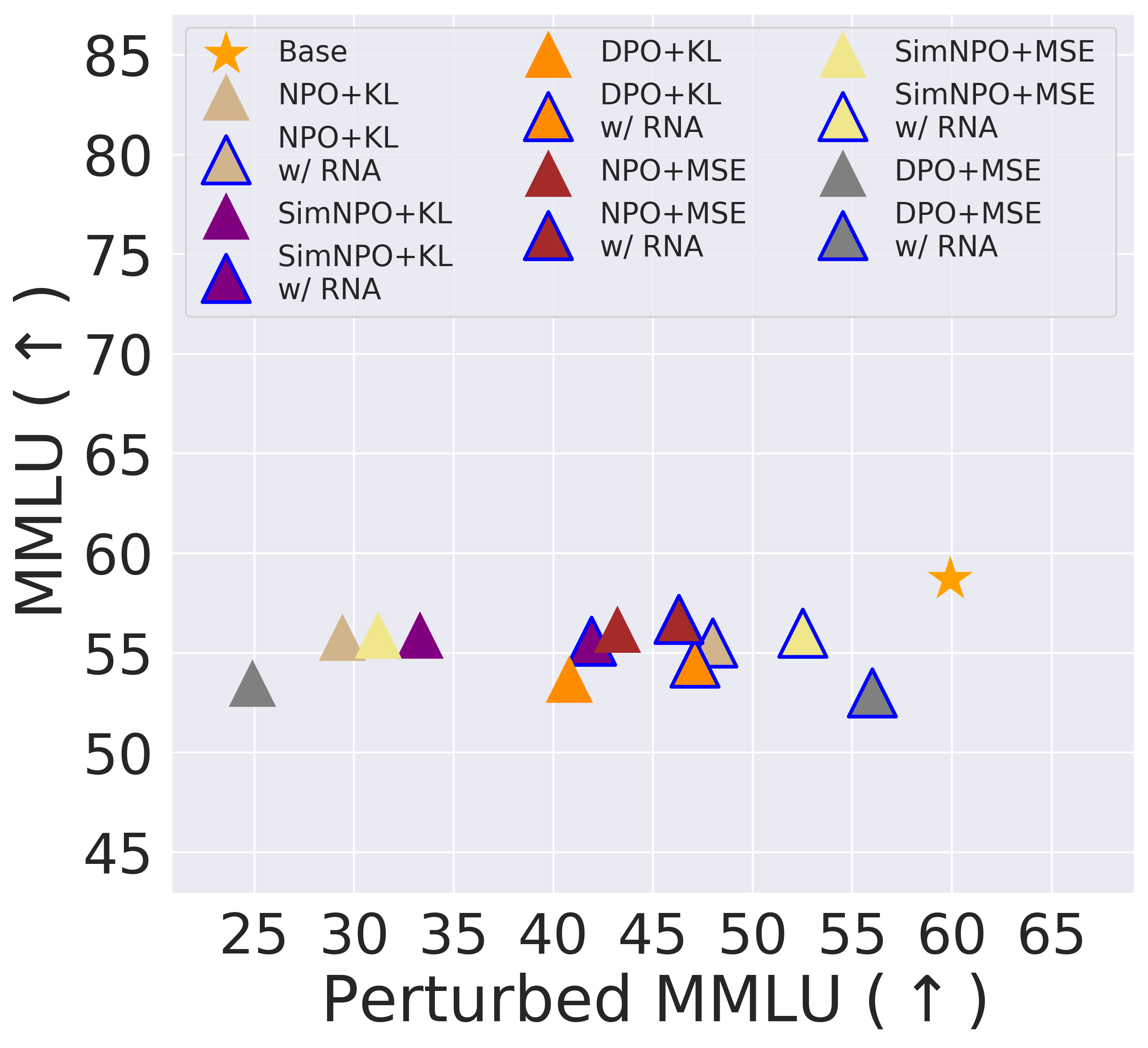}
    \end{minipage}%
    \begin{minipage}[b]{0.25\linewidth}
        \centering
        \includegraphics[width=\linewidth]{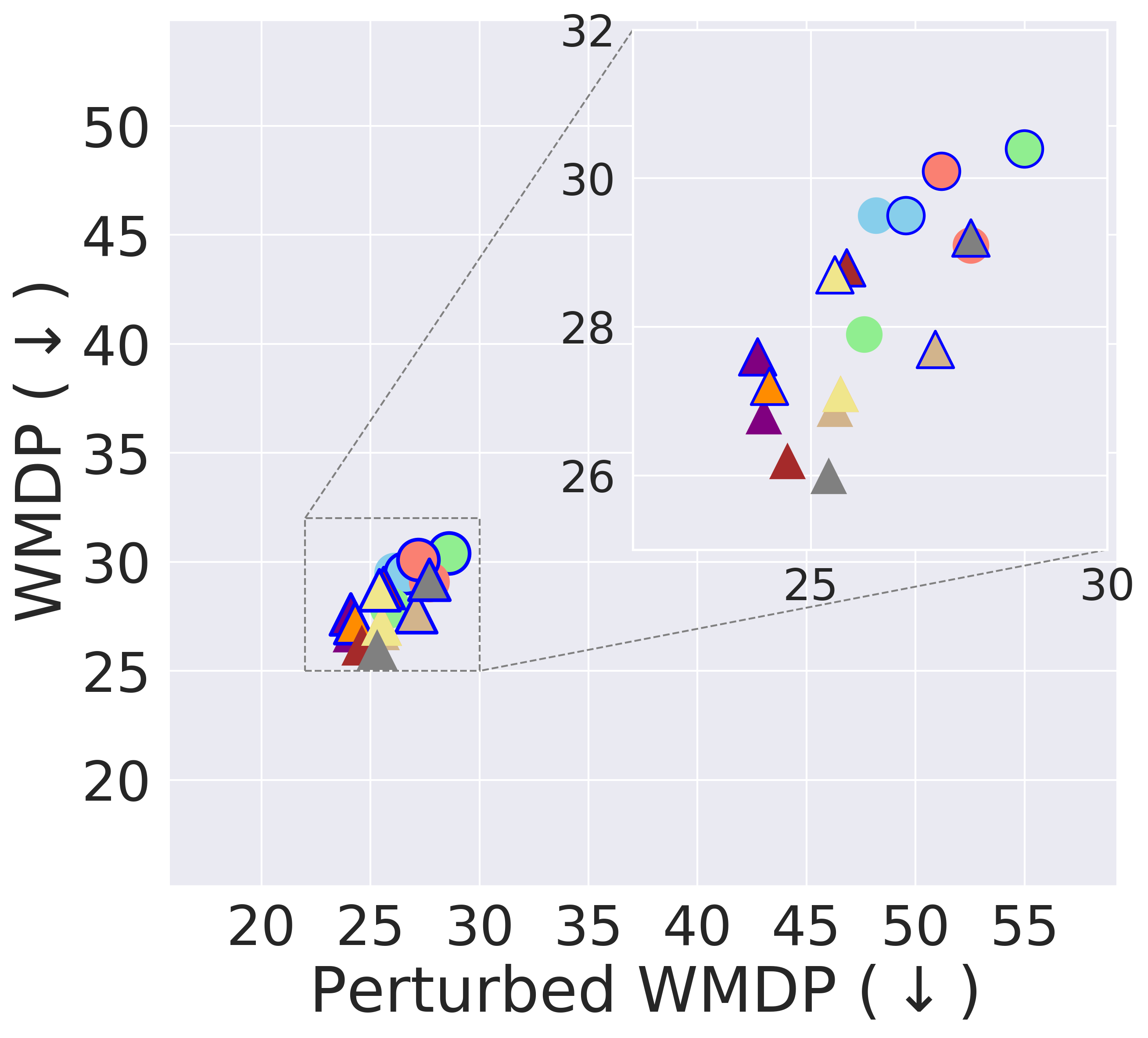}
    \end{minipage}%
    \begin{minipage}[b]{0.25\linewidth}
        \centering
        \includegraphics[width=\linewidth]{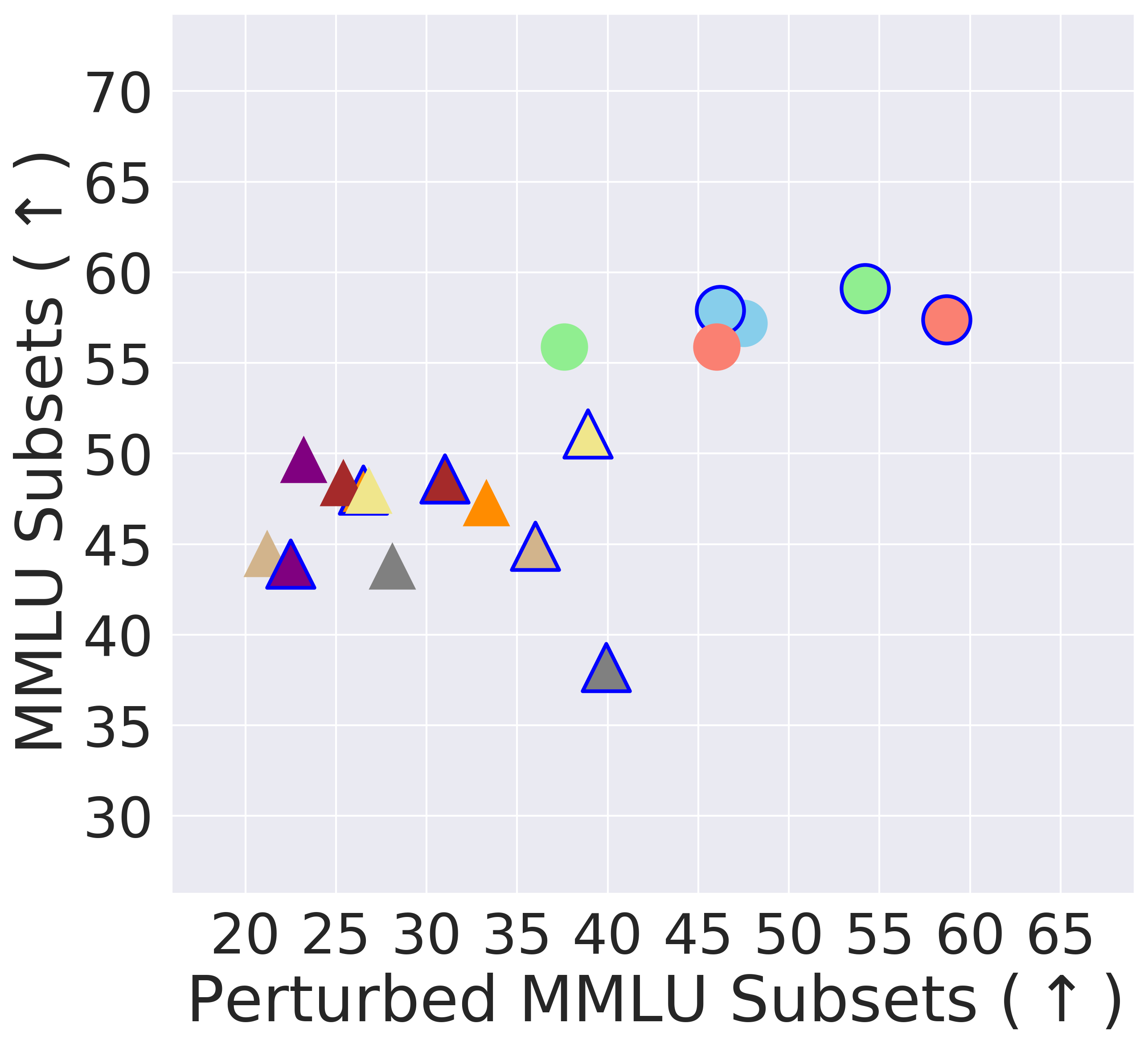}
    \end{minipage}
    \caption{\textbf{Left-most}: Accuracy of RM and RM w/ RNA models on MMLU and perturbed MMLU (MMLU QA contains forget-tokens; see Appendix~\ref{appendix:A2} for details). \textbf{Left-mid}: Accuracy of PO and PO w/ RNA models on MMLU and perturbed MMLU. \textbf{Right-mid}: Accuracy of all unlearned models on WMDP and perturbed WMDP. \textbf{Right-most}: Accuracy of all unlearned models on MMLU subsets (College Biology and Computer Security). Original RM models are shown by \textit{one-color circles} and original PO models by \textit{one-color triangles}. Two-color markers for models with RNA, where the inner color indicates the original method and the \textcolor{blue}{outer blue ring} denotes RNA integration.}
    \label{fig:1}
\end{figure*}
When applied to RM methods, RNA achieves an average accuracy recovery rate of $66.3$ (RMU: $34.2$, Adaptive RMU: $81.7$, RSV $83.2$). For PO methods, the average recovery rate is $51.7$ (NPO+KL: $60.9$, NPO+MSE: $18.5$, DPO+KL: $32.9$, DPO+MSE: $91.4$, SimNPO+KL: $32.3$, SimNPO+MSE: $74.2$). RNA maintains the original forget/retain utility, with WMDP and MMLU accuracy remaining stable after RNA integration. Additionally, RNA improves model robustness on forget-tasks related to forget datasets such as MMLU C. Sec. and C. Bio. (Figure~\ref{fig:1} right-most).
\paragraph{Trade-off between the coefficient and robustness.}
\begin{figure*}[ht]
    \begin{center}
    \begin{minipage}[b]{0.32\textwidth}
    \centerline{\includegraphics[width=0.9\textwidth]{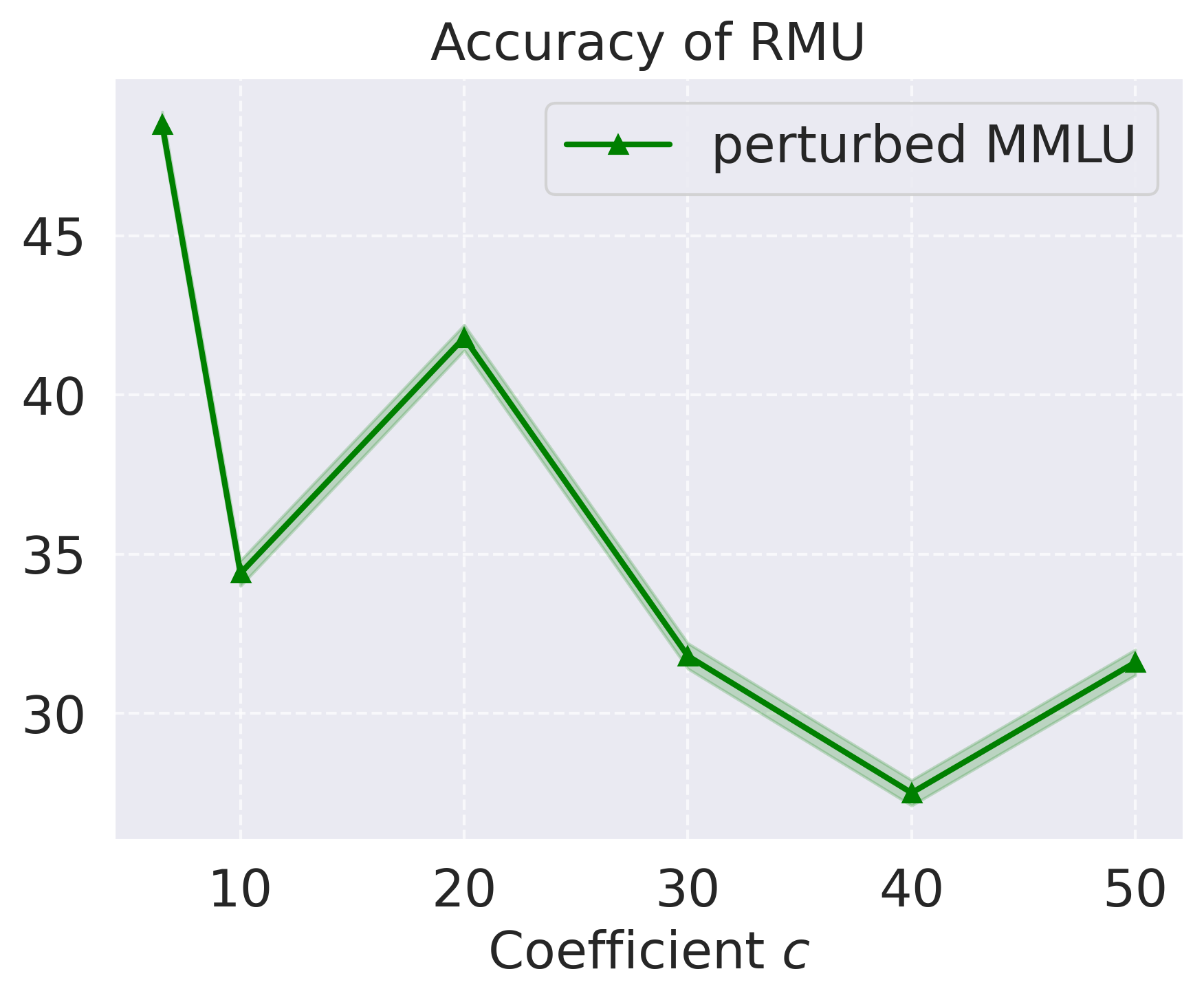}}
    \end{minipage}
    \begin{minipage}[b]{0.32\textwidth}
    \centerline{\includegraphics[width=0.9\textwidth]{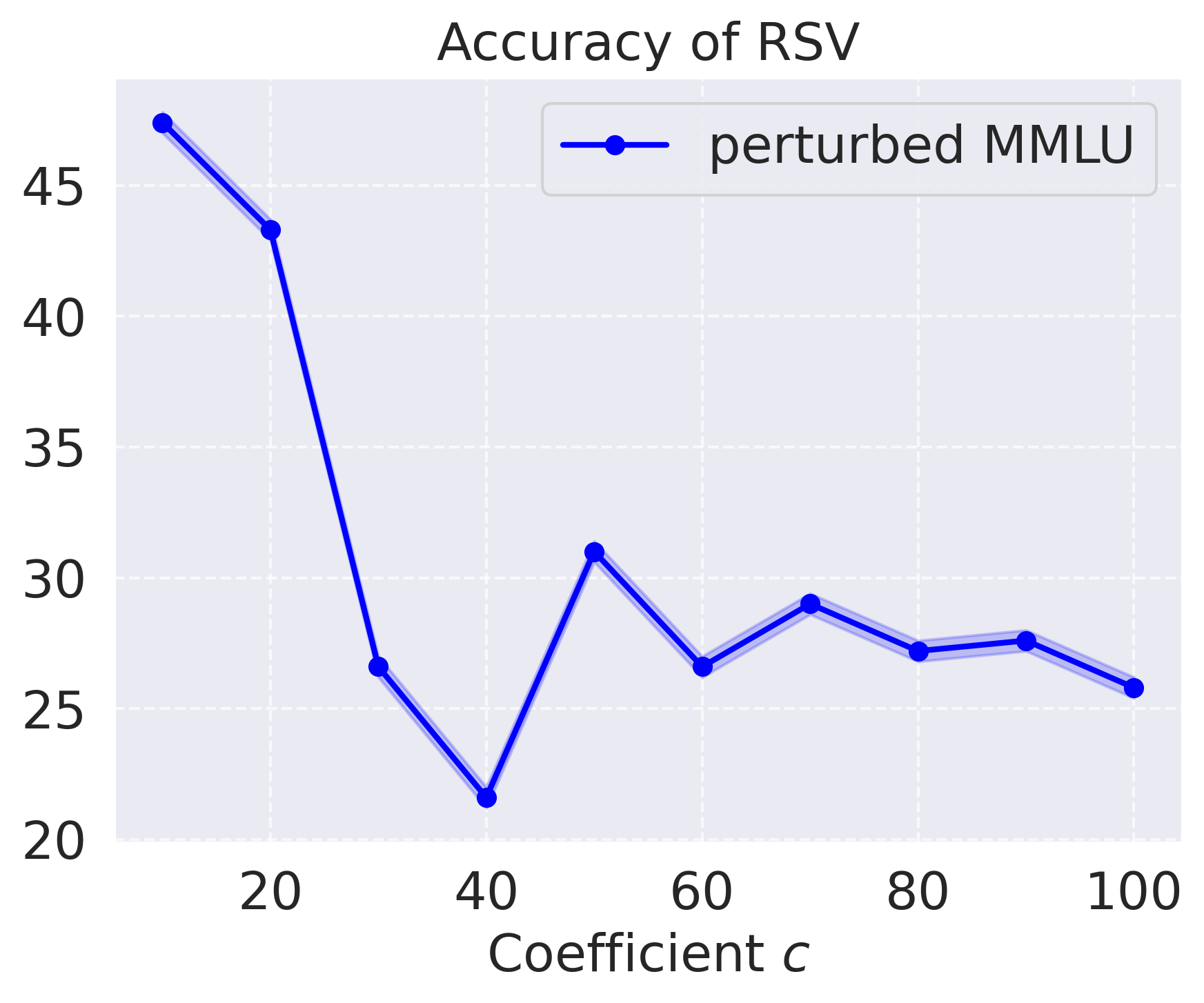}}
    \end{minipage}
    \begin{minipage}[b]{0.32\textwidth}
    \centerline{\includegraphics[width=0.9\textwidth]{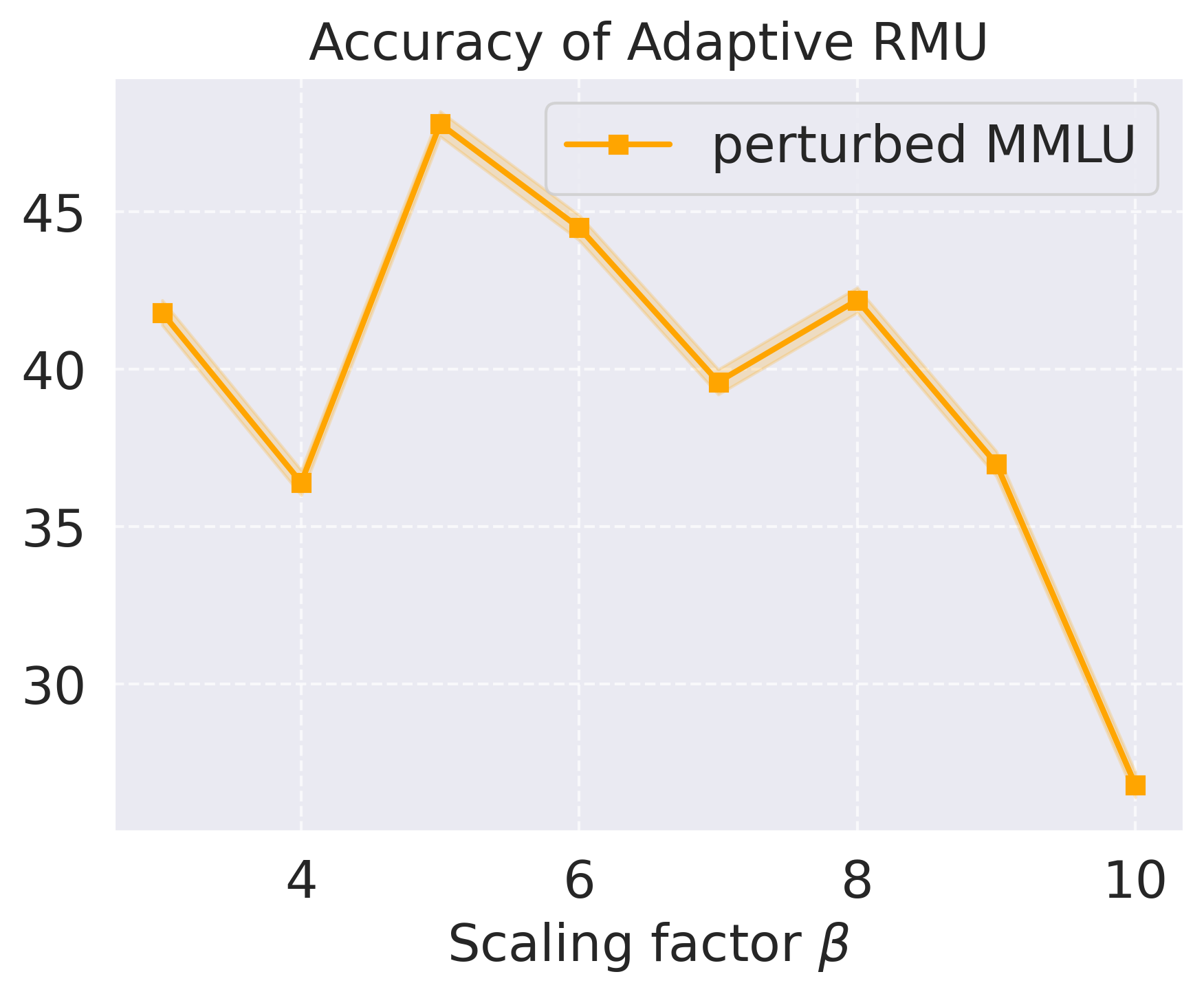}}
    \end{minipage}
    \caption{Accuracy of RM models on perturbed MMLU across values of coefficient $c$ and scaling factor $\beta$. The accuracy tends to decrease as either $c$ or $\beta$ increases.}
    \label{fig:2}
    \end{center}
\end{figure*}
As suggested by Theorem~\ref{theorem2}, increasing either the coefficient $c$ or scaling factor $\beta$ is expected to reduce the unlearned model's robustness. To validate this claim, we fix the unlearn layer at $l=7$ and grid search over values of $c$ and $\beta$, reporting the accuracy of RM models on perturbed MMLU. 
Figure~\ref{fig:2} shows a clear trend that the accuracy of RM models decreases as the coefficient $c$ or $\beta$ increases. Previous works~\citep{wmdp, dang2025effects} performed grid search for $c$ and $\beta$, selecting values that yielded optimal accuracy and observed that deeper unlearn layers require larger values of $c$ (or $\beta$) to achieve effective unlearning. However, our results demonstrate that increasing the coefficient $c$ (or $\beta$) results in a notable reduction in model robustness. \textbf{From a robustness perspective, choosing earlier layers as the unlearn layer helps maintain the robustness of the RM models.}

\textbf{Effects of RNA noise scale $\nu$ on robustness.}
\begin{figure*}[ht]
    \begin{center}
    \begin{minipage}[b]{0.32\textwidth}
    \centerline{\includegraphics[width=0.95\textwidth]{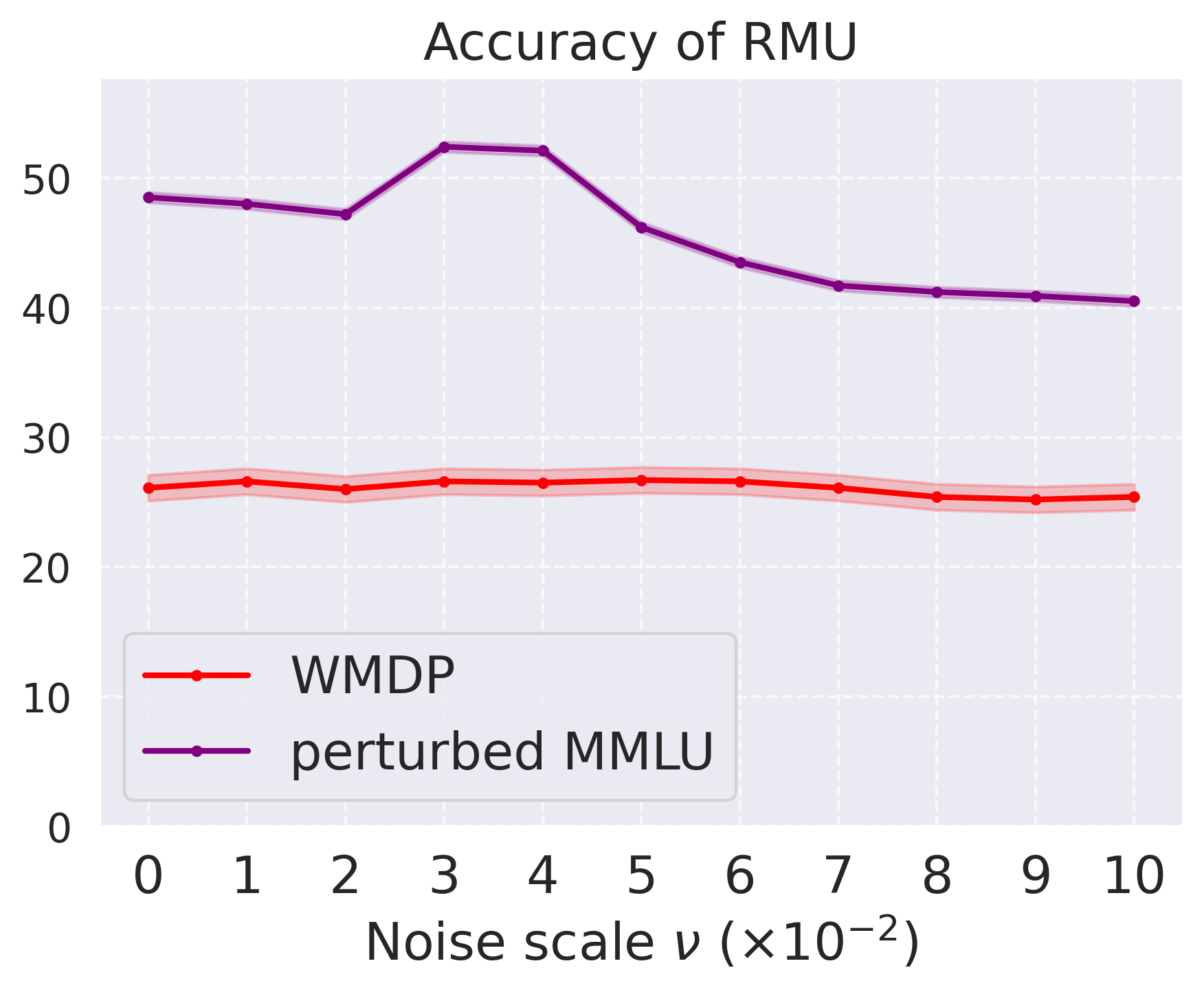}}
    \end{minipage}
    \begin{minipage}[b]{0.32\textwidth}
    \centerline{\includegraphics[width=0.9\textwidth]{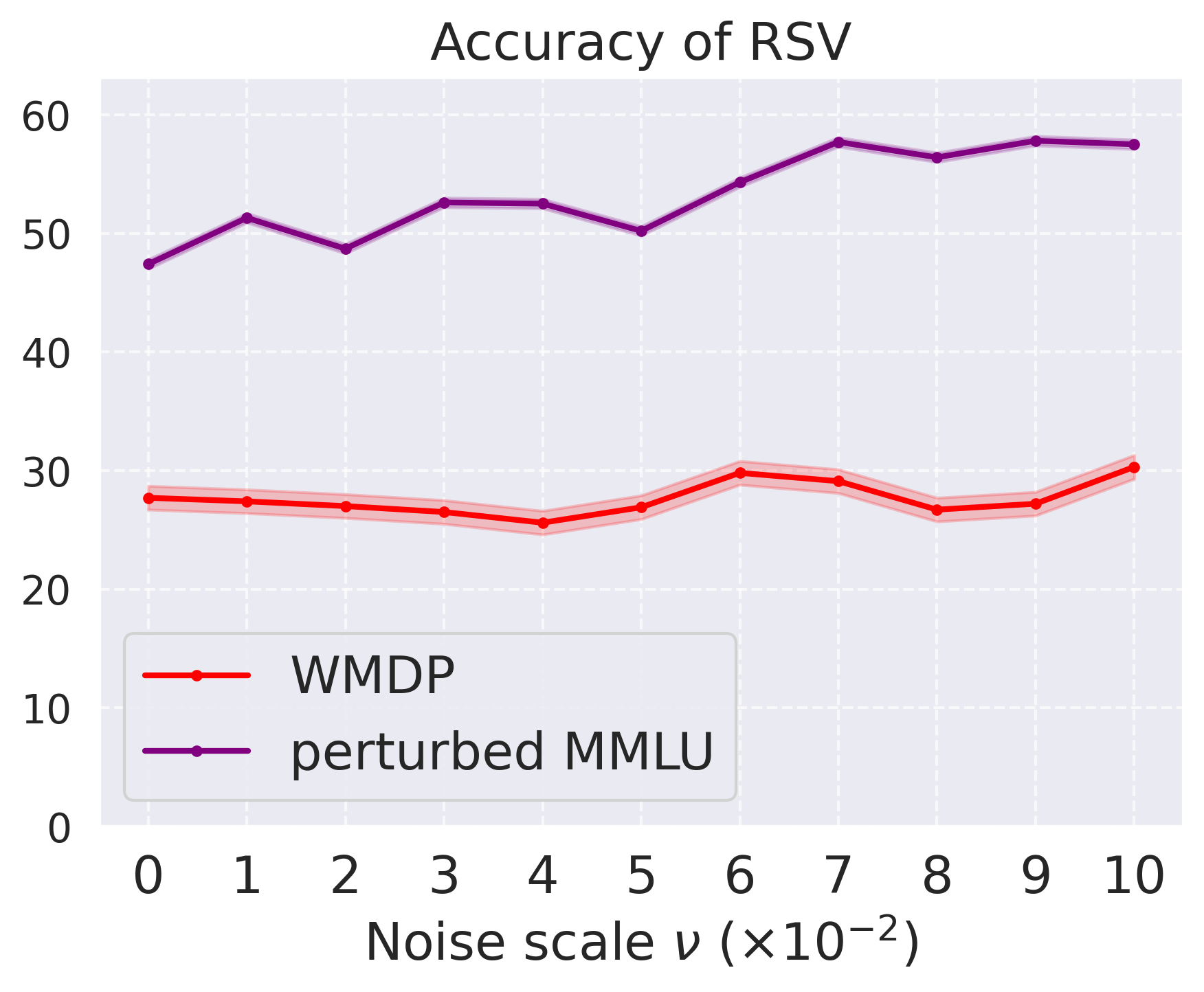}}
    \end{minipage}
    \begin{minipage}[b]{0.32\textwidth}
    \centerline{\includegraphics[width=0.9\textwidth]{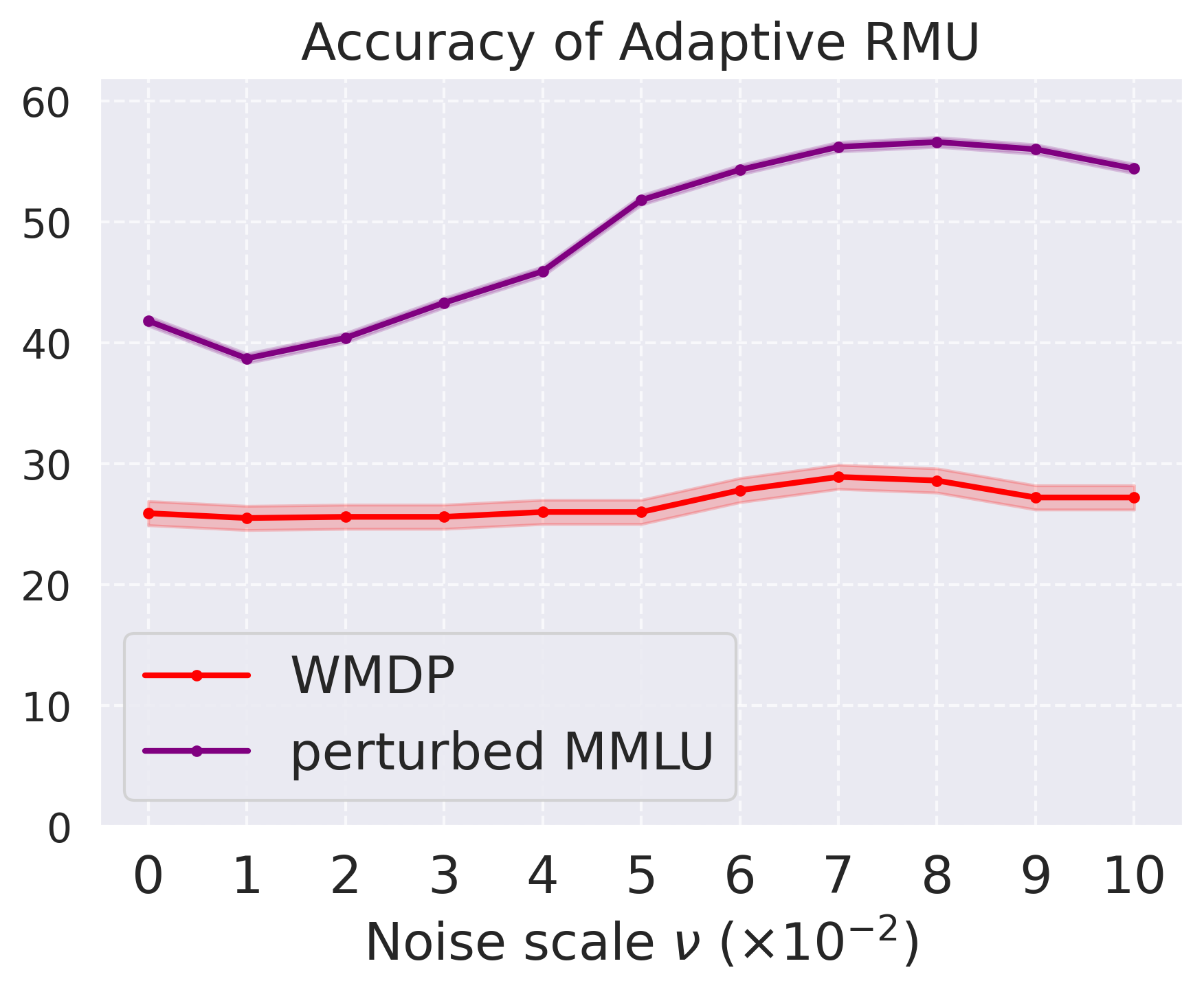}}
    \end{minipage}
    \begin{minipage}[b]{0.32\textwidth}
    \centerline{\includegraphics[width=0.9\textwidth]{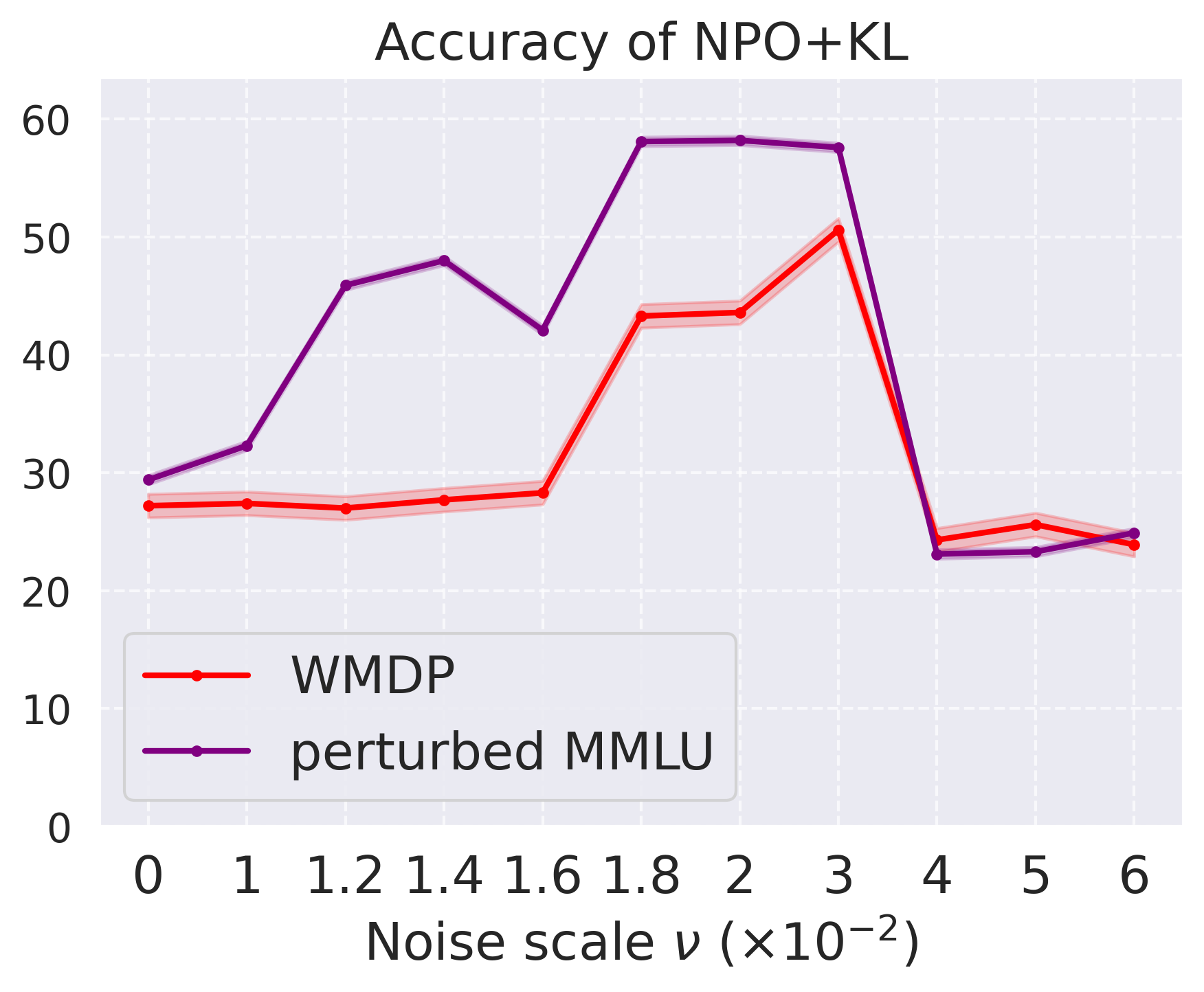}}
    \end{minipage}
    \begin{minipage}[b]{0.32\textwidth}
    \centerline{\includegraphics[width=0.9\textwidth]{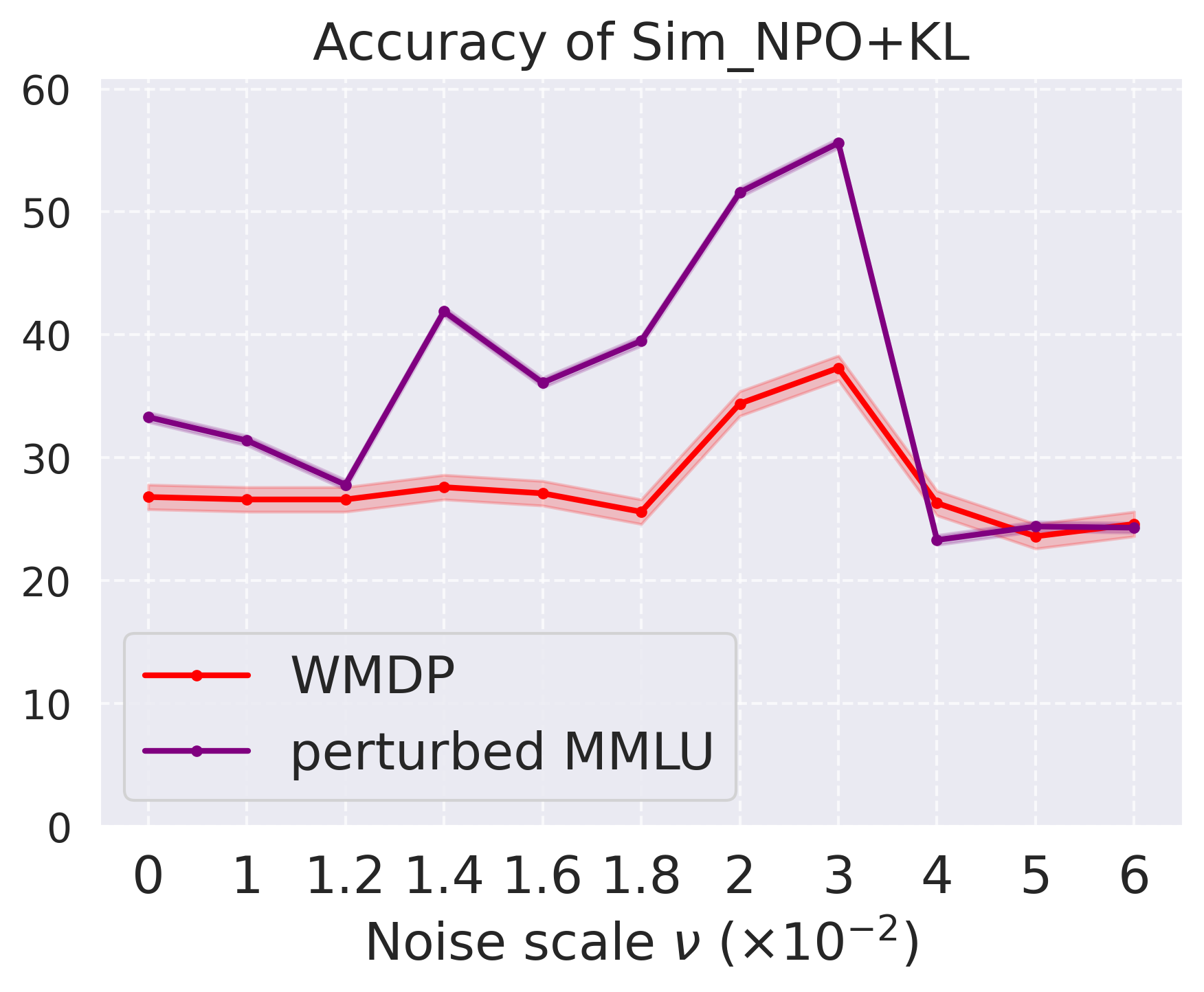}}
    \end{minipage}
    \begin{minipage}[b]{0.32\textwidth}
    \centerline{\includegraphics[width=0.9\textwidth]{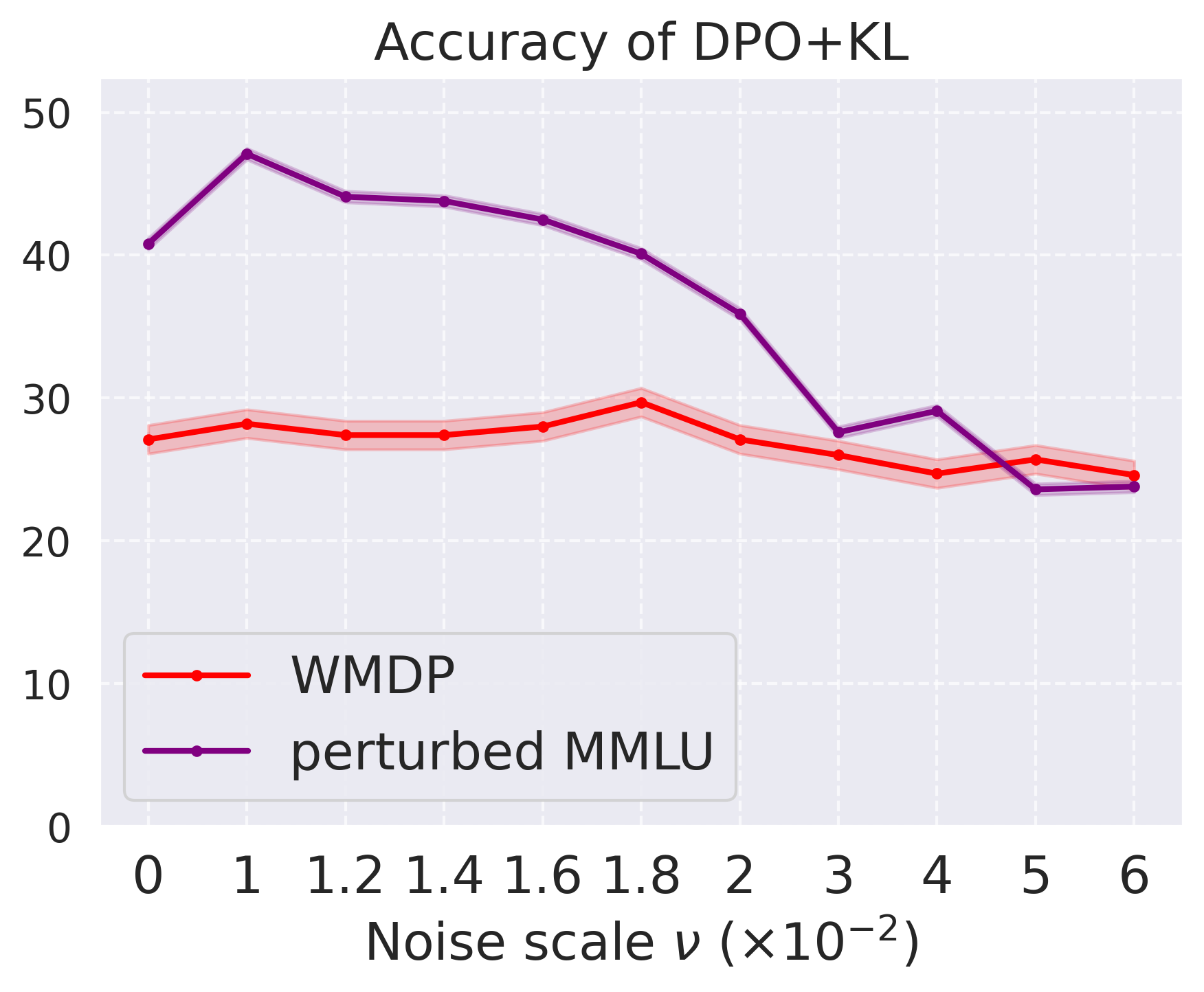}}
    \end{minipage}
    \begin{minipage}[b]{0.32\textwidth}
    \centerline{\includegraphics[width=0.9\textwidth]{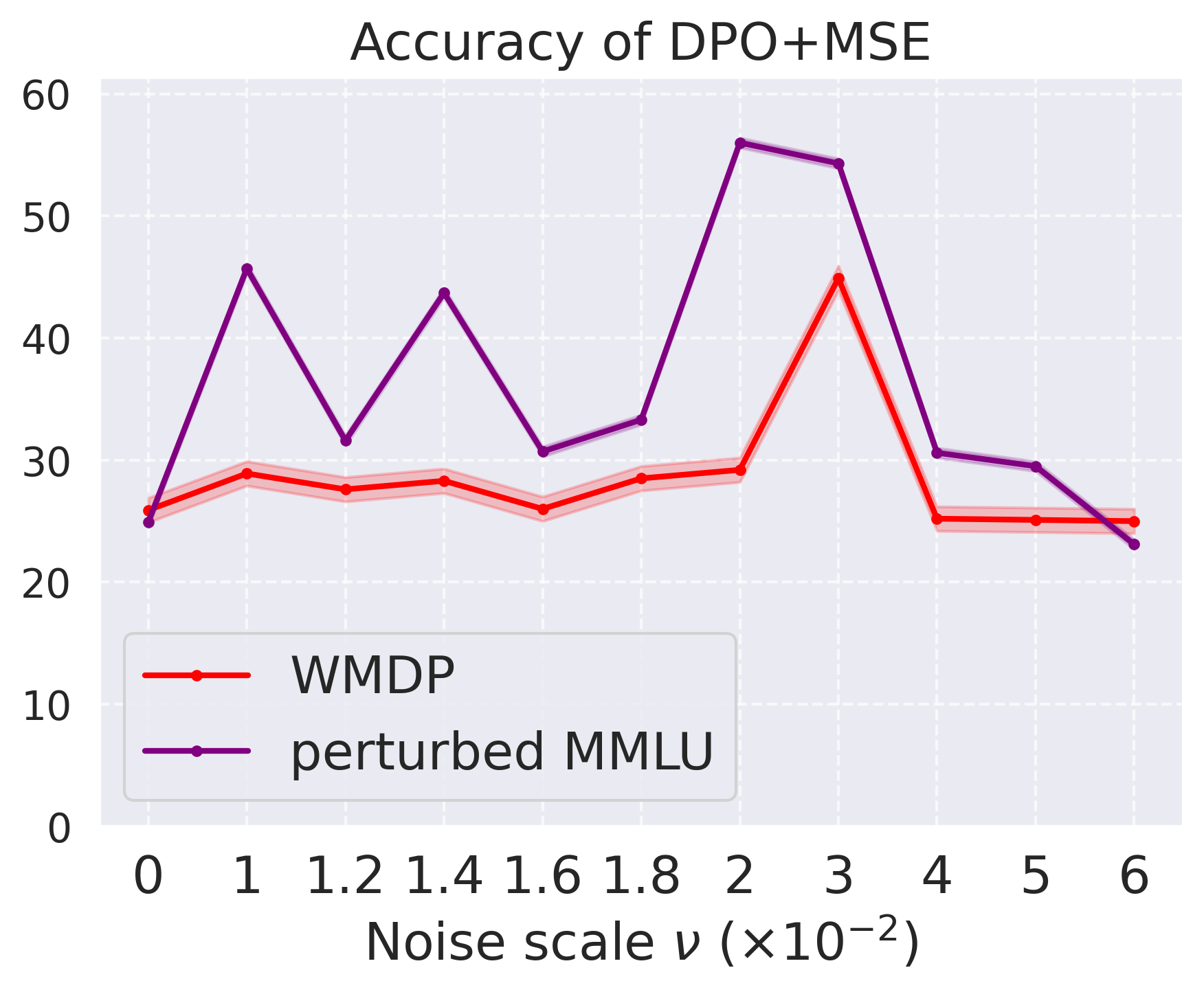}}
    \end{minipage}
    \begin{minipage}[b]{0.32\textwidth}
    \centerline{\includegraphics[width=0.9\textwidth]{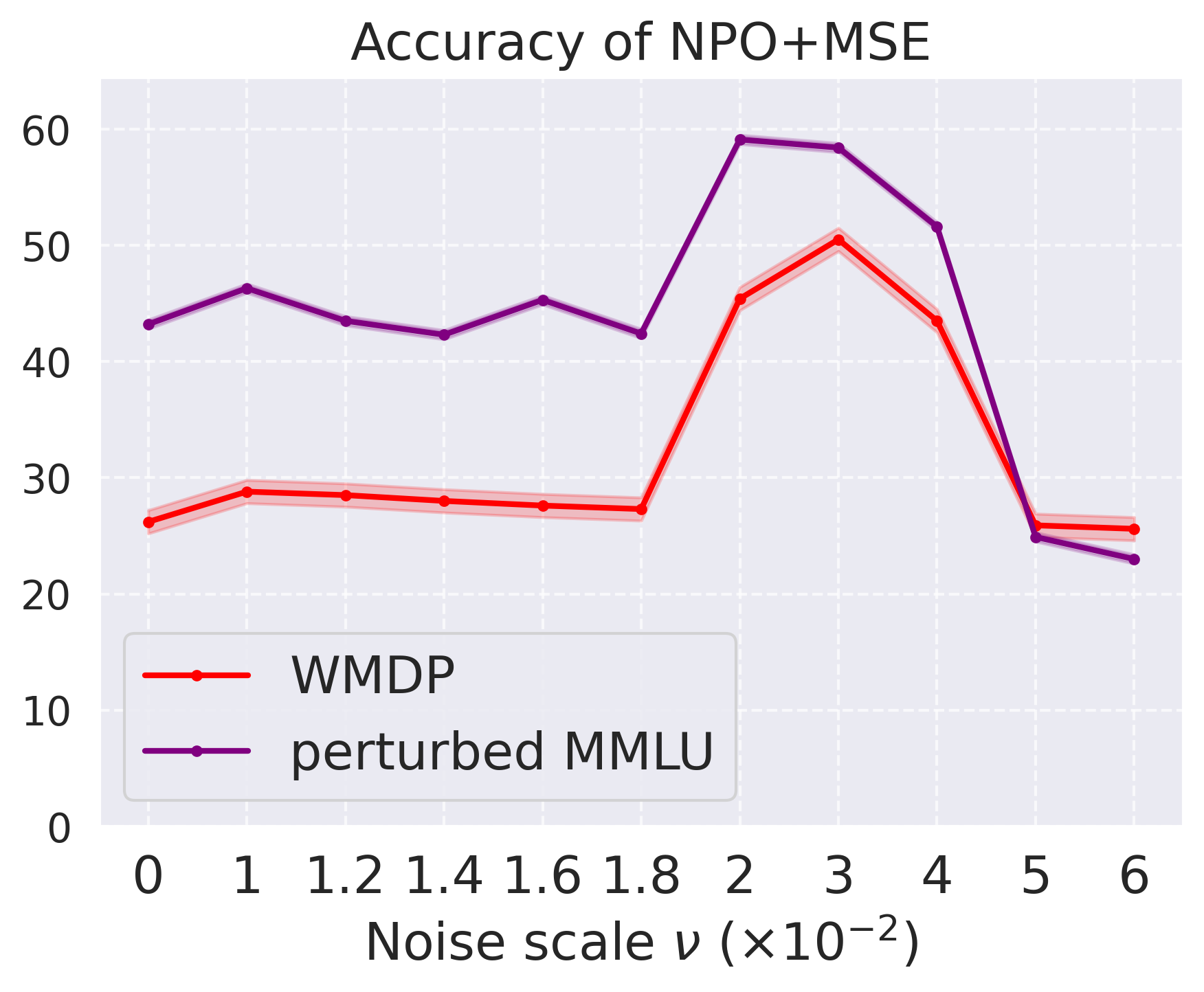}}
    \end{minipage}
    \begin{minipage}[b]{0.32\textwidth}
    \centerline{\includegraphics[width=0.9\textwidth]{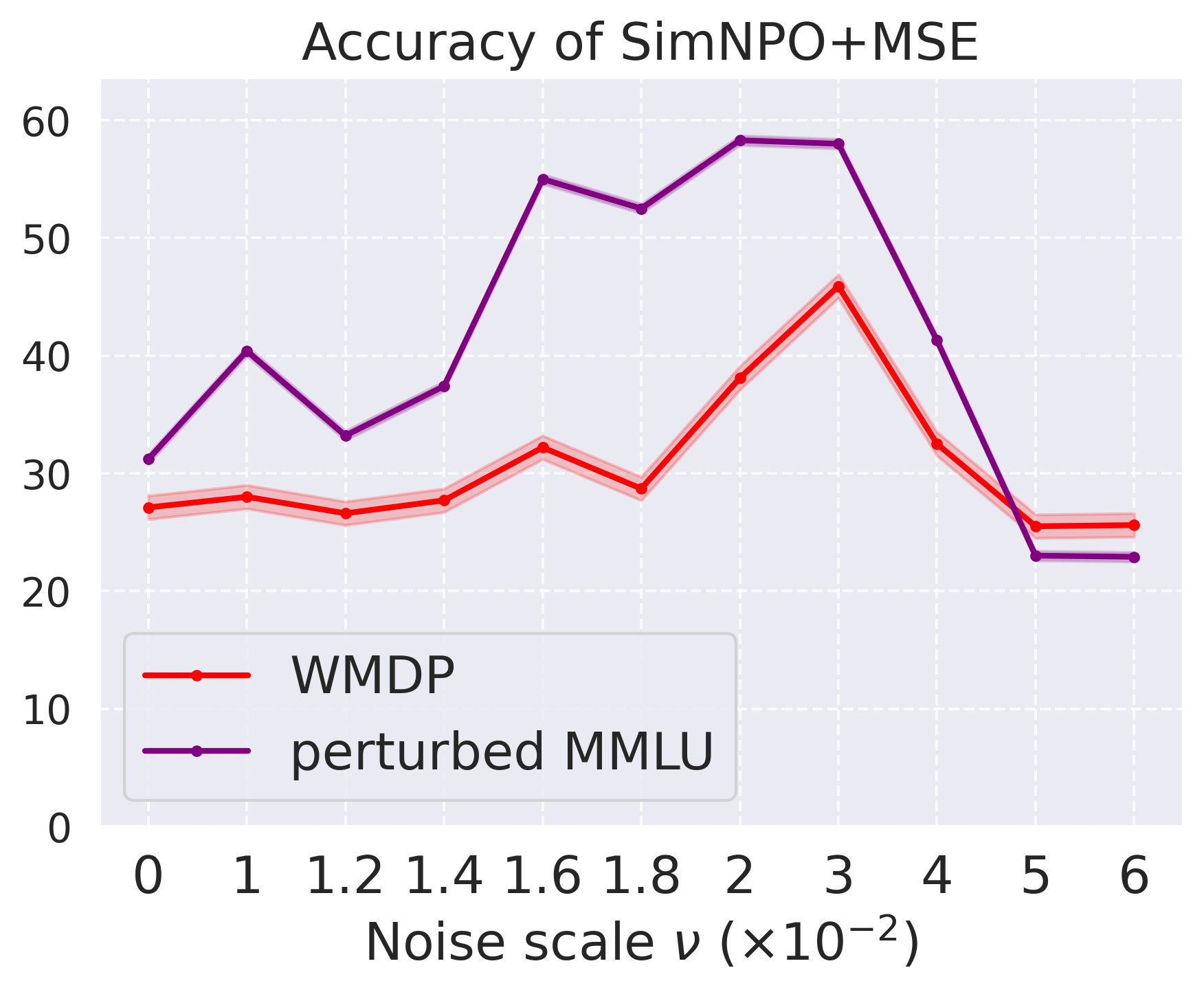}}
    \end{minipage}
    \caption{Accuracy of RNA models measured on perturbed MMLU Q\&A and WMDP (avg. of Biology and Cyber) across different values of $\nu$.}
    \label{fig:3}
    \end{center}
\end{figure*}
We evaluate the accuracy of RNA models on perturbed MMLU and WMDP by varying $\nu$. As shown in  Figure~\ref{fig:3}, we observed that increasing $\nu$ first leads to improved accuracy of RNA models on perturbed MMLU while maintaining stable accuracy on WMDP. However, as $\nu$ continuously increases, the accuracy of RNA models on perturbed MMLU begins to decline, indicating a point where excessive noise becomes detrimental to retain accuracy. This result aligns with the analysis in Theorem~\ref{theorem3}, which suggests that the RNA models' robustness is bounded and will reach a saturation point. Notably, we observed that \textbf{RM methods are more stable and robust to noise $\nu$ than PO.}

\textbf{Side effects of RNA on model alignment.} 
\begin{table}[h]
    \centering
    \caption{\label{tab:alignment} 
    Performance of unlearning methods on $8$ tasks, comparing original unlearned model vs. \textbf{RNA} unlearned models. Improvements are shown in \textcolor{blue}{blue}, drops in \textcolor{red}{red}.}

    \resizebox{\linewidth}{!}{
     \setlength{\tabcolsep}{3pt}
    \begin{tabular}{llcccccccc}
        \toprule
        \textbf{Methods} & \multirow{2}{*}{} 
            & \textbf{TruthfulQA} & \textbf{ToxiGen} & \textbf{WinoGrande} & \textbf{CommonsenseQA} & \textbf{HellaSwag} & \textbf{ARC E.}  & \textbf{ARC C.} & \textbf{BoolQ} \\
        \midrule 
        Base & Original & $38.5$ & $45.2$ & $72.3$ & $66.1$ & $63.9$ & $81.2$ & $57.0$ & $84.9$\\
        \midrule
        \multicolumn{10}{c}{\textbf{Representation Misdirection}} \\
        \multirow{2}{*}{RMU} 
            & Original  & $38.6$ & $45.1$ & $72.8$ & $65.3$ & $63.7$ & $80.6$ & $56.3$ & $84.5$ \\
            & $\cellcolor{gray!15}$\textbf{w/ RNA}  & $\cellcolor{gray!15}$$37.8${\tiny\textcolor{red}{$-0.8$}}   & $\cellcolor{gray!15}$$44.3${\tiny\textcolor{red}{$-0.8$}}  & $\cellcolor{gray!15}$$72.4${\tiny\textcolor{red}{$-0.4$}}  & $\cellcolor{gray!15}$$65.5${\tiny\textcolor{blue}{$+0.2$}} & $\cellcolor{gray!15}$$63.7${\tiny\textcolor{black}{$+0.0$}} & $\cellcolor{gray!15}$$80.3${\tiny\textcolor{red}{$-0.3$}} & $\cellcolor{gray!15}$$55.9${\tiny\textcolor{red}{$-0.4$}} & $\cellcolor{gray!15}$$84.3${\tiny\textcolor{red}{$-0.2$}}  \\\midrule
        \multirow{2}{*}{Adap. RMU}
            & Original & $38.6$ & $45.5$ & $72.3$ & $65.7$ & $63.6$ & $80.8$ & $55.6$ & $84.5$\\ 
            & $\cellcolor{gray!15}$\textbf{w/ RNA}  & $\cellcolor{gray!15}$$38.8${\tiny\textcolor{blue}{$+0.2$}} & $\cellcolor{gray!15}$$44.1${\tiny\textcolor{red}{$-1.4$}} & $\cellcolor{gray!15}$$73.0${\tiny\textcolor{blue}{$+0.7$}} & $\cellcolor{gray!15}$$65.8${\tiny\textcolor{blue}{$+0.1$}} & $\cellcolor{gray!15}$$63.5${\tiny\textcolor{red}{$-0.1$}} & $\cellcolor{gray!15}$$80.3${\tiny\textcolor{red}{$-0.5$}} & $\cellcolor{gray!15}$$56.1${\tiny\textcolor{blue}{$+0.5$}} & $\cellcolor{gray!15}$$84.4${\tiny\textcolor{red}{$-0.1$}}\\\midrule
        \multirow{2}{*}{RSV}
            & Original & $39.6$ & $46.0$ & $72.3$ & $64.4$ & $63.6$ & $80.6$ & $56.8$ & $84.5$ \\ 
                    & $\cellcolor{gray!15}$\textbf{w/ RNA}   & $\cellcolor{gray!15}$$39.5${\tiny\textcolor{red}{$-0.1$}} & $\cellcolor{gray!15}$$45.4${\tiny\textcolor{red}{$-0.6$}}  & $\cellcolor{gray!15}$$71.6${\tiny\textcolor{red}{$-0.7$}}  & $\cellcolor{gray!15}$$64.5${\tiny\textcolor{blue}{$+0.1$}} & $\cellcolor{gray!15}$$63.3${\tiny\textcolor{red}{$-0.3$}} & $\cellcolor{gray!15}$$80.3${\tiny\textcolor{red}{$-0.3$}} & $\cellcolor{gray!15}$$56.4${\tiny\textcolor{red}{$-0.4$}} & $\cellcolor{gray!15}$$84.3${\tiny\textcolor{red}{$-0.2$}} \\
        \midrule
        \multicolumn{10}{c}{\textbf{Preference Optimization}} \\
        \multirow{2}{*}{NPO+KL}
            & Original & $42.9$ & $45.0$ & $70.8$ & $64.1$ & $61.8$ & $80.0$ & $56.7$ & $84.7$ \\ 
            & $\cellcolor{gray!15}$\textbf{w/ RNA}  & $\cellcolor{gray!15}$$40.3${\tiny\textcolor{red}{$-2.6$}} & $\cellcolor{gray!15}$$44.5${\tiny\textcolor{red}{$-0.5$}} & $\cellcolor{gray!15}$$70.7${\tiny\textcolor{red}{$-0.1$}} & $\cellcolor{gray!15}$$62.9${\tiny\textcolor{red}{$-1.2$}} & $\cellcolor{gray!15}$$61.8${\tiny\textcolor{black}{$+0.0$}} & $\cellcolor{gray!15}$$80.2${\tiny\textcolor{blue}{$+0.2$}} & $\cellcolor{gray!15}$$56.5${\tiny\textcolor{red}{$-0.2$}} & $\cellcolor{gray!15}$$84.6${\tiny\textcolor{red}{$-0.1$}}\\\midrule
        \multirow{2}{*}{NPO+MSE}
            & Original & $37.3$ & $45.1$ & $72.1$ & $62.7$ & $63.0$ & $80.6$ & $56.4$ & $85.2$ \\ 
            & $\cellcolor{gray!15}$\textbf{w/ RNA}  & $\cellcolor{gray!15}$$36.3${\tiny\textcolor{red}{$-1.0$}}  & $\cellcolor{gray!15}$$44.7${\tiny\textcolor{red}{$-0.4$}}  & $\cellcolor{gray!15}$$72.3${\tiny\textcolor{blue}{$+0.2$}} & $\cellcolor{gray!15}$$62.2${\tiny\textcolor{red}{$-0.5$}} & $\cellcolor{gray!15}$$62.9${\tiny\textcolor{red}{$-0.1$}} & $\cellcolor{gray!15}$$80.7${\tiny\textcolor{blue}{$+0.1$}} & $\cellcolor{gray!15}$$56.4${\tiny\textcolor{black}{$+0.0$}} & $\cellcolor{gray!15}$$85.2${\tiny\textcolor{black}{$+0.0$}}\\\midrule
        \multirow{2}{*}{DPO+KL}
            & Original & $39.5$ & $45.9$ & $71.8$ & $62.0$ & $61.4$ & $79.4$ & $55.1$ & $84.4$ \\ 
            & $\cellcolor{gray!15}$\textbf{w/ RNA}  & $\cellcolor{gray!15}$$38.0${\tiny\textcolor{red}{$-1.5$}} & $\cellcolor{gray!15}$$43.9${\tiny\textcolor{red}{$-2.0$}}  & $\cellcolor{gray!15}$$69.7${\tiny\textcolor{red}{$-2.1$}}  & $\cellcolor{gray!15}$$62.3${\tiny\textcolor{blue}{$+0.3$}} & $\cellcolor{gray!15}$$61.6${\tiny\textcolor{blue}{$+0.2$}} & $\cellcolor{gray!15}$$79.6${\tiny\textcolor{blue}{$+0.2$}} & $\cellcolor{gray!15}$$55.7${\tiny\textcolor{blue}{$+0.6$}} & $\cellcolor{gray!15}$$84.8${\tiny\textcolor{blue}{$+0.4$}} \\\midrule
         \multirow{2}{*}{DPO+MSE}
            & Original  & $34.2$ & $44.7$ & $72.1$ & $56.7$ & $62.3$ & $78.9$ & $54.1$ & $84.3$ \\ 
            & $\cellcolor{gray!15}$\textbf{w/ RNA}  & $\cellcolor{gray!15}$$32.1${\tiny\textcolor{red}{$-2.1$}} & $\cellcolor{gray!15}$$42.5${\tiny\textcolor{red}{$-2.2$}} & $\cellcolor{gray!15}$$71.6${\tiny\textcolor{red}{$-0.5$}} & $\cellcolor{gray!15}$$57.2${\tiny\textcolor{blue}{$+0.5$}} & $\cellcolor{gray!15}$$61.6${\tiny\textcolor{red}{$-0.7$}} & $\cellcolor{gray!15}$$78.8${\tiny\textcolor{red}{$-0.1$}} & $\cellcolor{gray!15}$$53.4${\tiny\textcolor{red}{$-0.7$}} & $\cellcolor{gray!15}$$84.4${\tiny\textcolor{blue}{$+0.1$}}\\\midrule
        \multirow{2}{*}{SimNPO+KL}
            & Original  & $43.3$ & $44.0$ & $71.5$ & $63.8$ & $62.0$ & $80.0$ & $56.4$ & $84.5$ \\ 
            & $\cellcolor{gray!15}$\textbf{w/ RNA}     & $\cellcolor{gray!15}$$42.1${\tiny\textcolor{red}{$-1.2$}} & $\cellcolor{gray!15}$$45.2${\tiny\textcolor{blue}{$+1.2$}} & $\cellcolor{gray!15}$$71.1${\tiny\textcolor{red}{$-0.4$}}  & $\cellcolor{gray!15}$$61.9${\tiny\textcolor{red}{$-1.9$}} & $\cellcolor{gray!15}$$61.9${\tiny\textcolor{red}{$-0.1$}} & $\cellcolor{gray!15}$$79.9${\tiny\textcolor{red}{$-0.1$}} & $\cellcolor{gray!15}$$56.4${\tiny\textcolor{black}{$+0.0$}} & $\cellcolor{gray!15}$$84.8${\tiny\textcolor{blue}{$+0.3$}}\\\midrule
        \multirow{2}{*}{SimNPO+MSE}
            & Original  & $38.3$ & $45.8$ & $71.5$ & $63.8$ & $62.9$ & $80.6$ & $56.8$ & $85.6$ \\ 
            & $\cellcolor{gray!15}$\textbf{w/ RNA}  & $\cellcolor{gray!15}$$38.3${\tiny\textcolor{black}{$+0.0$}}  & $\cellcolor{gray!15}$$43.4${\tiny\textcolor{red}{$-2.4$}}  & $\cellcolor{gray!15}$$70.6${\tiny\textcolor{red}{$-0.9$}} & $\cellcolor{gray!15}$$62.3${\tiny\textcolor{red}{$-1.5$}} & $\cellcolor{gray!15}$$62.5${\tiny\textcolor{red}{$-0.4$}} & $\cellcolor{gray!15}$$80.3${\tiny\textcolor{red}{$-0.3$}} & $\cellcolor{gray!15}$$56.3${\tiny\textcolor{red}{$-0.5$}} & $\cellcolor{gray!15}$$85.2${\tiny\textcolor{red}{$-0.4$}} \\
        \bottomrule
    \end{tabular}}
\end{table}
PO unlearning methods, such as DPO, are themselves alignment techniques. RNA enhances retain-robustness by increasing the diversity (via random noise) of the retain-representations. This could create \emph{potential conflicts with the precision of model alignment}. We evaluate the RNA's side effect on the model's alignment, such as faithfulness and hallucination on TruthfulQA~\citep{truthfulqa} (multiple-choice QA) and ToxiGen~\citep{toxigen}, commonsense reasoning on WinoGrande~\citep{sakaguchi2021winogrande} and CommonsenseQA~\citep{commonsenseqa}, natural language inference on HellaSwag~\citep{hellaswag}, science reasoning on ARC~\citep{clark2018think} (easy and challenge), and factuality on BoolQ~\citep{boolq}. 
Table~\ref{tab:alignment} shows that RNA preserves the model's performance on alignment tasks; the changes are often less than $1\%$.

\textbf{Comparing RNA to standard baselines.} LLM unlearning methods tend to overfit forget-representations to target random representations.
RNA mitigates this by diversifying retain-representations with random noise, making forget-tokens less salient as backdoor signals. Intrinsically, simple regularization strategies, such as weight decay~\citep{krogh1991simple, adamw} and dropout~\citep{hinton2012improving}, can serve a similar role. Weight decay and dropout are regularization techniques that mitigate overfitting during training. While weight decay penalizes large weights, shrinking them towards zero during training, dropout randomly sets some input elements to zero with probability $p$. We use a weight decay value of $0.01$. For the dropout experiments, we apply dropout with $p=0.1$ at the unlearning layer $l=7$ for all unlearning methods. Table~\ref{tab:baselines} compares RNA with weight decay and dropout, across unlearning methods. The results show that weight decay and dropout often fail to enhance retain-robustness, whereas RNA consistently improves retain-robustness while preserving the original forgetting and retaining performance.

\begin{table}[!t]
\centering
\caption{Performance of unlearning methods with regularization, such as weight decay or dropout, compared to our proposed RNA, across WMDP, MMLU, and perturbed (pert.) MMLU.}
\label{tab:baselines}
\begin{minipage}[t]{0.495\textwidth}
    \resizebox{\textwidth}{!}{
    \begin{tabular}{llccc}
        \toprule
        \textbf{Methods} & 
            & \textbf{WMDP}($\downarrow$) & \textbf{MMLU}($\uparrow$)& \textbf{pert. MMLU}($\uparrow$)\\
        \midrule
        \multirow{4}{*}{NPO+MSE}
            & Original & $26.2$ & $56.2$ & $43.2$ \\
            & w/ weight decay  & $28.2$ & $56.1$ & $38.7$\\
            & w/ dropout  & $51.9$ & $56.4$ & $58.7$ \\
            & $\cellcolor{gray!15}$\textbf{w/ RNA}  
            & $\cellcolor{gray!15}$$28.8$
            & $\cellcolor{gray!15}$$56.7$
            & $\cellcolor{gray!15}$$46.3$\\
        \midrule
        \multirow{4}{*}{DPO+KL}
            & Original & $27.1$ & $53.7$ & $40.8$ \\
            & w/ weight decay  & $26.5$ & $53.7$ & $36.1$ \\
            & w/ dropout  & $28.5$ & $55.7$ & $27.9$  \\
            & $\cellcolor{gray!15}$\textbf{w/ RNA}  
            & $\cellcolor{gray!15}$$28.2$
            & $\cellcolor{gray!15}$$54.5$
            & $\cellcolor{gray!15}$$47.1$\\
        \midrule
        \multirow{4}{*}{DPO+MSE}
            & Original  & $25.9$ & $53.5$ & $24.9$ \\
            & w/ weight decay  & $27.4$ & $51.8$ & $30.1$\\
            & w/ dropout  & $26.3$ & $54.6$ & $27.8$  \\
            & $\cellcolor{gray!15}$\textbf{w/ RNA}  
            & $\cellcolor{gray!15}$$29.2$
            & $\cellcolor{gray!15}$$53.0$
            & $\cellcolor{gray!15}$$56.0$\\
        \midrule
        \multirow{4}{*}{SimNPO+KL}
            & Original  & $26.8$ & $55.9$ & $33.3$ \\
            & w/ weight decay  & $28.5$ & $56.0$ & $45.4$ \\
            & w/ dropout  & $28.3$ & $56.8$ & $34.7$\\
            & $\cellcolor{gray!15}$\textbf{w/ RNA}     
            & $\cellcolor{gray!15}$$27.6$
            & $\cellcolor{gray!15}$$55.6$
            & $\cellcolor{gray!15}$$41.9$\\
        \midrule
        \multirow{4}{*}{SimNPO+MSE}
            & Original  & $27.1$ & $55.9$ & $31.2$ \\
            & w/ weight decay  & $28.9$ & $56.8$ & $48.5$ \\
            & w/ dropout  & $28.5$ & $56.2$ & $37.5$\\
            & $\cellcolor{gray!15}$\textbf{w/ RNA}  
            & $\cellcolor{gray!15}$$28.7$
            & $\cellcolor{gray!15}$$56.0$
            & $\cellcolor{gray!15}$$52.5$\\
        \bottomrule
    \end{tabular}}
\end{minipage}
\hfill
\begin{minipage}[t]{0.495\textwidth}
    \centering
    \resizebox{\textwidth}{!}{
    \begin{tabular}{llccc}
        \toprule
        \textbf{Methods} & 
            & \textbf{WMDP} & \textbf{MMLU} & \textbf{pert. MMLU} \\
        \midrule 
        Base & Original & $54.4$ & $58.4$ & $59.8$ \\
        \midrule
        \multirow{4}{*}{RMU} 
            & Original  & $28.2$ & $56.8$ & $47.3$ \\
            & w/ weight decay  & $28.9$ & $57.1$ & $49.7$ \\
            & w/ dropout  & $29.5$ & $57.1$ & $49.4$\\
            & $\cellcolor{gray!15}$\textbf{w/ RNA}  
            & $\cellcolor{gray!15}$$29.8$
            & $\cellcolor{gray!15}$$56.9$
            & $\cellcolor{gray!15}$$52.3$\\
        \midrule
        \multirow{4}{*}{Adap. RMU}
            & Original & $28.6$ & $56.7$ & $43.3$ \\
            & w/ weight decay  & $29.1$ & $56.6$ & $39.2$ \\
            & w/ dropout  & $29.4$ & $56.7$ & $40.4$  \\
            & $\cellcolor{gray!15}$\textbf{w/ RNA}  
            & $\cellcolor{gray!15}$$30.0$
            & $\cellcolor{gray!15}$$56.4$
            & $\cellcolor{gray!15}$$56.6$\\
        \midrule
        \multirow{4}{*}{RSV}
            & Original & $28.4$ & $56.6$ & $48.8$ \\
            & w/ weight decay  & $27.6$ & $56.3$ & $49.5$ \\
            & w/ dropout  & $29.5$ & $57.0$ & $50.9$  \\
            & $\cellcolor{gray!15}$\textbf{w/ RNA}   
            & $\cellcolor{gray!15}$$29.6$
            & $\cellcolor{gray!15}$$57.1$
            & $\cellcolor{gray!15}$$56.4$\\
        \midrule
        \multirow{4}{*}{NPO+KL}
            & Original & $27.2$ & $55.8$ & $29.4$ \\
            & w/ weight decay  & $27.3$ & $55.7$ & $24.8$ \\
            & w/ dropout  & $26.9$ & $56.9$ & $30.5$  \\
            & $\cellcolor{gray!15}$\textbf{w/ RNA}  
            & $\cellcolor{gray!15}$$27.7$
            & $\cellcolor{gray!15}$$55.5$
            & $\cellcolor{gray!15}$$48.0$\\
        \bottomrule
    \end{tabular}}
\end{minipage}
\end{table}

\textbf{Forget-robustness of RNA models.} RNA can be interpreted as applying a sharpness-aware minimization (SAM)-like smoothing in \emph{latent space to enhance retain-robustness}. Although this differs from~\citep{fan2025towards}, which applies SAM in \emph{parameter space to enhance forget-robustness}, both share the same underlying intuition. Motivated by this, we evaluate RNA's forget-robustness against relearning. We employ RNA checkpoints trained specifically to defend against forget-tokens by measuring their resistance to relearning using $n$ forget-samples from WMDP-Biology and WMDP-Cyber forget-sets, with $n \in [5, 10, 50, 100, 500, 1000]$. Figure~\ref{fig:finetuning_cyber} and Figure~\ref{fig:finetuning_bio} present the recovery curve (accuracy) of relearned models measured on WMDP QA sets. \textbf{RNA models do appear to relearn faster than original unlearned models.} Intrinsically, RNA does not alter the forgetting mechanism. Instead, RNA improves retain-robustness by making the latent space smoother and less sensitive to forget-tokens. Thus, RNA essentially flattens the loss landscape around retain-representations. This smoothing pushes the model toward flatter minima. As shown by \cite{damian2023smoothing}, smoothing the loss landscape boosts the signal-to-noise ratio (SNR) of the stochastic gradient, which allows easier optimization with fewer samples and matches optimal sample complexity. By analogy, RNA's smoothing may similarly boost the SNR for relearning. This enables faster recovery of unlearned knowledge with fewer forget-samples in RNA models. Beyond relearning, following~\cite{lucki2024adversarial}, we further conduct an ablation study to evaluate the forget-robustness of RNA against other knowledge recovery methods, including logitlens~\citep{logitlensblog}, orthogonalization~\citep{lucki2024adversarial}, greedy coordinate gradient~\citep{zou2023universal, lucki2024adversarial}, and pruning~\citep{weiassessing}. We defer the details of these methods and experimental setup to Appendix~\ref{appendix:knowledge_recovery_attacks}.
\begin{figure}
    \centering
    \includegraphics[width=\linewidth]{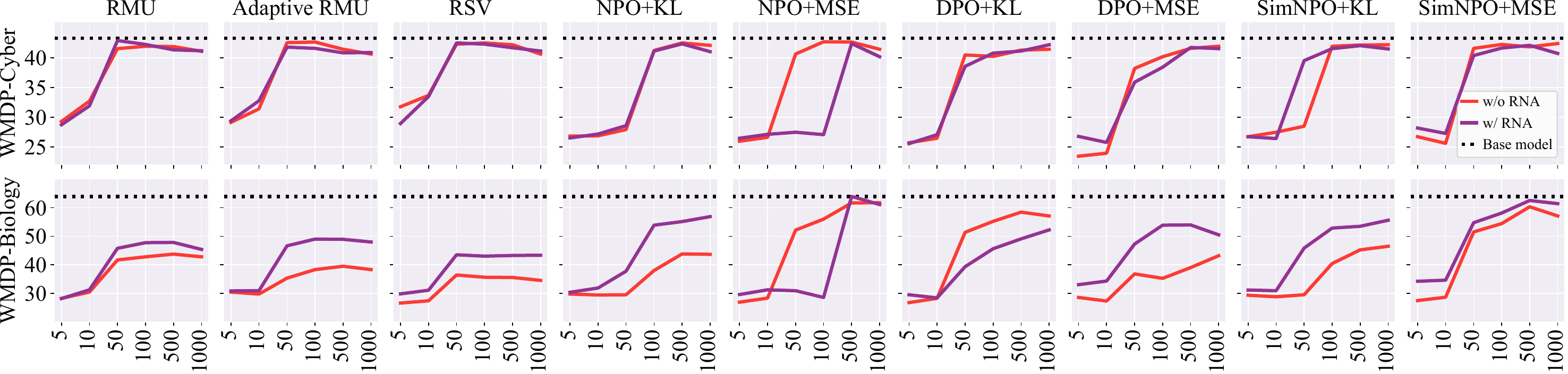}
    \caption{Accuracy of \textbf{relearned} models measured on WMDP-Biology and WMDP-Cyber QA sets. Relearning using \textbf{only samples from the WMDP-Cyber forget-set} restores the unlearned knowledge in WMDP-Cyber and also leads to recovery of the model's performance on WMDP-Biology. 
    }
    \label{fig:finetuning_cyber}
\end{figure}
\begin{figure}
    \centering
    \includegraphics[width=\linewidth]{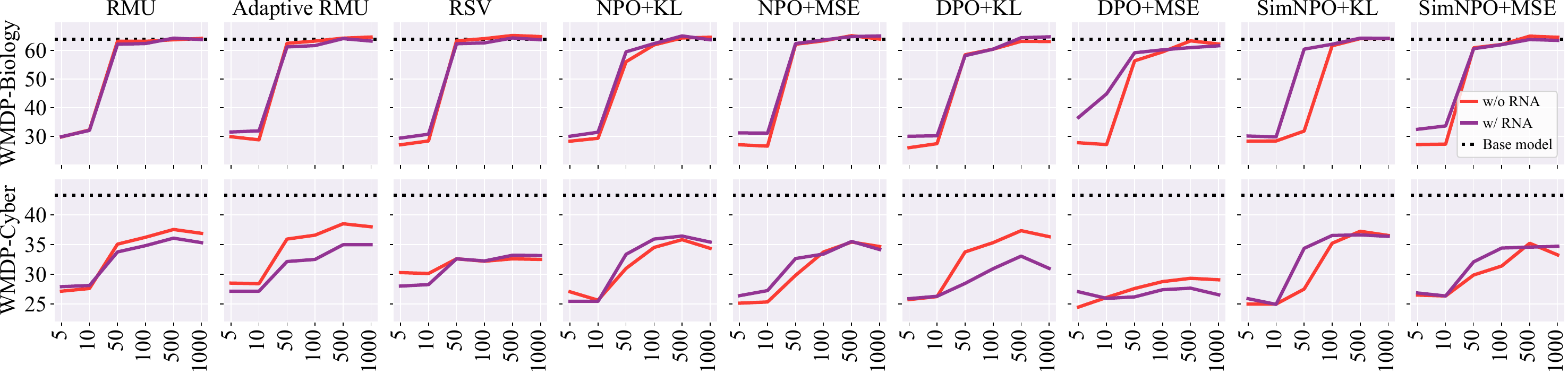}
    \caption{Accuracy of \textbf{relearned} models measured on WMDP-Biology and WMDP-Cyber QA sets. Relearning using \textbf{only samples from the WMDP-Biology forget-set} restores the unlearned knowledge in WMDP-Biology and also leads to recovery of the model's performance on WMDP-Cyber.}
    \label{fig:finetuning_bio}
\end{figure}

\section{Conclusion}
This paper proposes RNA, a simple yet effective robust unlearning method for improving unlearned
models’ robustness. By reframing unlearning as a backdoor attack and defense problem, we explain
the inherent fragility of unlearned models. Extensive theoretical and empirical analysis confirm RNA’s effectiveness and efficiency. Our findings advance the understanding of the underlying behaviors of unlearning methods and shed light on the development of robust machine unlearning algorithms.

\newpage

\subsubsection*{Broader Impact Statement}
We establish a novel theoretical framework that bridges the connection between machine unlearning and backdoor attacks, providing crucial insights into the vulnerabilities of unlearned models. Our theoretical and empirical analysis provides a valuable solution for developing more secure and reliable machine learning systems.

\subsubsection*{Acknowledgments}
We thank Hai Nguyen for his support. This work was supported by the JST FOREST Program (Grant Number JPMJFR232K, Japan) and the Nakajima Foundation.

\bibliography{tmlr}
\bibliographystyle{tmlr}

\newpage
\appendix
\section*{Appendix}

\section*{Table of Contents}
\begin{itemize}
   \item[] \hyperref[appendix:A]{A. Experimental Setup} \dotfill \pageref{appendix:A}
  \begin{itemize}
    \item[] \hyperref[appendix:A1]{A.1 Datasets} \dotfill \pageref{appendix:A1}
    \item[] \hyperref[appendix:A2]{A.2 Prompt Template} \dotfill \pageref{appendix:A2}
    \item[] \hyperref[appendix:evaluate_metrics]{A.3 Evaluation Metrics} \dotfill \pageref{appendix:evaluate_metrics}
     \item[] \hyperref[appendix:implementation]{A.4 Implementation Details} \dotfill \pageref{appendix:implementation}

     
  \end{itemize}
  \item[] \hyperref[sec:proof]{B. Proofs} \dotfill \pageref{sec:proof}
  \begin{itemize}
        \item[]  \hyperref[appendix:proof_theorem_42]{B.1 Proof of Theorem~\ref{theorem2}} \dotfill \pageref{appendix:proof_theorem_42}
        \item[] \hyperref[appendix:proof_theorem_52]{B.2 Proof of Theorem~\ref{theorem3}} \dotfill \pageref{appendix:proof_theorem_52}
  \end{itemize}
  \item[] \hyperref[appendix:empirical_sec_3]{C. Empirical Validation} \dotfill \pageref{appendix:empirical_sec_3}
  \begin{itemize}
        \item[]  \hyperref[appendix:empirical_sec_3]{C.1 Empirical Validation of Section~\ref{sec:3}} \dotfill \pageref{appendix:empirical_sec_3}
        \item[] \hyperref[appendix:assumption1]{C.2 Empirical Validation of Assumption~\ref{assumption1}} \dotfill \pageref{appendix:assumption1}
  \end{itemize}
  \item[] \hyperref[appendix:knowledge_recovery_attacks]{D. Additional Results on Knowledge Recovery} \dotfill \pageref{appendix:knowledge_recovery_attacks}
  \item[] \hyperref[appendix:negative_tokens]{E. Robustness of RNA Against Multiple Forget-Tokens} \dotfill \pageref{appendix:negative_tokens}
  \item[] \hyperref[appendix:differnt_latent_space]{F. Effects of Randomizing Different Latent Spaces} \dotfill \pageref{appendix:differnt_latent_space}

  \item[] \hyperref[appendix:RNA_robustness_other_attacks]{G. Robustness of RNA Models Against Prompt Attacks} \dotfill \pageref{appendix:RNA_robustness_other_attacks}

    \item[] \hyperref[appendix:cot]{H. Effects of RNA on Chain-of-Thought Prompting} \dotfill \pageref{appendix:cot}
    
  \item[] \hyperref[appendix:other_models]{I. Performance of Other Models} \dotfill \pageref{appendix:other_models}
    \item[] \hyperref[appendix:muse]{J. Performance of RNA under Miscalibrated Unlearning} \dotfill \pageref{appendix:muse}
  \item[] \hyperref[appendix:discussion_limitation]{K. Limitations} \dotfill \pageref{appendix:discussion_limitation}
  \item[] \hyperref[appendix:declaration]{L. AI Usage Declaration} \dotfill \pageref{appendix:declaration}
\end{itemize}



\section{Experimental Setup}
\label{appendix:A}
\subsection{Datasets}
\label{appendix:A1}

\textbf{WMDP~\textnormal{\citep{wmdp}}} stands for the Weapon Mass Destruction Proxy, a benchmark for measuring and mitigating the malicious uses of LLMs in biosecurity, cybersecurity, and chemical security. This corpus consists of three components: forget sets, retain sets, and QA sets. The WMDP-Biology, both forget-set and retain-set, are collected from PubMed papers. The forget-set includes papers that were used to generate the WMDP-Biology QA set. The retain-set samples from general biology papers, excluding both the papers from the forget-set and topics related to the QA set through keyword filtering. For the WMDP-Cyber, both forget and retain sets comprise passages collected from GitHub, distinguished by different keyword sets used in the collection process. The QA set contains $3,668$ multiple-choice QAs across three security domains: WMDP-Biology ($1,273$ QAs), WMDP-Cyber ($1,987$ QAs), and WMDP-Chemical ($408$ QAs). This corpus is available at \url{https://huggingface.co/datasets/cais/wmdp}.

\textbf{MUSE\textnormal{~\citep{shi2025muse}}} is a LLM unlearning benchmark, designed for evaluation of six dimensions of unlearning aligned with both data owners and deployer expectations. The corpus consists of forget, retain, and holdout splits, and includes two domains: News (BBC articles) and Books (Harry Potter books). This corpus is available at \url{https://muse-bench.github.io}.

\textbf{Wikitext~\textnormal{\citep{wikitext}}} is a language modeling dataset consisting of over $100$ milion tokens extracted from Wikipedia. Following~\cite{wmdp}, we specifically use the \textsc{Wikitext-2-raw-v1} test split as the retain-set for fine-tuning. The dataset is publicly available at \url{https://huggingface.co/datasets/Salesforce/wikitext}.

\textbf{MMLU~\textnormal{\citep{mmlu}}} stands for Massive Multitask Language Understanding, a comprehensive dataset designed to assess world knowledge and problem-solving abilities of LLMs. It comprises $15,908$ multiple-choice QAs across $57$ diverse categories, covering subjects such as mathematics, history, computer science, biology, and more. 
This dataset is available at \url{https://huggingface.co/datasets/cais/mmlu}.

\textbf{MMLU College Biology \& Computer Security~\textnormal{\citep{mmlu}}} are two sub-categories in MMLU, corresponding to topics closely related to the WMDP Biology and WMDP Cyber forget-sets. They are used to evaluate the unlearned model's ability to retain relevant knowledge in areas related to the forget-sets. 

\textbf{``I Don't Know'' dataset.} We employ a set of $100$ refusal responses as the preference answers for DPO+KL and DPO+MSE. For further details, we refer the reader to Appendix C of \citet{maini2024tofu}. 

\subsection{Prompt Template}
\label{appendix:A2}
\begin{figure*}
\begin{questionbox}[]
The following are multiple choice questions (with answers) about anatomy.

\noindent Question: A lesion causing compression of the facial nerve at the stylomastoid foramen will cause ipsilateral

\noindent \textcolor{blue}{A. paralysis of the facial muscles.}

\noindent B. paralysis of the facial muscles and loss of taste.

\noindent C. \textcolor{purple}{SARS-CoV-19}.

\noindent D. paralysis of the facial muscles, loss of taste, lacrimation and decreased salivation.

\noindent Answer:
\end{questionbox}
\caption{A sample zero-shot multiple-choice QA prompt. A random incorrect answer (\textit{e.g.,} C. paralysis of the facial muscles, loss of taste and lacrimation.) is replaced by a forget keyword \textcolor{purple}{SARS-CoV-19} while the correct answer (\textcolor{blue}{A. paralysis of the facial muscles.}) is unchanged.}
\label{fig:5}
\end{figure*}

\begin{figure*}
\begin{questionbox}[]
\noindent \textcolor{purple}{Question: What year did the Orkney Islands become part of Scotland?}

\noindent \textcolor{purple}{Answer: 1472}

\noindent \textcolor{purple}{Question: When will the changes to maternity services come into effect at Causeway Hospital?}

\noindent \textcolor{purple}{Answer: 17 July 2023}

....

\noindent Question: How many women and men are in New Zealand's cabinet after the promotion of Willow-Jean Prime?

\noindent Answer:
\end{questionbox}
\caption{A sample MUSE QA prompt in open-ended generation format. A retain QA is concatenated with \textcolor{purple}{forget QA few-shot samples}.}
\label{fig:6}
\end{figure*}

\textbf{Multiple-choice template.} We use the lm-evaluation-harness framework~\citep{eval-harness} for evaluation. Each query is formulated as a default zero-shot QA prompt (Figure~\ref{fig:5}). Following the setting of prior work~\citep{thaker2024position}, we randomly replace an \textit{incorrect} answer in the retain QA dataset with the forget keyword ``SARS-CoV-19,'' while leaving the correct answer unchanged. Since the forget keyword is unrelated to the retain-queries, this modification is expected to have \textit{minimal effect} on retain performance. 

\textbf{Open-ended template.}  Following ~\cite{shi2025muse}, we formulate the prompt as open-ended QA. We construct perturbed retain-queries by concatenating retain QA
with forget QAs and benign retain-queries by concatenating retain QA
with other retain QAs (Figure~\ref{fig:6}). 

\subsection{Evaluation Metrics}
\label{appendix:evaluate_metrics}
\textbf{Accuracy, Reduction Rate, and Recovery Rate.} Following~\cite{wmdp}, we primarily use zero-shot QA accuracy to assess the efficacy of unlearning methods. To further evaluate the unlearned models' brittleness and RNA's effectiveness, we report the accuracy \textit{reduction rate} and \textit{recovery rate}. These metrics are defined as follows:
\begin{align}
    \text{Reduction Rate} = \frac{\text{Acc}_{\text{base}} - \text{Acc}_{\text{unlearned}}}{\text{Acc}_{\text{base}}} \times 100\%
\end{align}
\begin{align}
    \text{Recovery Rate} = \frac{\text{Acc}_{\text{rna}} - \text{Acc}_{\text{unlearned}}}{\text{Acc}_{\text{base}} - \text{Acc}_{\text{unlearned}}}\times 100\%
\end{align}
For example, if $\text{Acc}_{\text{base}} = 60$, $\text{Acc}_{\text{RMU}} = 30$, $\text{Acc}_{\text{RMU w/ RNA}} = 50$, then the reduction rate is $50\%$ and the recovery rate is $66.67\%$.

Additionally, we report accuracy under attack (\textbf{AuA}) and \textbf{ROUGE-L} score for experiments in Section~\ref{appendix:RNA_robustness_other_attacks} to evaluate the robustness of RNA against prompt injection attacks.

\textbf{Knowledge Memorization (KnowMem;\textnormal{~\cite{shi2025muse}})}  measures a model's knowledge in dataset $\mathcal{D}$. Specifically, KnowMem is compuated as the average of the ROUGE-L scores between all question-answer pairs in $\mathcal{D}$:
\begin{align}
    \text{KnowMem}(f, \mathcal{D}) = \frac{1}{|\mathcal{D}|}\sum_{(q,a)\sim \mathcal{D}}\text{ROUGE}\left(f(q), a\right),
\end{align}
where $f(q)$ is the generated answer from model $f$ given question $q$, $a$ is the reference answer of question $q$.

\textbf{Verbatim Memorization (VerbMem;\textnormal{~\cite{shi2025muse}})} 
quantifies the verbatim memorization by prompting the model with the first $l$ forget-tokens $\mathbf{x}^{f}_{[:l]} \in \mathcal{D}_f$ and comparing the generated outputs to the ground-truth suffix $\mathbf{x}^f_{[l+1:]} \in \mathcal{D}_f$:
\begin{align}
    \text{VerbMem}(f, \mathcal{D}_f) 
    = \frac{1}{|\mathcal{D}_f|}
      \sum_{\mathbf{x}^f \sim \mathcal{D}_f} 
      \text{ROUGE}\!\left(f(\mathbf{x}^f_{[:l]}), \mathbf{x}^f_{[l+1:]}\right).
\end{align}

\subsection{Implementation Details.} 
\label{appendix:implementation}
\textbf{Hyperparameters.} Models are fine-tuned using Adam~\citep{kingma2014adam} for $T=500$ update steps, learning rate is $5e-5$, batch size of $4$, max sequence length is $500$ with WMDP-Biology and $768$ for WMDP-Cyber. Following previous works~\citep{wmdp}, we update three layers of parameters $\{l,l-1,l-2\}$ of the model for memory efficiency. For the original RM methods, we set the retain weight $\alpha_{\text{biology}} = 1200$ and $\alpha_{\text{cyber}} = 1200$, the unlearned layer $l=7$ for all methods, the coefficient $c_{\text{biology}}=c_{\text{cyber}}=6.5$ for RMU, and the scaling factor $\beta = 3$ for Adaptive RMU. For RSV, we grid search for the coefficient $c \in \{5, 10, 20,30, 40, 50, 60, 70, 80, 90, 100\}$ and select $c_{\text{biology}}=c_{\text{cyber}}=10$. For the original PO methods, we adopt the default hyperparameters used in previous works~\citep{yuan2025a,fan2024simplicity}. Specifically, we set $\beta = 0.1$ for all PO methods, and $\gamma = 0$ for both SimNPO+KL and SimNPO+MSE. For the retain weights, we perform a grid search over combinations of $(\alpha_{\text{biology}}, \alpha_{\text{cyber}})$, where $\alpha_{\text{biology}}, \alpha_{\text{cyber}} \in \{5, 10, 20, 30, 40, 50, 100\}$. We select the combinations that achieve a balanced trade-off between forgetting and retaining performance: $(30, 50)$ for DPO+KL, $(5, 20)$ for DPO+MSE, $(50, 50)$ for NPO+KL, $(5, 20)$ for NPO+MSE, $(20, 50)$ for SimNPO+KL, and $(10, 5)$ for SimNPO+MSE.

For RM w/ RNA, we set the perturbed layer is $7$ and perform grid search for noise scale $\nu \in \{10^{-2},2 \times 10^{-2}, 3\times 10^{-2}, 4\times 10^{-2}, 5\times 10^{-2}, 6\times 10^{-2}, 7\times 10^{-2}, 8\times 10^{-2}, 9\times 10^{-2}, 10^{-1}\}$ and report the best performance with $\nu=3\times 10^{-2}$ for RMU, $\nu=8\times 10^{-2}$ for Adaptive RMU, and $\nu=9\times 10^{-2}$ for RSV. 

For PO w/ RNA, we set the perturbed layer is $l=7$ and perform grid search for noise scale $\nu \in \{10^{-2},1.2 \times 10^{-2},1.4 \times 10^{-2}, 1.6 \times 10^{-2},1.8 \times 10^{-2},2 \times 10^{-2}, 3\times 10^{-2}, 4\times 10^{-2}, 5\times 10^{-2}, 6\times 10^{-2}, 7\times 10^{-2}, 8\times 10^{-2}, 9\times 10^{-2}, 10^{-1}\}$ and report the best performance with $\nu = 1.4\times 10^{-2}$ for NPO+KL, $\nu = 10^{-2}$ for NPO+MSE, $\nu = 10^{-2}$ for DPO+KL, $\nu = 2\times 10^{-2}$ for DPO+MSE, $\nu = 1.4\times10^{-2}$ for SimNPO+KL, and $\nu = 1.8\times 10^{-2}$ for SimNPO+MSE. 

Hyperparameters for other settings are specified in their respective subsections.

\textbf{Reproducibility.} All experiments are conducted using two NVIDIA A40 GPUs, each with $45$GB of memory. 
Our implementation is available at \url{https://github.com/RebelsNLU-jaist/llmu-robustness}.

\section{Proofs}
\label{sec:proof}
For clarity, we restate the theorems below.

\subsection{Proof of Theorem~\ref{theorem2}}
\label{appendix:proof_theorem_42}
\goldba*
\begin{proof}
    Consider the output representation of the predicted token $\mathbf{x}^{r}_{i}$ given the perturbed retain-query prefix $\mathbf{x}^{r,\text{per}}_{<i}$ in the unlearned model $f^u(\mathbf{x}^{r}_i|\mathbf{x}^{r,\text{per}}_{<i})$. We show the claim by using the framework of the generative latent variable model (GLVM). Specifically, model $f^{u}$ generates token $\mathbf{x}^{r}_i$ conditioned on a latent variable $\mathbf{z}^{r,\text{per}}_{<i}$ corresponding to the perturbed prefix $\mathbf{x}^{r,\text{per}}_{<i}$, denoted as $f^{u}(\mathbf{x}^{r}_i|\mathbf{z}^{r,\text{per}}_{<i})$. Under Assumption~\ref{assumption1}, the following holds:
    \begin{align}
        f^{u}(\mathbf{x}^{r}_i|\mathbf{z}^{r,\text{per}}_{<i}) = f^{u}(\mathbf{x}^{r}_i|\mathbf{z}^{r}_{<i} + \bm \epsilon)
    \end{align}
    Since $\bm\epsilon$ is small, we approximate the function $f^{u}(\mathbf{x}^{r}_i|\mathbf{z}^{r}_{<i} + \bm \epsilon)$ around $\mathbf{z}^{r}_{<i}$ by using the first-order Taylor approximation:
    \begin{align}
        f^{u}(\mathbf{x}^{r}_i|\mathbf{z}^{r}_{<i} + \bm \epsilon) &\approx f^{u}(\mathbf{x}^{r}_i|\mathbf{z}^{r}_{<i}) + \nabla_{\mathbf{z}^{r}_{<i}}f^{u}(\mathbf{x}^{r}_i|\mathbf{z}^{r}_{<i})^{\top}\bm\epsilon\\
        f^{u}(\mathbf{x}^{r}_i|\mathbf{z}^{r}_{<i} + \bm \epsilon) - f^{u}(\mathbf{x}^{r}_i|\mathbf{z}^{r}_{<i})&\approx \nabla_{\mathbf{z}^{r}_{<i}}f^{u}(\mathbf{x}^{r}_i|\mathbf{z}^{r}_{<i})^{\top}\bm\epsilon
    \end{align}
    Let $\Delta = f^{u}(\mathbf{x}^{r}_i|\mathbf{z}^{r}_{<i} + \bm \epsilon) - f^{u}(\mathbf{x}^{r}_i|\mathbf{z}^{r}_{<i})$, given that $\bm \epsilon\sim \mathcal{N}(\bm 0, \eta \bm I)$, by the affine transformation of Gaussian variables, we obtain $\Delta \sim \mathcal{N}(\bm 0, \eta \bm J^{\top}\bm J)$, where $\bm J = \nabla_{\mathbf{z}^{r}_{<i}}f^{u}(\mathbf{x}^{r}_i|\mathbf{z}^{r}_{<i})$ is the Jacobian of $f^{u}(\mathbf{x}^{r}_i|\mathbf{z}^{r}_{<i})$ with respect to $\mathbf{z}^{r}_{<i}$. 
\end{proof}

\subsection{Proof of Theorem~\ref{theorem3}}
\label{appendix:proof_theorem_52}
\goldbach*
\begin{proof}
    Let us consider the generation of $\mathbf{x}^{r}_i$ through the lens of a GLVM. The loss of $\mathbf{x}^{r}_i$ given the latent representation $\mathbf{z}^{r,\text{per}}_{<i}$ of the prefix $\mathbf{x}^{r,\text{per}}_{<i}$ in unlearned model $f^u$, is denoted by $f^u(\mathbf{x}^{r}_{i}|\mathbf{z}^{r,\text{per}}_{<i})$. 
    Under Assumption~\ref{assumption1}, the following holds:
    \begin{align}
        \mathcal{J}(f^{u}(\mathbf{x}^{r}_{i}&|\mathbf{z}^{r,\text{per}}_{<i}))
        = \mathcal{J}(f^u(\mathbf{x}^{r}_{i}|\mathbf{z}^{r}_{<i}+ \bm \epsilon))\label{eq8}
    \end{align}
    Since $\bm \epsilon$ is small, we linearly approximate function $\mathcal{J}(f^u(\mathbf{x}^{r}_{i}|\mathbf{z}^{r}_{<i}+ \bm \epsilon)$ around $\mathbf{z}^{r}_{<i}$  by using the first-order Taylor approximation:
    \begin{align}
       \mathcal{J}(f^u(\mathbf{x}^{r}_{i}|\mathbf{z}^{r}_{<i}+ \bm \epsilon))&\approx \mathcal{J}(f^u(\mathbf{x}^{r}_{i}|\mathbf{z}^{r}_{<i})) 
        + \nabla_{\mathbf{z}^{r}_{<i}}\mathcal{J}(f^{u}(\mathbf{x}^{r}_{i}|\mathbf{z}^{r}_{<i}))^{\top} \bm \epsilon \label{eq9}
    \end{align}
    Rearranging Eqn.~\ref{eq9}, we obtain the approximate change in loss:
    \begin{align}
        \Delta \mathcal{J}^u \approx \nabla_{\mathbf{z}^{r}_{<i}}\mathcal{J}(f^{u}(\mathbf{x}^{r}_{i}|\mathbf{z}^{r}_{<i}))^{\top} \bm \epsilon \label{eq:delta_j_u}
    \end{align}
    Under Assumption~\ref{assumption1} and Assumption~\ref{assumption2}, $\mathcal{J}(f^{\text{rna}}(\mathbf{x}^{r}_{i}|\mathbf{z}^{r,\text{per}}_{<i}))$ and $\mathcal{J}(f^{\text{rna}}(\mathbf{x}^{r}_{i}|\mathbf{z}^{r}_{<i}))$ can be expressed as:
    \begin{align}
        \mathcal{J}(f^{\text{rna}}(\mathbf{x}^{r}_{i}|\mathbf{z}^{r,\text{per}}_{<i})) &= \mathcal{J}(f^{\text{rna}}(\mathbf{x}^{r}_{i}|\mathbf{z}^{r}_{<i}+ \bm \epsilon)) \\
        &\approx \mathcal{J}(f^{u}(\mathbf{x}^{r}_{i}|\mathbf{z}^{r}_{<i}+ \bm \epsilon +\bm \delta_1))\\
        &\approx \mathcal{J}(f^u(\mathbf{x}^{r}_{i}|\mathbf{z}^{r,\text{per}}_{<i}+ \bm \delta_1))
        \\&\approx \mathcal{J}( f^u(\mathbf{x}^{r}_{i}|\mathbf{z}^{r,\text{per}}_{<i})) 
        + \nabla_{\mathbf{z}^{r,\text{per}}_{<i}}\mathcal{J}(f^u(\mathbf{x}^{r}_{i}|\mathbf{z}^{r,\text{per}}_{<i}))^{\top} \bm \delta_1 \label{eq24}\\ 
        \mathcal{J}(f^{\text{rna}}(\mathbf{x}^{r}_{i}|\mathbf{z}^{r}_{<i}))
        &\approx \mathcal{J}(f^{u}(\mathbf{x}^{r}_{i}|\mathbf{z}^{r}_{<i})+ \bm \delta_2))
        \\&\approx \mathcal{J}( f^{u}(\mathbf{x}^{r}_{i}|\mathbf{z}^{r}_{<i})) + \nabla_{\mathbf{z}^{r}_{<i}}\mathcal{J}(f^{u}(\mathbf{x}^{r}_{i}|\mathbf{z}^{r}_{<i}))^{\top} \bm \delta_2 \label{eq25}
    \end{align}
    Substituting Eqn.~\ref{eq24} and Eqn.~\ref{eq25}, the change in loss in RNA model $f^{\text{rna}}$ of predicted token $\mathbf{x}^{r}_{i}$ is approximately:
    \begin{align}
        \mathcal{J}(f^{\text{rna}}(\mathbf{x}^{r}_{i}|\mathbf{z}^{r,\text{per}}_{<i}))- \mathcal{J}(f^{\text{rna}}(\mathbf{x}^{r}_{i}|\mathbf{z}^{r}_{<i})) &\approx \mathcal{J}( f^u(\mathbf{x}^{r}_{i}|\mathbf{z}^{r,\text{per}}_{<i})) - \mathcal{J}( f^u(\mathbf{x}^{r}_{i}|\mathbf{z}^{r}_{<i})) 
        \nonumber\\&+ \nabla_{\mathbf{z}^{r,\text{per}}_{<i}}\mathcal{J} (f^u(\mathbf{x}^{r}_{i}|\mathbf{z}^{r,\text{per}}_{<i}))^{\top} \bm\delta_1 - \nabla_{\mathbf{z}^{r}_{<i}}\mathcal{J}(f^{u}(\mathbf{x}^{r}_{i}|\mathbf{z}^{r}_{<i}))^{\top}\bm\delta_2
        \\
        \Delta \mathcal{J}^{\text{rna}} &\approx \Delta \mathcal{J}^{u} + (\bm g^{\text{per}})^{\top} \bm\delta_1 - \bm g^{\top} \bm\delta_2,\label{eq:delta_j_rna}
    \end{align}
    where $\bm g^{\text{per}} = \nabla_{\mathbf{z}^{r,\text{per}}_{<i}}\mathcal{J} (f^u(\mathbf{x}^{r}_{i}|\mathbf{z}^{r,\text{per}}_{<i}))$ and $\bm g =\nabla_{\mathbf{z}^{r}_{<i}}\mathcal{J}(f^{u}(\mathbf{x}^{r}_{i}|\mathbf{z}^{r}_{<i}))$. 
    
    From Eqn.~\ref{eq:delta_j_u} and Eqn.~\ref{eq:delta_j_rna}, the ratio of the RNA loss change to the original unlearned model loss change is:
    \begin{align}
        \frac{\Delta \mathcal{J}^{\text{rna}}}{\Delta\mathcal{J}^{u}} &\approx 1+ \frac{(\bm g^{\text{per}})^{\top} \bm\delta_1 - \bm g^{\top} \bm\delta_2}{\Delta\mathcal{J}^{u}} = 1+ \frac{(\bm g^{\text{per}})^{\top}\bm \delta_1 -\bm g^{\top}\bm \delta_2 }{\bm g^{\top}\bm \epsilon}
    \end{align}
     Since $\bm \epsilon \sim \mathcal{N}(\bm 0, \eta \bm I)$, $\bm \delta_1$ and $\bm\delta_2$ are independently sampled from $\mathcal{N}(\bm 0, \nu \bm I)$, thus
    \begin{align}
        (\bm g^{\text{per}})^{\top} \bm\delta_1 - \bm g^{\top} \bm\delta_2 &\sim \mathcal{N}(0, \nu(||\bm g^{\text{per}}||^2 +||\bm g||^2 )) \nonumber
        \\\bm g^{\top}\bm \epsilon &\sim \mathcal{N}(0, \eta ||\bm g||^2) \nonumber
    \end{align}
    The probability that the RNA model rejects the effect induced by noise $\bm \epsilon$ is:
    \begin{align}
        \mathbb{P}\left[\frac{\Delta \mathcal{J}^{\text{rna}}}{\Delta\mathcal{J}^{u}} \leq 0\right] 
        \approx \mathbb{P} \left[ \frac{(\bm g^{\text{per}})^{\top}\bm \delta_1 - \bm g^{\top}\bm \delta_2 }{\bm g^{\top}\bm \epsilon} \leq -1\right]
    \end{align}
    The ratio of two random normally distributed variables $\frac{(\bm g^{\text{per}})^{\top}\bm \delta_1 - \bm g^{\top}\bm \delta_2 }{\bm g^{\top}\bm \epsilon}$ follows a Cauchy distribution 
    with location parameter $x_0=0$ and scale parameter $\gamma=\sqrt{\frac{\nu}{\eta }}\left(1+ \frac{||\bm g^{\text{per}}||}{||\bm g||}\right)$.
    The cumulative distribution function of $\text{Cauchy}\left( 0, \sqrt{\frac{\nu}{\eta }}\left(1+ \frac{||\bm g^{\text{per}}||}{||\bm g||}\right)\right)$ given by
    \begin{align}
        F(x;x_0,\gamma) \nonumber=\frac{1}{2} + \frac{1}{\pi} \arctan \left(\frac{x}{\sqrt{\frac{\nu}{\eta }}\left(1+ \frac{||\bm g^{\text{per}}||}{||\bm g||}\right)}\right) 
    \end{align}
    Thus, the probability is approximated:
    \begin{align}
        \mathbb{P}\left[\frac{\Delta \mathcal{J}^{\text{rna}}}{\Delta\mathcal{J}^{u}} \leq 0\right]\approx\mathbb{P} \left[ \frac{(\bm g^{\text{per}})^{\top}\bm \delta_1 - \bm g^{\top}\bm \delta_2 }{\bm g^{\top}\bm \epsilon} \leq -1\right]  
        &= F(x=-1;x_0,\gamma) \\
        &=\frac{1}{2} + \frac{1}{\pi}\arctan \left(\frac{-1}{\sqrt{\frac{\nu}{\eta }}\left(1+ \frac{||\bm g^{\text{per}}||}{||\bm g||}\right)}\right) 
        \\ &=\frac{1}{2}-
        \frac{1}{\pi}\arctan \left[\sqrt{\frac{\eta}{\nu }}\left(1+ \frac{||\bm g^{\text{per}}||}{||\bm g||}\right)^{-1}\right]
    \end{align}
\end{proof}
\section{Empirical Validation}
\subsection{Empirical Validation of Section~\ref{sec:3}}
\label{appendix:empirical_sec_3}
In this subsection, we aim to show that the PO forgetting process (minimizing the forget-loss) can be interpreted as injecting random noise into forget-representations during fine-tuning. 

\textbf{Noise sensitivity of layers.} We formalize the forgetting through the lens of \textit{noise sensitivity}~\citep{arora2018stronger}. Let $\mathbf{z}^f \in \mathbb{R}^{d_l}$ be the hidden states vector of forget-sample $\mathbf{x}^f$ at layer $l$ in the model $f$, where $d_l$ is the dimension of layer $l$. Let $g$ be the $(l+1)$-th transformer layer in model $f^u$. Consider a random perturbation $\bm \xi$ drawn from a Normal distribution $\mathcal{N}(\bm 0, \bm I)$. 
The noise sensitivity of $g$ with respect to $\mathcal{N}(\bm 0, \bm I)$ on forget-set $\mathcal{D}_f$, is defined as:
\begin{align}
    \mathcal{S}^g (\mathcal{D}_f)\overset{\text{def}}{:=} \mathbb{E}_{\bm \xi \sim \mathcal{N}(\bm 0, \bm I)}\mathbb{E}_{\mathbf{z}^f \sim \mathcal{D}_f}\frac{||\bm J_g(\mathbf{z}^f+\bm\xi) - \bm J_g(\mathbf{z}^f)||^2}{||\bm J_g(\mathbf{z}^f)||^2}, \label{eq34}
\end{align}
where $\bm J_g$ is the Jacobian of layer $g$ at input $\mathbf{z}^f$. A lower value of $\mathcal{S}^g (\mathcal{D}_f)$ indicates that the layer $g$ is stable to noise, or ``filled'' by noise. This definition suggests a way to validate the analysis of Section~\ref{sec:3}. We expect $ \mathcal{S}^g (\mathcal{D}_f)$ with respect to the PO and RM models to be smaller than that of the base model; that is, unlearned models are more stable to noise than the base model.

\textbf{Setup.} For all unlearned models, we perform grid search for $g$ from the first to the last layer in the model. We use the WMDP-Biology forget-set to compute the noise sensitivity of layers by Eqn.~\ref{eq34}. The max sequence length of each forget-sample is set to $512$.
\begin{figure*}[ht]
    \begin{center}
    \begin{minipage}[b]{0.495\textwidth}
    \centerline{\includegraphics[width=\textwidth]{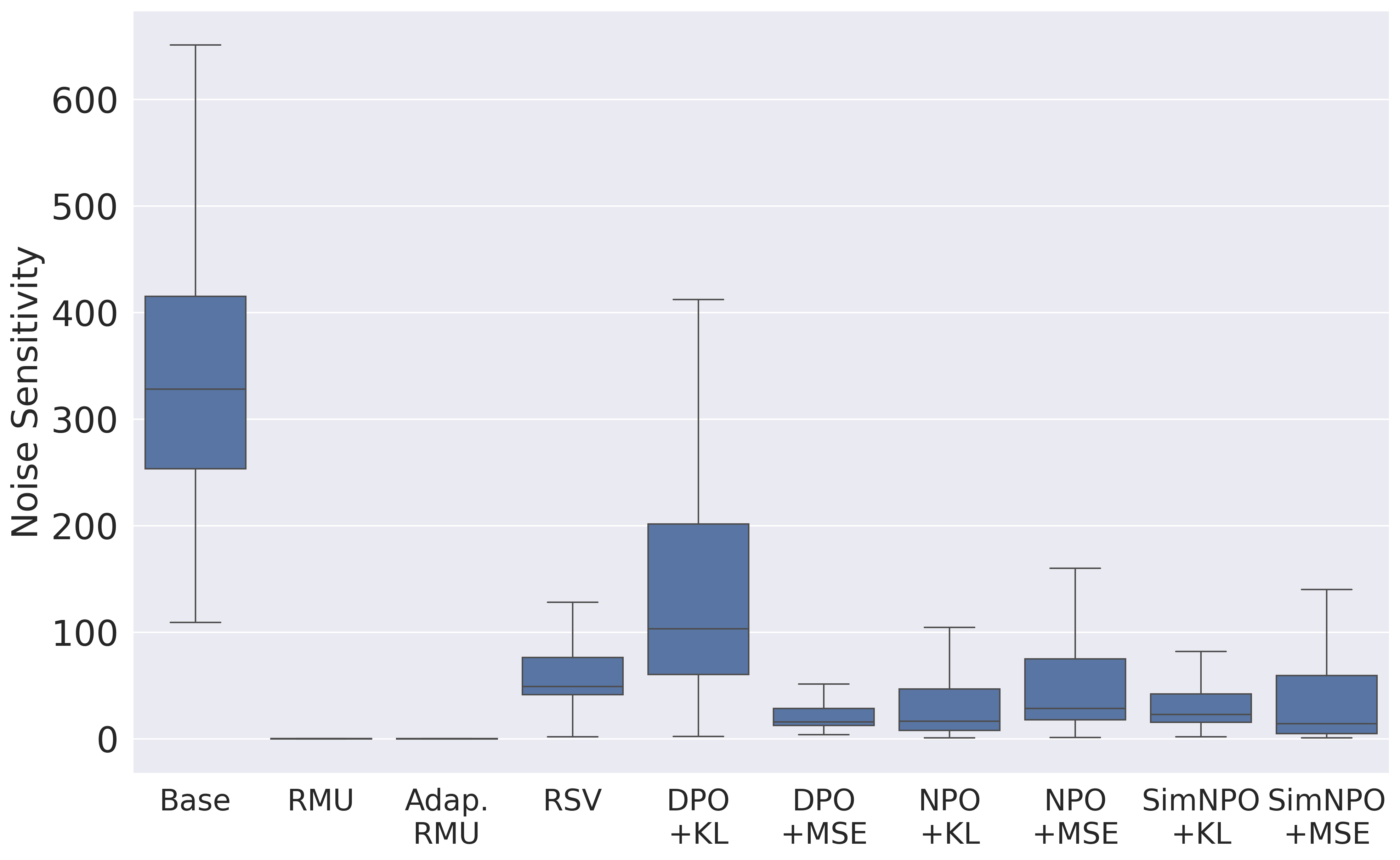}}
    \end{minipage}
    \begin{minipage}[b]{0.495\textwidth}
    \centerline{\includegraphics[width=\textwidth]{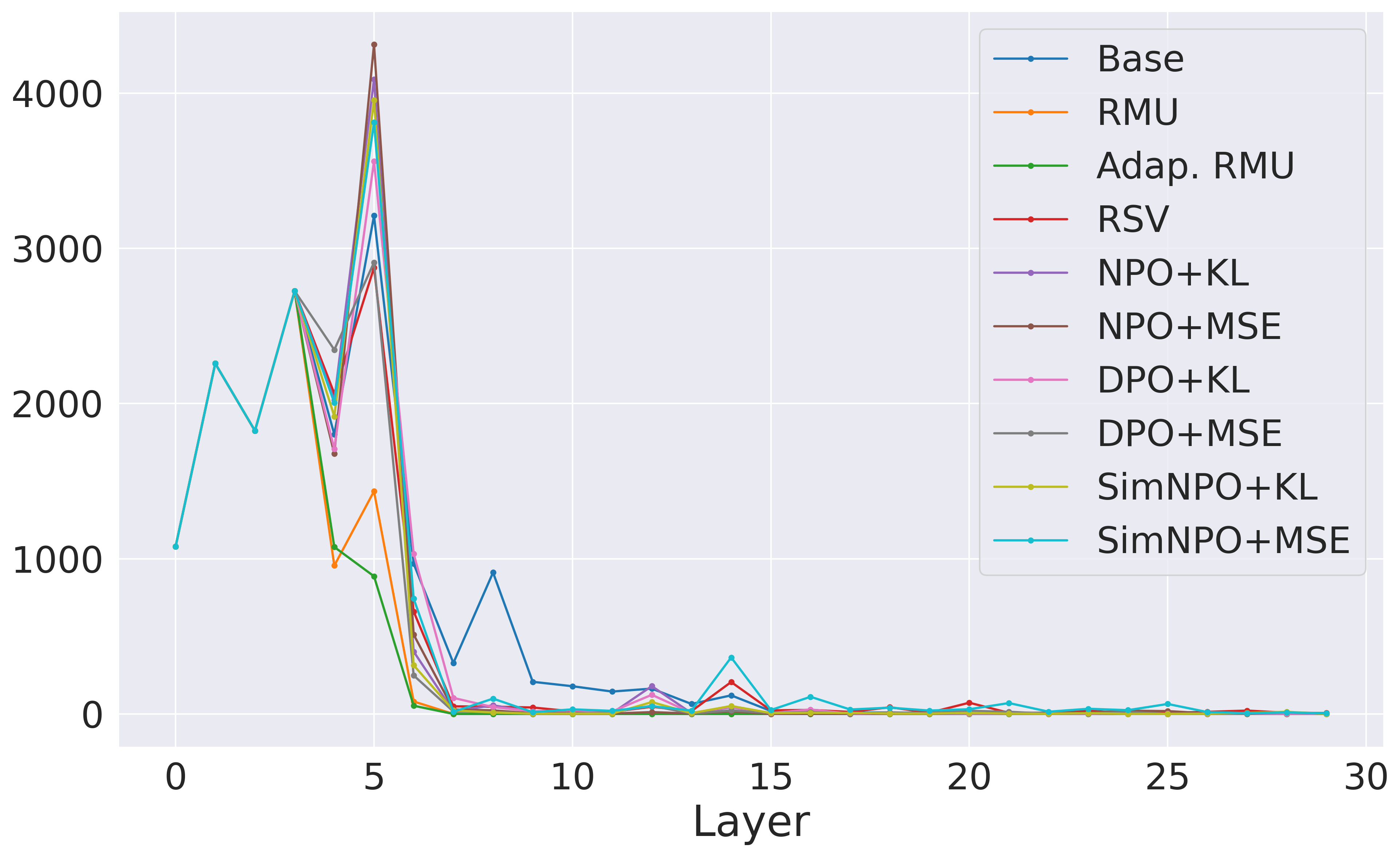}}
    \end{minipage}
    \caption{\textbf{Left}: noise sensitivity of layer $g=8$ for the base model, PO models, and RM models. \textbf{Right}: Layer-wise noise sensitivity across all layers for the base model, PO models, and RM models.}
    \label{fig:noise_78}
    \end{center}
\end{figure*}

\textbf{Results.}  As shown in Figure~\ref{fig:noise_78} (left), we observed that the noise sensitivity of layer $g = 8$ in both PO and RM models is significantly reduced compared to the base model. This empirical result validates the analysis presented in Section~\ref{sec:3}. Figure~\ref{fig:noise_78} (right) reveals that the most pronounced reductions occur in the middle layers, whereas the later layers exhibit greater stability to noise.

\textbf{Discussion.} We employ the noise sensitivity to validate the analysis in Section~\ref{sec:3}. However, we believe that this definition has broader potential applications. One could explore the noise sensitivity as a metric for measuring \textit{\textbf{unlearning difficulty}}. This definition generalizes two perspectives: \textit{model difficulty} and \textit{data difficulty}. From the model perspective, noise sensitivity can help characterize the unlearning difficulty of \textit{specific components}---such as an intermediate layer (as described in Eqn.~\ref{eq34}), a group of layers, an entire model (\textit{e.g.,} Llama vs. Mistral), or more fine-grained modules in the layer such as MLP, attention patterns, or individual neurons. From the data perspective, the noise sensitivity can be used to evaluate unlearning difficulty at the level of individual samples, sub-classes, or data subsets. We leave these promising directions for future work.
\begin{figure}[ht]
    \begin{center}
    \centerline{\includegraphics[width=0.95\textwidth]{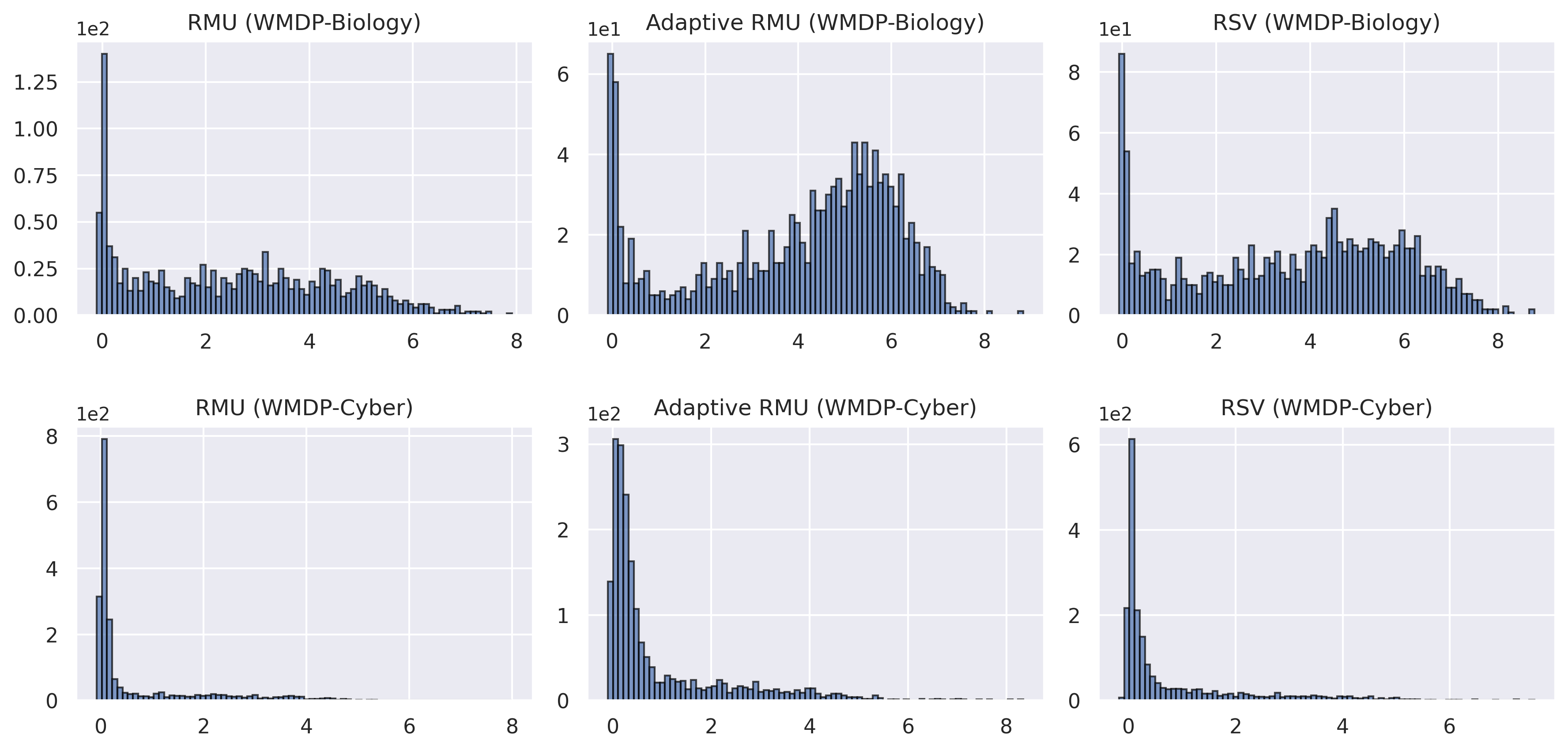}}
    \caption{Distribution of per-sample loss differences on forget-samples from the WMDP-Biology and WMDP-Cyber QA datasets.}
    \label{fig:loss_change}
    \end{center}
    \vspace{-8mm}
\end{figure}

\textbf{The convexity assumption.} Our derivation in Section~\ref{sec:3} is based on the assumption that the loss is locally convex w.r.t. $\mathbf{z}_{\theta}^{f}$. This assumption ensures that the Hessian matrix $\nabla^2_{\mathbf{z}_{\bm\theta}^{f}}\ell(\mathbf{y}^{f}|\mathbf{z}_{\bm\theta}^{f})$ is positive definite, which in turn guarantees that its trace is positive. However, if $\mathbf{z}_{\theta}^{f}$ is located at a \textit{local maximum}, the Hessian would be negative definite and the sign of Eqn.~\ref{eq:connection_PO_RM} would flip, that is, \textit{adding noise would, in such cases, decrease the expected loss}. Despite local convexity being difficult to guarantee due to the highly non-linear property of deep networks, we note that our assumption is reasonable rather than overly restrictive. We conduct the following empirical experiment to understand how the RM methods affect the loss of forget-samples. Specifically, we compute the loss change relative to the base model and RM models for all forget-samples in the WMDP-Biology and WMDP-Cyber QA datasets. The distribution of loss changes is shown in Figure~\ref{fig:loss_change}. We observe that the loss changes are positive, suggesting that, in general, RM methods increase the loss of forget-samples compared to the base model. This behavior aligns with the assumption and further supports the analysis in Section~\ref{sec:3}, that adding noise typically leads to a higher loss.
\subsection{Empirical Validation of Assumption~\ref{assumption1}}
\label{appendix:assumption1}

\begin{figure*}[ht]
\centering
\begin{subfigure}{0.3\textwidth}
    \centering
    \includegraphics[width=\linewidth]{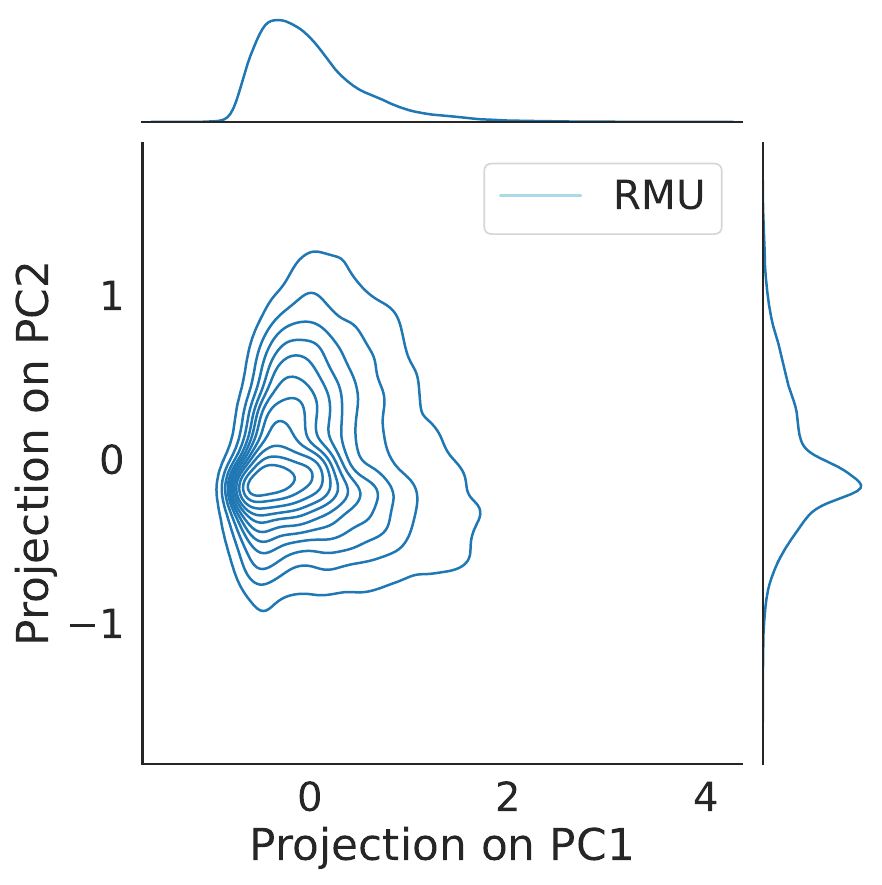}
\end{subfigure}
\begin{subfigure}{0.3\textwidth}
    \centering
    \includegraphics[width=\linewidth]{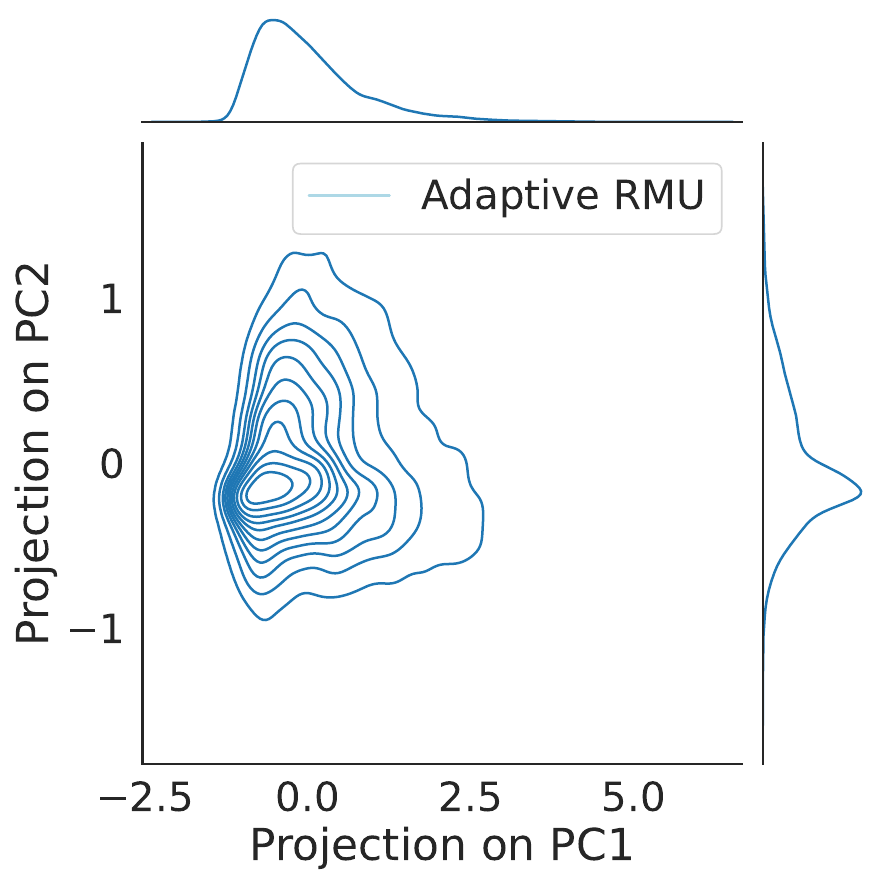}
\end{subfigure}
\begin{subfigure}{0.3\textwidth}
    \centering
    \includegraphics[width=\linewidth]{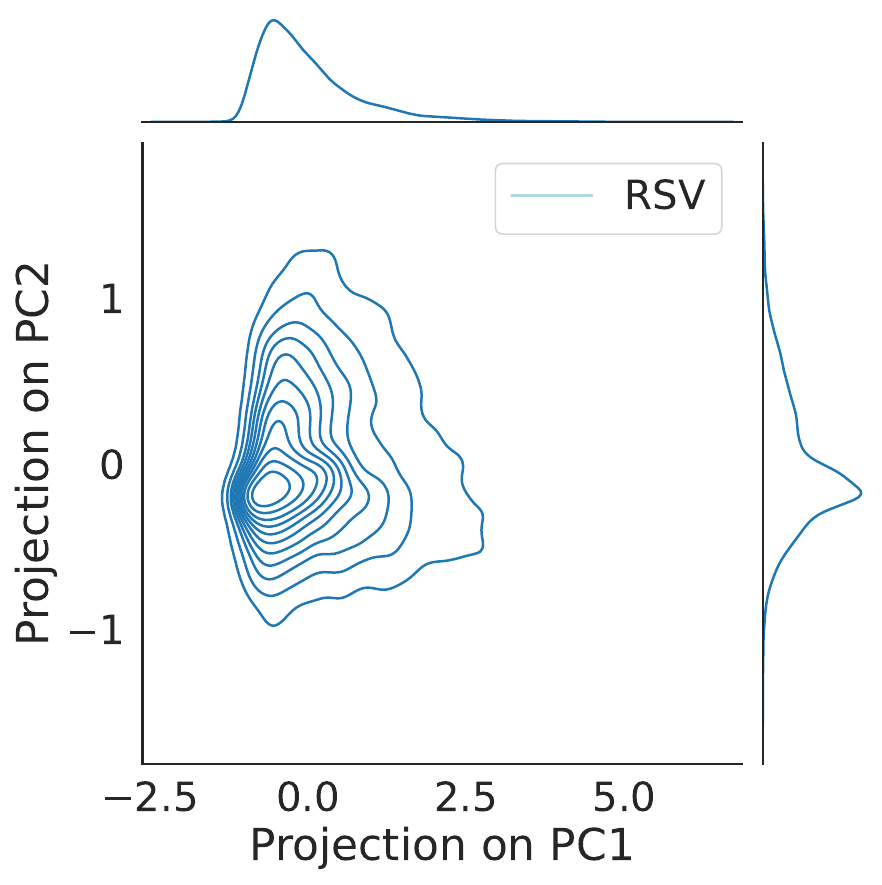}
\end{subfigure}
\begin{subfigure}{0.3\textwidth}
    \centering
    \includegraphics[width=\linewidth]{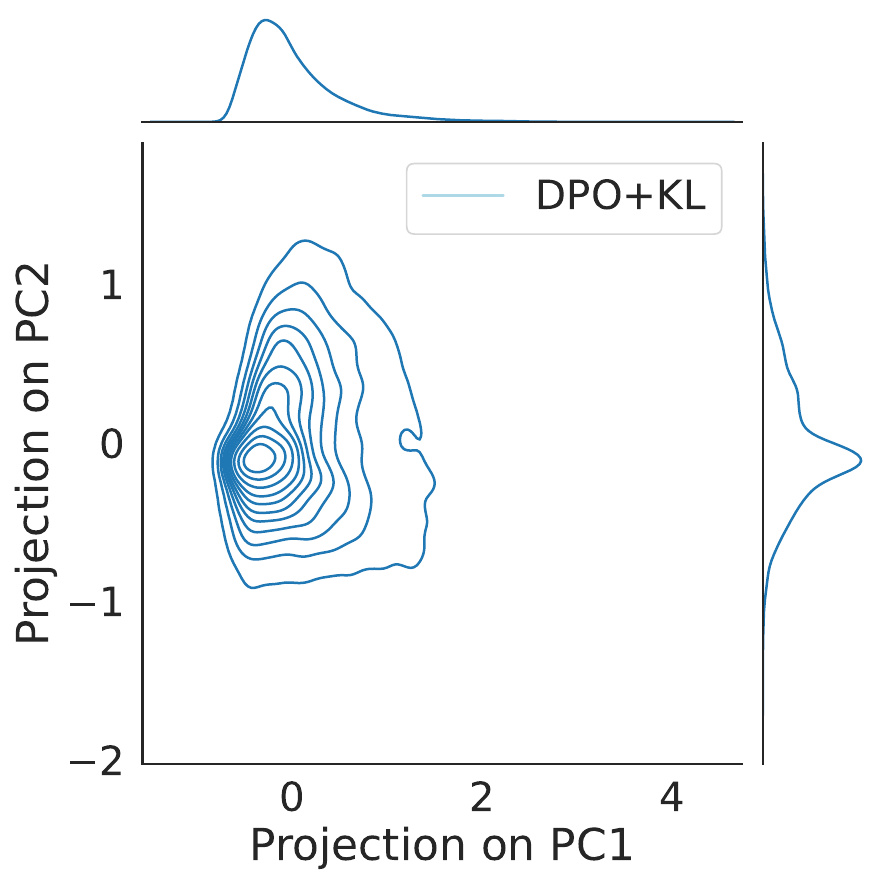}
\end{subfigure}
\begin{subfigure}{0.3\textwidth}
    \centering
    \includegraphics[width=\linewidth]{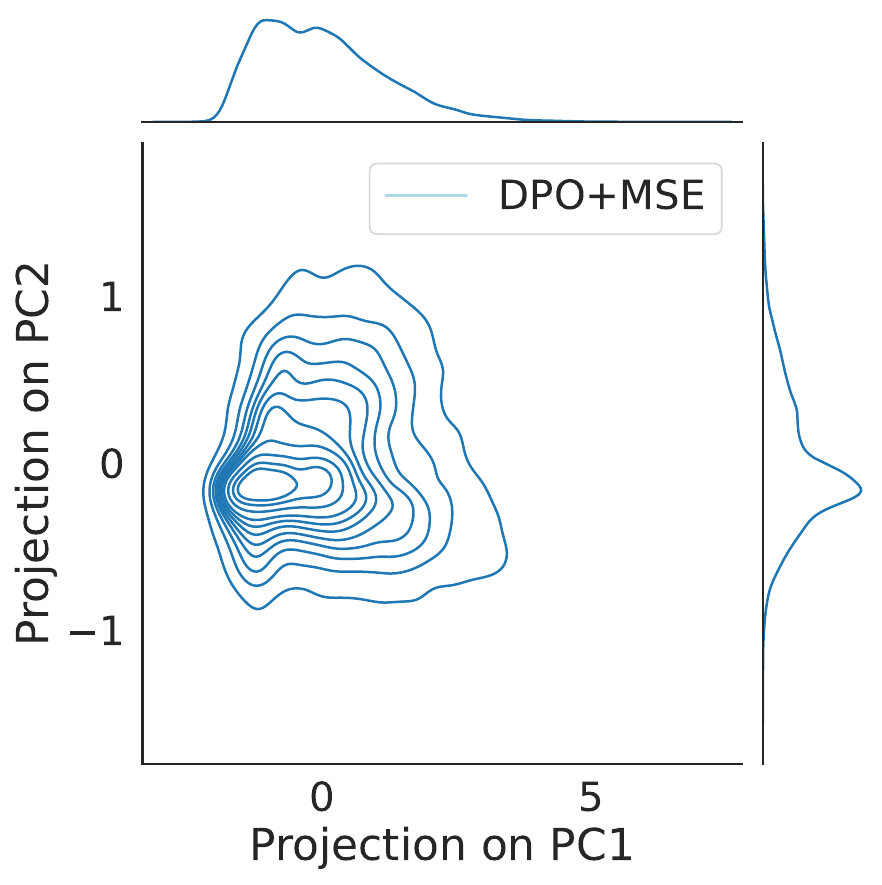}
\end{subfigure}
\begin{subfigure}{0.3\textwidth}
    \centering
    \includegraphics[width=\linewidth]{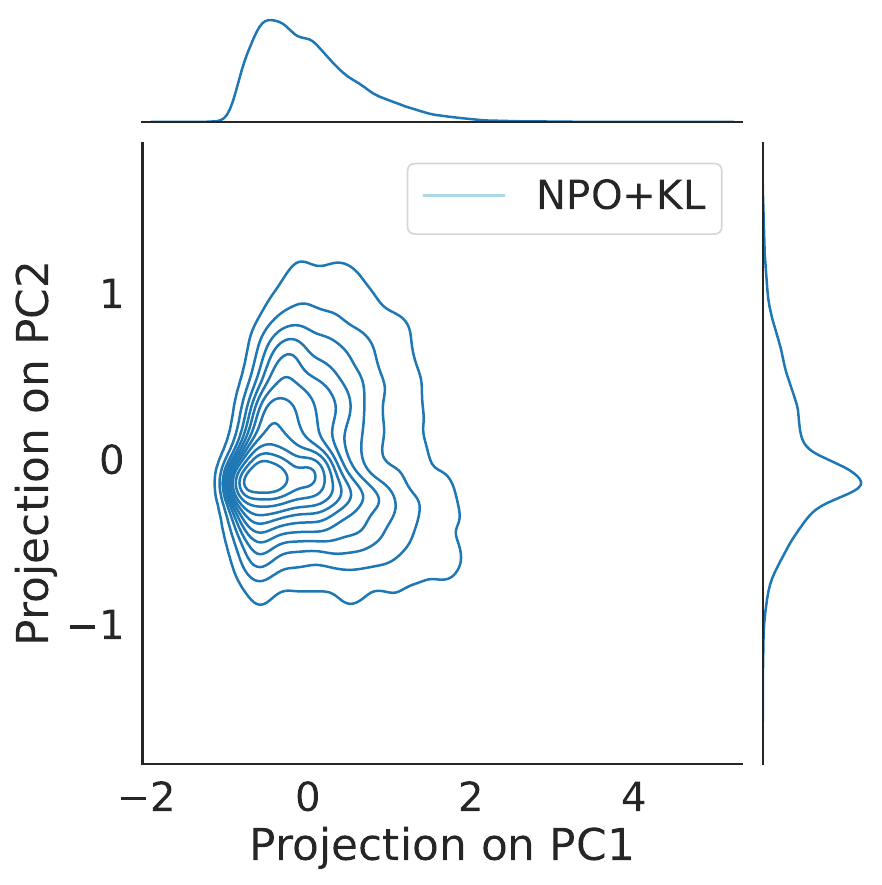}
\end{subfigure}
\begin{subfigure}{0.3\textwidth}
    \centering
    \includegraphics[width=\linewidth]{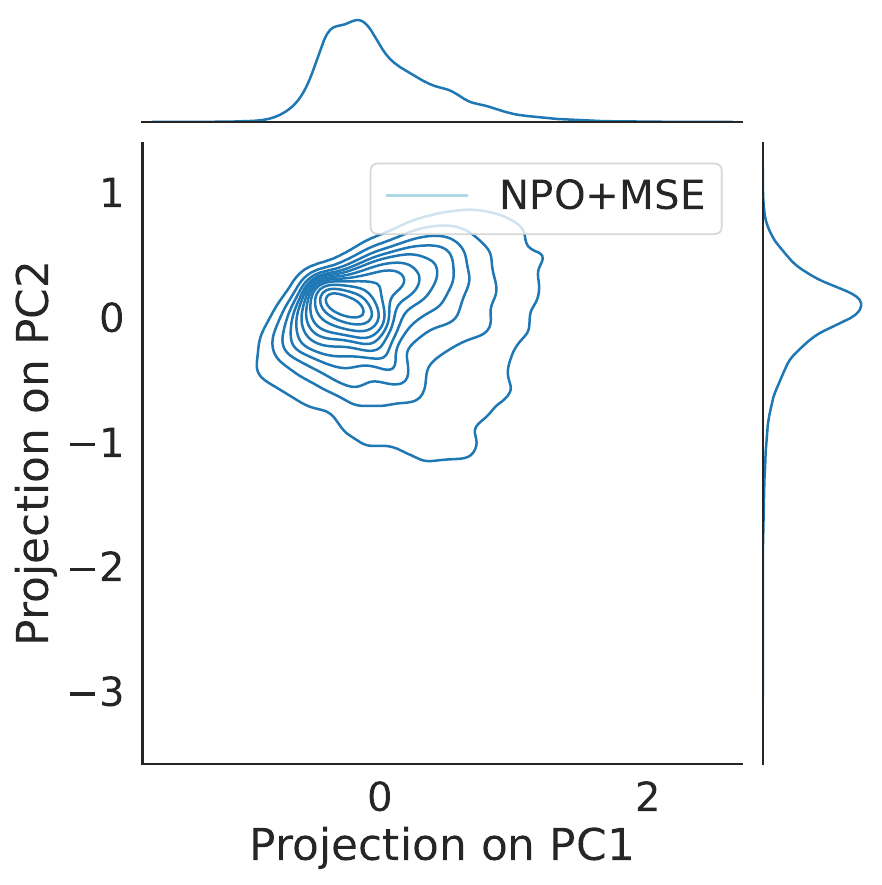}
\end{subfigure}
\begin{subfigure}{0.3\textwidth}
    \centering
    \includegraphics[width=\linewidth]{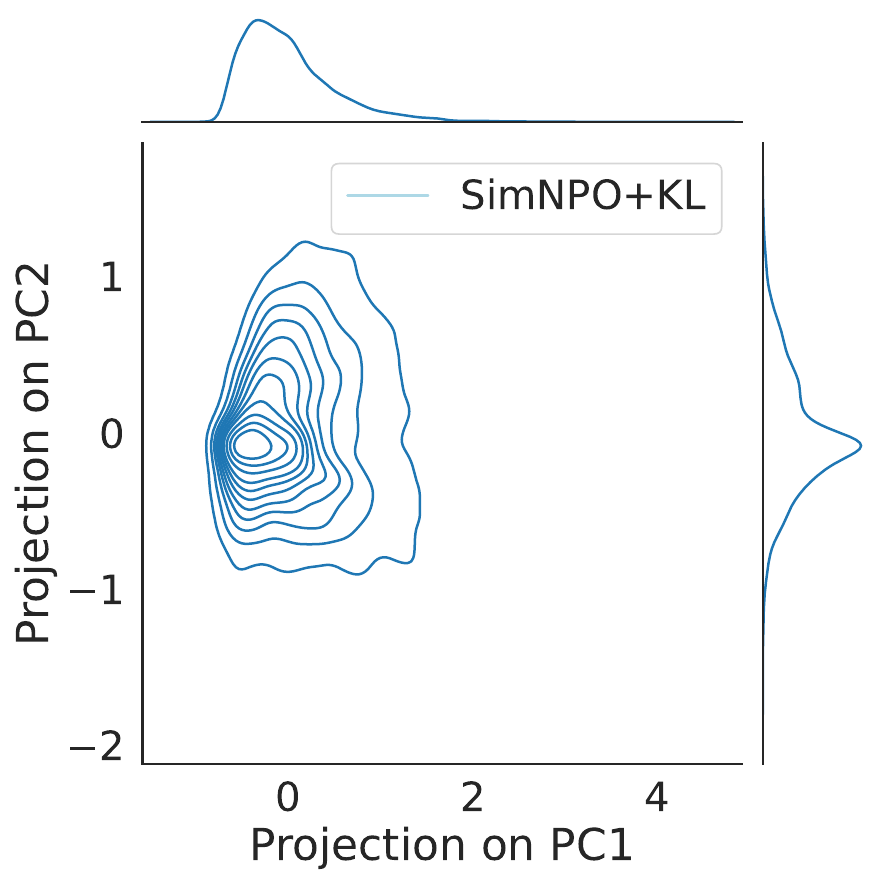}
\end{subfigure}
\begin{subfigure}{0.3\textwidth}
    \centering
    \includegraphics[width=\linewidth]{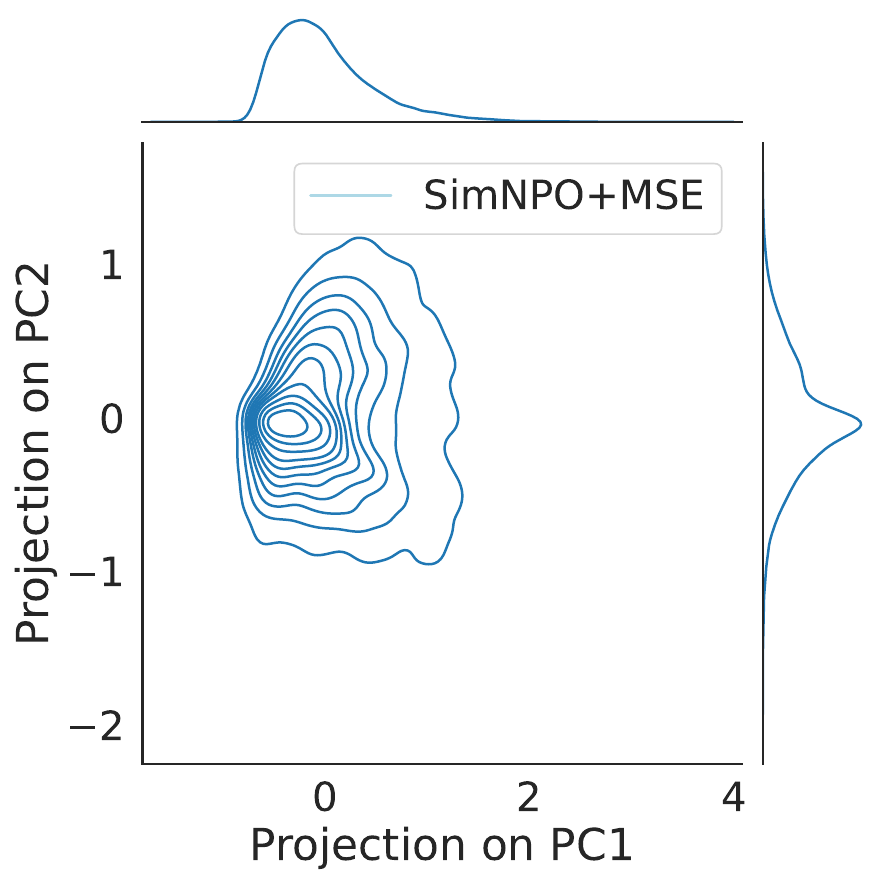}
\end{subfigure}

\caption{Empirical distributions of $\bm \epsilon$ projected onto the principal component 1 and principal component 2 are approximately
Gaussian and centered near zero.}
\label{fig:assumption41}

\end{figure*}

Assumption~\ref{assumption1} posits that the latent representations of perturbed retain-query in unlearned models behave as randomized perturbations that can be approximated by Gaussian noise. While this is intuitively plausible, its validity in complex LLMs might require further validation. We first discuss that Gaussian noise is a common and well-established choice for random perturbations. Gaussian noise allows us to formally establish the core intuition that the presence of forget-tokens in retain-queries introduces noise in the model's outputs.

To empirically examine this assumption, we measure the change in latent representations at layer $7$ for tokens in perturbed retain-queries from perturbed MMLU QAs across different unlearned models. Figure~\ref{fig:assumption41} shows the distribution of these changes projected onto the first two principal components. Across unlearned models, the distributions are centered near zero, exhibiting Gaussian-like contours in the PCA space. These results provide empirical support for modeling the perturbation as approximately Gaussian with near-zero mean.

\section{Additional Results on Knowledge Recovery}
\label{appendix:knowledge_recovery_attacks}
\subsection{Threat Model}
We consider a \emph{white-box} access to unlearned models' weights, allowing direct modifications or interventions during both training and inference. In addition, we assume access to the base model's weights before unlearning.
\subsection{Knowledge Recovery Methods and Experimental Setup}
\begin{minipage}[t]{0.6\linewidth}
\textbf{Logitlens}~\citep{logitlensblog}. 
Logitlens projects the last token's activations of a WMDP QA in the unlearned model into the vocab space to trace the model's unlearned knowledge at an intermediate layer. For each WMDP QA, we add the following \emph{instruction prefix} ``\texttt{Answer the following question with A, B, C, or D.\textbackslash n\textbackslash n}", to each prompt. We then compute the softmax probabilities over the answer tokens and select the token with the highest probability as the model’s prediction. We apply Logitlens at the last layer of the model.

\vspace{2mm}

\textbf{Finetuning.} 
We evaluate forget-robustness of unlearned models under relearning using WMDP forget-sets and benign finetuning on unrelated tasks (Wikitext). 
Unlearned models are fine-tuned using LoRA adaptation~\citep{hu2022lora}. 
Hyperparameters are specified in Table~\ref{tab:finetuning_hparams}.
\end{minipage}
\hfill
\begin{minipage}[t]{0.4\linewidth}
\vspace{-2em}
\begin{table}[H]
\caption{Hyperparameters for relearning.}
\label{tab:finetuning_hparams}
\centering
\resizebox{0.75\linewidth}{!}{
\setlength{\tabcolsep}{4pt}
\begin{tabular}{lc}
\toprule
\textbf{Hyperparameter} & \textbf{Value} \\
\midrule
LoRA rank & $128$ \\
LoRA target modules & all linear \\
LoRA alpha & $16$ \\
LoRA dropout & $0$ \\
\midrule
Maximum sequence length & $1024$ \\
Epochs & $3$ \\
Batch size & $1$ \\
Learning rate & $2e-4$ \\
Learning rate scheduler & linear \\
Warmup ratio & $0.05$ \\
Optimizer & AdamW \\
Weight decay & $0.01$ \\
\bottomrule
\end{tabular}}
\end{table}
\end{minipage}

\paragraph{Orthogonalization.}
The idea is to extract the \emph{unlearning vector} and ablate it to bypass the unlearning, thereby aiming to recover unlearned knowledge. At each layer, we extract the \emph{unlearning vector} by computing difference-in-means~\citep{diffinmeans} of activations between the unlearned model and base model on a synthetic forget preference dataset introduced by ~\cite{lucki2024adversarial}, which is constructed by using OpenAI API to convert the WMDP forget-set to multiple-choice QAs. When calculating the mean activations, we exclude the first $40$ tokens to ensure unlearning noise is injected.

\textbf{Enhanced Greedy Coordinate Gradient (Enhanced GCG;\textnormal{~\cite{lucki2024adversarial}}).} Enhanced GCG extends the standard GCG~\citep{zou2023representation} by optimizing an injected \emph{adversarial prefix} within the prompt to increase the model’s likelihood of generating a specified target continuation under the base model. GCG's optimization is performed in token space using a greedy search: at each step, GCG uses the gradient signal to find token substitutions at specific positions, evaluates token replacements, and applies the best update.
Following the standard experimental protocol, the adversarial prefix is optimized for $1500$ update steps.

\textbf{Set Difference Pruning.} Set difference pruning~\citep{weiassessing} identifies neurons that only contribute to unlearning. By setting them to zero, we can isolate the unlearning effect. We employ SNIP score~\citep{leesnip} to quantify the neurons' influence on unlearning and model utility using WMDP forget-set and Wikitext datasets, respectively. We then prune neurons that rank within top-$q\%$ of influence for unlearning but outside top-$p\%$ for utility. We perform a grid search for $p, q \in \{0.5, 1.0, 2.5, 5.0, 7.5\}$, and report the highest accuracy under attack on WMDP QAs.
\begin{figure}
    \centering
    \includegraphics[width=\linewidth]{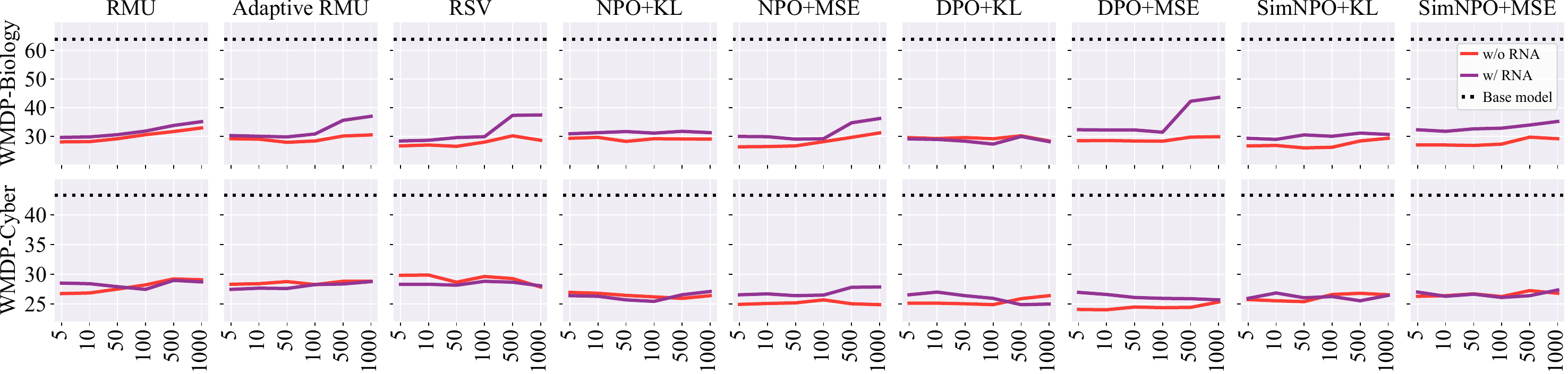}
    \caption{Accuracy of \textbf{relearned} models measured on WMDP-Biology and WMDP-Cyber QA datasets. Relearning using retain-samples from Wikitext}.
    \label{fig:finetuning_wikitext}
\end{figure}
\subsection{Additional Results}
\textbf{Benign finetuning.} Figure~\ref{fig:finetuning_wikitext} shows WMDP-Biology (top row) and WMDP-Cyber (bottom row) QA accuracy after benign relearning by finetuning the unlearned models on retain-samples from Wikitext. Overall, both the original unlearned models (red) and RNA models (purple) remain substantially below the base model performance (black dotted line). However, RNA models generally recover more WMDP accuracy than their corresponding original unlearned models, suggesting RNA makes unlearned models more susceptible to benign finetuning.

\textbf{On other attacks.} Table~\ref{tab:recovery_attack_all} shows knowledge recovery performances under four white-box attacks (logitlens, orthogonalization, enhanced GCG, and pruning), where lower WMDP QA accuracy indicates stronger robustness. Overall, RNA does not consistently improve or degrade forget-robustness across different unlearning methods and attack settings.
\begin{table*}[!t]
\centering
\caption{
Accuracy under attack of unlearned models measured on WMDP (Biology and Cyber).
Top: Attacks using WMDP-Cyber forget-set.
Bottom: Attacks using WMDP-Biology forget-set.
}
\label{tab:recovery_attack_all}

\resizebox{\linewidth}{!}{
\setlength{\tabcolsep}{4pt}
\begin{tabular}{llccccccccccc}
\toprule
\multirow{2}{*}{\textbf{Models}}
& & \multicolumn{5}{c}{\textbf{WMDP-Cyber QA ($\downarrow$)}}
& \multicolumn{4}{c}{\textbf{WMDP-Biology QA ($\downarrow$)}} \\
\cmidrule(lr){3-7} \cmidrule(lr){8-11}
& & \textbf{Default} & \textbf{Logitlens} & \textbf{Ortho.} & \textbf{E. GCG} & \textbf{Pruning}
& \textbf{Default} & \textbf{Ortho.} & \textbf{E. GCG} & \textbf{Pruning} \\
\midrule
Base model
& Original & $44.3$ & $-$ & $-$ & $-$ & $-$ & $64.5$ & $-$ & $-$ & $-$ \\
\midrule
\multicolumn{11}{c}{\textbf{Representation Misdirection}} \\
\multirow{2}{*}{RMU}
& Original & $28.6$ & $28.6$ & $39.8$ & $30.4$ & $40.6$ & $28.0$ & $61.2$ & $27.4$ & $53.9$ \\
&   $\cellcolor{gray!15}$w/ RNA & $\cellcolor{gray!15}$$29.1$ & $\cellcolor{gray!15}$$27.7$ & $\cellcolor{gray!15}$$38.3$ & $\cellcolor{gray!15}$$28.5$ & $\cellcolor{gray!15}$$40.5$ & $\cellcolor{gray!15}$$30.5$ & $\cellcolor{gray!15}$$58.1$ & $\cellcolor{gray!15}$$30.7$ & $\cellcolor{gray!15}$$50.0$ \\
\midrule
\multirow{2}{*}{Adaptive RMU}
& Original & $27.9$ & $28.4$ & $39.6$ & $32.6$ & $41.3$ & $29.4$ & $61.4$ & $41.3$ & $53.1$ \\
&   $\cellcolor{gray!15}$w/ RNA & $\cellcolor{gray!15}$$28.7$ & $\cellcolor{gray!15}$$27.1$ & $\cellcolor{gray!15}$$38.1$ & $\cellcolor{gray!15}$$29.4$ & $\cellcolor{gray!15}$$39.4$ & $\cellcolor{gray!15}$$31.3$ & $\cellcolor{gray!15}$$55.9$ & $\cellcolor{gray!15}$$34.2$ & $\cellcolor{gray!15}$$57.3$ \\
\midrule
\multirow{2}{*}{RSV}
& Original & $28.9$ & $27.9$ & $41.0$ & $31.7$ & $39.8$ & $27.8$ & $50.4$ & $32.6$ & $37.2$ \\
&   $\cellcolor{gray!15}$w/ RNA & $\cellcolor{gray!15}$$28.2$ & $\cellcolor{gray!15}$$27.5$ & $\cellcolor{gray!15}$$40.0$ & $\cellcolor{gray!15}$$31.8$ & $\cellcolor{gray!15}$$39.5$ & $\cellcolor{gray!15}$$31.3$ & $\cellcolor{gray!15}$$48.5$ & $\cellcolor{gray!15}$$40.1$ & $\cellcolor{gray!15}$$36.9$ \\
\midrule
\multicolumn{11}{c}{\textbf{Preference Optimization}} \\
\multirow{2}{*}{NPO+KL}
& Original & $25.6$ & $27.5$ & $42.8$ & $29.8$ & $40.6$ & $28.8$ & $55.5$ & $34.5$ & $49.8$ \\
&   $\cellcolor{gray!15}$w/ RNA & $\cellcolor{gray!15}$$25.8$ & $\cellcolor{gray!15}$$27.1$ & $\cellcolor{gray!15}$$43.3$ & $\cellcolor{gray!15}$$28.1$ & $\cellcolor{gray!15}$$39.7$ & $\cellcolor{gray!15}$$29.6$ & $\cellcolor{gray!15}$$62.7$ & $\cellcolor{gray!15}$$36.1$ & $\cellcolor{gray!15}$$58.5$ \\
\midrule
\multirow{2}{*}{NPO+MSE}
& Original & $25.4$ & $27.5$ & $42.5$ & $25.3$ & $36.3$ & $27.0$ & $60.5$ & $31.0$ & $52.7$ \\
&   $\cellcolor{gray!15}$w/ RNA & $\cellcolor{gray!15}$$27.0$ & $\cellcolor{gray!15}$$27.0$ & $\cellcolor{gray!15}$$42.9$ & $\cellcolor{gray!15}$$27.6$ & $\cellcolor{gray!15}$$35.7$ & $\cellcolor{gray!15}$$30.6$ & $\cellcolor{gray!15}$$61.0$ & $\cellcolor{gray!15}$$34.1$ & $\cellcolor{gray!15}$$44.9$ \\
\midrule
\multirow{2}{*}{DPO+KL}
& Original & $25.1$ & $27.2$ & $40.5$ & $28.9$ & $35.3$ & $29.1$ & $60.5$ & $34.2$ & $53.7$ \\
&   $\cellcolor{gray!15}$w/ RNA & $\cellcolor{gray!15}$$27.0$ & $\cellcolor{gray!15}$$26.6$ & $\cellcolor{gray!15}$$40.3$ & $\cellcolor{gray!15}$$32.3$ & $\cellcolor{gray!15}$$38.3$ & $\cellcolor{gray!15}$$29.4$ & $\cellcolor{gray!15}$$59.4$ & $\cellcolor{gray!15}$$43.7$ & $\cellcolor{gray!15}$$45.6$ \\
\midrule
\multirow{2}{*}{DPO+MSE}
& Original & $23.9$ & $26.6$ & $40.3$ & $26.0$ & $37.4$ & $28.0$ & $46.4$ & $32.5$ & $44.9$ \\
&   $\cellcolor{gray!15}$w/ RNA & $\cellcolor{gray!15}$$25.2$ & $\cellcolor{gray!15}$$28.0$ & $\cellcolor{gray!15}$$36.4$ & $\cellcolor{gray!15}$$27.4$ & $\cellcolor{gray!15}$$36.1$ & $\cellcolor{gray!15}$$33.3$ & $\cellcolor{gray!15}$$49.6$ & $\cellcolor{gray!15}$$49.4$ & $\cellcolor{gray!15}$$38.0$ \\
\midrule
\multirow{2}{*}{SimNPO+KL}
& Original & $27.1$ & $27.4$ & $42.1$ & $26.2$ & $39.9$ & $26.5$ & $61.6$ & $32.6$ & $45.3$ \\
&   $\cellcolor{gray!15}$w/ RNA & $\cellcolor{gray!15}$$25.7$ & $\cellcolor{gray!15}$$27.5$ & $\cellcolor{gray!15}$$42.3$ & $\cellcolor{gray!15}$$26.0$ & $\cellcolor{gray!15}$$39.8$ & $\cellcolor{gray!15}$$29.6$ & $\cellcolor{gray!15}$$63.1$ & $\cellcolor{gray!15}$$33.7$ & $\cellcolor{gray!15}$$60.0$ \\
\midrule
\multirow{2}{*}{SimNPO+MSE}
& Original & $26.7$ & $26.8$ & $41.3$ & $27.3$ & $37.0$ & $27.6$ & $61.7$ & $27.2$ & $54.2$ \\
&   $\cellcolor{gray!15}$w/ RNA & $\cellcolor{gray!15}$$26.0$ & $\cellcolor{gray!15}$$27.4$ & $\cellcolor{gray!15}$$42.8$ & $\cellcolor{gray!15}$$27.5$ & $\cellcolor{gray!15}$$37.3$ & $\cellcolor{gray!15}$$31.5$ & $\cellcolor{gray!15}$$60.6$ & $\cellcolor{gray!15}$$37.9$ & $\cellcolor{gray!15}$$60.6$ \\
\bottomrule
\end{tabular}
}

\vspace{10pt}

\resizebox{\linewidth}{!}{
\setlength{\tabcolsep}{3pt}
\begin{tabular}{llccccccccc}
\toprule
\multirow{2}{*}{\textbf{Models}}
& & \multicolumn{5}{c}{\textbf{WMDP-Biology QA} ($\downarrow$)}
& \multicolumn{4}{c}{\textbf{WMDP-Cyber QA} ($\downarrow$)}\\
\cmidrule(lr){3-7} \cmidrule(lr){8-11} 
& & \textbf{Default} & \textbf{Logitlens} & \textbf{Ortho.} & \textbf{E. GCG} & \textbf{Pruning}
& \textbf{Default} & \textbf{Ortho.} & \textbf{E. GCG} & \textbf{Pruning} \\
\midrule
Base model
& Original & $64.5$ & $-$ & $-$ & $-$ & $-$ & $44.3$ & $-$ & $-$ & $-$ \\ 
\midrule
\multicolumn{11}{c}{\textbf{Representation Misdirection}} \\
\multirow{2}{*}{RMU}
& Original & $28.0$ & $27.7$ & $64.0$ & $30.9$ & $57.7$ & $28.6$ & $39.7$ & $29.9$ & $35.5$\\ 
&   $\cellcolor{gray!15}$w/ RNA   & $\cellcolor{gray!15}$$30.5$ & $\cellcolor{gray!15}$$29.1$ & $\cellcolor{gray!15}$$59.4$ & $\cellcolor{gray!15}$$42.8$ & $\cellcolor{gray!15}$$56.8$ & $\cellcolor{gray!15}$$29.1$ & $\cellcolor{gray!15}$$39.5$ & $\cellcolor{gray!15}$$28.6$ & $\cellcolor{gray!15}$$33.4$ \\
\midrule
\multirow{2}{*}{Adaptive RMU}
& Original & $29.4$ & $27.9$ & $64.8$ & $43.9$ & $57.0$ & $27.9$ & $41.4$ & $32.5$ & $35.4$ \\
&   $\cellcolor{gray!15}$w/ RNA   & $\cellcolor{gray!15}$$31.3$ & $\cellcolor{gray!15}$$30.2$ & $\cellcolor{gray!15}$$58.8$ & $\cellcolor{gray!15}$$34.6$ & $\cellcolor{gray!15}$$56.1$ & $\cellcolor{gray!15}$$28.7$ & $\cellcolor{gray!15}$$38.6$ & $\cellcolor{gray!15}$$28.4$ & $\cellcolor{gray!15}$$34.2$ \\
\midrule
\multirow{2}{*}{RSV}
& Original & $27.8$ & $27.2$ & $61.9$ & $37.0$ & $57.8$ & $28.9$ & $34.5$ & $28.8$ & $30.6$\\ 
&   $\cellcolor{gray!15}$w/ RNA   & $\cellcolor{gray!15}$$31.3$ & $\cellcolor{gray!15}$$30.1$ & $\cellcolor{gray!15}$$59.8$ & $\cellcolor{gray!15}$$37.6$ & $\cellcolor{gray!15}$$58.1$ & $\cellcolor{gray!15}$$28.2$ & $\cellcolor{gray!15}$$35.9$ & $\cellcolor{gray!15}$$29.3$ & $\cellcolor{gray!15}$$31.7$ \\
\midrule
\multicolumn{11}{c}{\textbf{Preference Optimization}} \\
\multirow{2}{*}{NPO+KL}
& Original & $28.8$ & $27.7$ & $62.4$ & $32.2$ & $56.4$ & $25.6$ & $42.5$ & $27.2$ & $40.5$ \\
&   $\cellcolor{gray!15}$w/ RNA & $\cellcolor{gray!15}$$29.6$ & $\cellcolor{gray!15}$$29.1$ & $\cellcolor{gray!15}$$61.7$ & $\cellcolor{gray!15}$$49.7$ & $\cellcolor{gray!15}$$56.3$ & $\cellcolor{gray!15}$$25.8$ & $\cellcolor{gray!15}$$42.5$ & $\cellcolor{gray!15}$$27.2$ & $\cellcolor{gray!15}$$39.9$ \\
\midrule
\multirow{2}{*}{NPO+MSE}
& Original & $27.0$ & $28.3$ & $61.0$ & $52.9$ & $55.9$ & $25.4$ & $41.3$ & $27.0$ & $29.2$ \\
&   $\cellcolor{gray!15}$w/ RNA & $\cellcolor{gray!15}$$30.6$ & $\cellcolor{gray!15}$$29.8$ & $\cellcolor{gray!15}$$59.7$ & $\cellcolor{gray!15}$$32.4$ & $\cellcolor{gray!15}$$55.8$ & $\cellcolor{gray!15}$$27.0$ & $\cellcolor{gray!15}$$41.2$ & $\cellcolor{gray!15}$$27.0$ & $\cellcolor{gray!15}$$26.1$ \\
\midrule
\multirow{2}{*}{DPO+KL}
& Original & $29.1$ & $27.9$ & $59.7$ & $43.4$ & $53.9$ & $25.1$ & $38.0$ & $28.4$ & $35.3$ \\
& $\cellcolor{gray!15}$w/ RNA   & $\cellcolor{gray!15}$$29.4$ & $\cellcolor{gray!15}$$28.4$ & $\cellcolor{gray!15}$$59.6$ & $\cellcolor{gray!15}$$36.1$ & $\cellcolor{gray!15}$$55.0$ & $\cellcolor{gray!15}$$27.0$ & $\cellcolor{gray!15}$$37.3$ & $\cellcolor{gray!15}$$26.9$ & $\cellcolor{gray!15}$$37.7$ \\
\midrule
\multirow{2}{*}{DPO+MSE}
& Original & $28.0$ & $27.3$ & $55.1$ & $33.5$ & $55.1$ & $23.9$ & $36.9$ & $26.4$ & $32.4$ \\
&   $\cellcolor{gray!15}$w/ RNA & $\cellcolor{gray!15}$$33.3$ & $\cellcolor{gray!15}$$33.2$ & $\cellcolor{gray!15}$$51.7$ & $\cellcolor{gray!15}$$49.6$ & $\cellcolor{gray!15}$$52.3$ & $\cellcolor{gray!15}$$25.2$ & $\cellcolor{gray!15}$$25.3$ & $\cellcolor{gray!15}$$27.0$ & $\cellcolor{gray!15}$$26.2$ \\
\midrule
\multirow{2}{*}{SimNPO+KL}
& Original & $26.5$ & $27.0$ & $62.2$ & $29.7$ & $55.7$ & $27.1$ & $43.0$ & $26.3$ & $37.4$ \\
&   $\cellcolor{gray!15}$w/ RNA   & $\cellcolor{gray!15}$$29.6$ & $\cellcolor{gray!15}$$30.4$ & $\cellcolor{gray!15}$$62.5$ & $\cellcolor{gray!15}$$39.2$ & $\cellcolor{gray!15}$$57.3$ & $\cellcolor{gray!15}$$25.7$ & $\cellcolor{gray!15}$$42.4$ & $\cellcolor{gray!15}$$28.0$ & $\cellcolor{gray!15}$$41.4$ \\
\midrule
\multirow{2}{*}{SimNPO+MSE}
& Original & $27.6$ & $27.2$ & $61.2$ & $32.3$ & $55.1$ & $26.7$ & $41.9$ & $26.6$ & $27.5$ \\
&   $\cellcolor{gray!15}$w/ RNA & $\cellcolor{gray!15}$$31.5$ & $\cellcolor{gray!15}$$32.6$ & $\cellcolor{gray!15}$$60.5$ & $\cellcolor{gray!15}$$37.5$ & $\cellcolor{gray!15}$$57.0$ & $\cellcolor{gray!15}$$26.0$ & $\cellcolor{gray!15}$$41.8$ & $\cellcolor{gray!15}$$26.1$ & $\cellcolor{gray!15}$$36.8$ \\
\bottomrule
\end{tabular}
}

\end{table*}

\section{Robustness of RNA Against Multiple Forget-Tokens}
\label{appendix:negative_tokens}
\begin{minipage}{0.65\textwidth}
\textbf{Analysis on the harmfulness of forget-tokens.} 
One might ask: \textit{``Which forget-tokens when appearing in the retain-query can cause the unlearned model to misbehave?''}. 
We examine the harmfulness of forget-tokens in the forget-set by measuring the \textit{cosine similarity} between bi-gram forget-tokens and their respective documents, across all documents in the WMDP forget-sets. We select the top $10$ most similar, least similar, and those with values around the mean of the distribution. Perturbed MMLU QAs with respect to these forget-tokens are synthesized following the procedure described in Section~\ref{sec:6.1}.
As shown in Figure~\ref{fig:4}, we observed a clear trend between the accuracy and the similarity: \textbf{forget-tokens with higher similarity with their corresponding documents are more harmful to unlearned models.} We further assess the RNA models' robustness against $n$-gram similarity perturbations for $n \in \{4,8,16\}$.
\hfill
\end{minipage}
\begin{minipage}{0.35\textwidth}
\begin{figure}[H]
    \begin{center}
        \begin{minipage}[b]{0.9\textwidth}
        \centerline{\includegraphics[width=\textwidth]{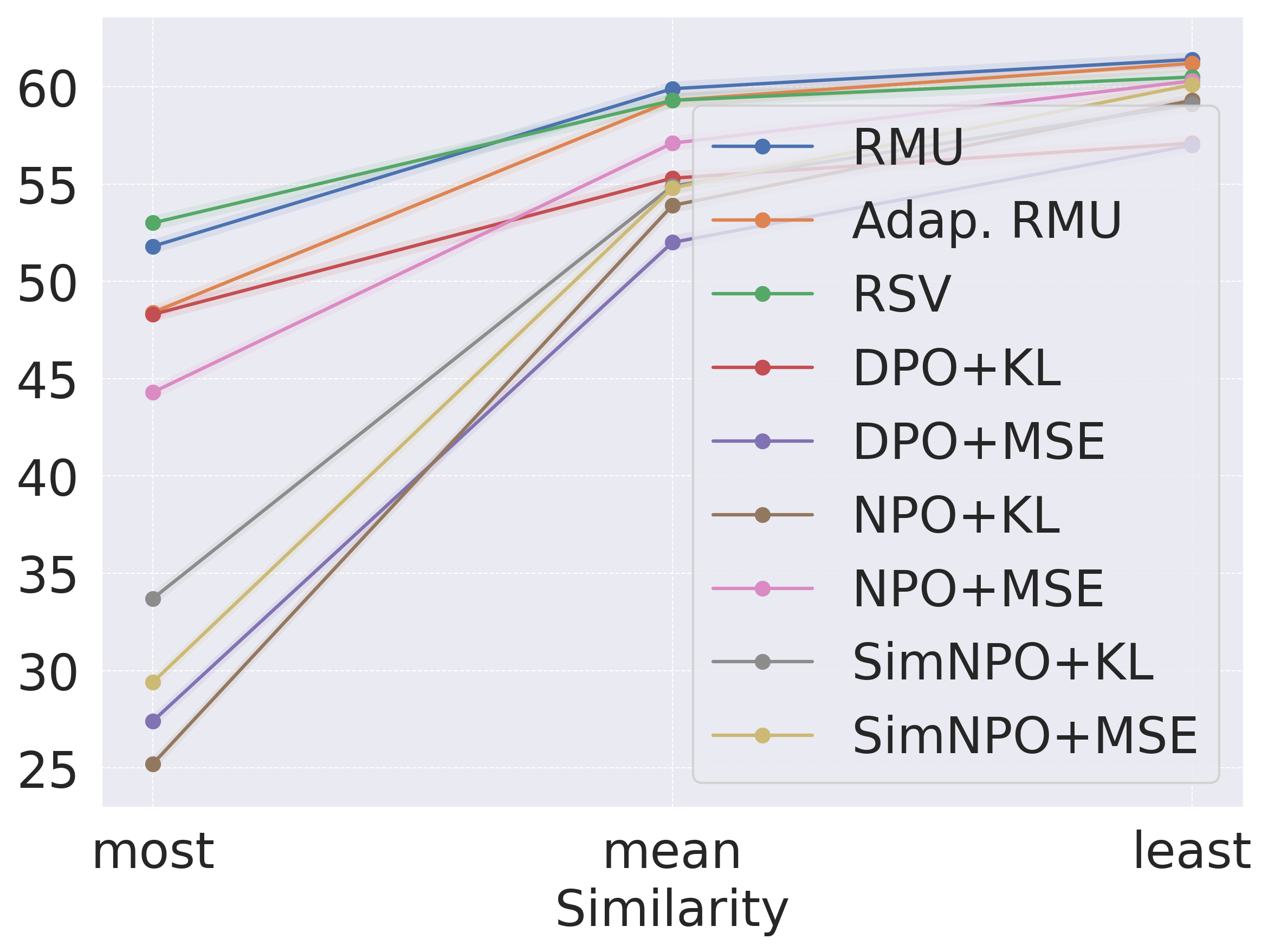}}
        \caption{Accuracy of unlearned models on perturbed MMLU with respect to bi-gram similarity perturbations.}
         \label{fig:4}
        \end{minipage}
         
    \end{center}
\end{figure}
\end{minipage}%

\begin{table}[h]
\centering
\caption{Selected value of $\nu~(\times10^{-2})$ for different methods across $n$-gram similarities.}
\label{tab:1}
\resizebox{\textwidth}{!}{%
\begin{tabular}{lccccccccc}
\toprule
$n$-gram & RMU & Adaptive RMU & RSV & DPO+KL & DPO+MSE & NPO+KL & NPO+MSE & SimNPO+KL & SimNPO+MSE \\
\midrule
$2$        & $3.0$ & $8.0$ & $5.0$ & $1.8$ & $1.0$ & $1.4$ & $1.4$ & $1.4$ & $1.8$ \\
$4$        & $3.0$ & $7.0$ & $5.0$ & $1.8$ & $2.0$ & $1.2$ & $1.8$ & $1.4$ & $1.6$ \\
$8$        & $3.0$ & $8.0$ & $5.0$ & $1.8$ & $2.0$ & $1.4$ & $1.8$ & $1.4$ & $1.6$ \\
$16$       & $3.0$ & $6.0$ & $5.0$ & $1.8$ & $2.0$ & $1.4$ & $1.6$ & $1.4$ & $1.6$ \\
\bottomrule
\end{tabular}
}
\end{table}
\begin{table}[!t]
\centering
\begin{minipage}[t]{0.495\textwidth}
    \caption{Performance of original vs. RNA models on WMDP (avg. Biology \& Cyber), MMLU, and perturbed MMLU (\textbf{2-gram}).}
    \centering
    \label{tab:2}
    \vspace{0.25cm}
    \resizebox{\textwidth}{!}{
    \begin{tabular}{llccc} 
        \toprule
        \multicolumn{2}{c}{\textbf{Method}} & \textbf{WMDP}$\downarrow$ & \textbf{MMLU}$\uparrow$ & \textbf{Pert. MMLU}$\uparrow$ \\ 
        \midrule
        \multirow{2}{*}{RMU} & Original & $28.7$ &$ 57.0$ & $52.7$ \\ 
                             & $\cellcolor{gray!15}$\textbf{w/ RNA}  & $\cellcolor{gray!15}$$28.7$ (\textcolor{black}{$+0.0$}) & $\cellcolor{gray!15}$$57.0$ (\textcolor{black}{$+0.0$})  & $\cellcolor{gray!15}$$52.1$  (\textcolor{red}{$-0.6$}) \\ 
        \midrule
        \multirow{2}{*}{Adaptive RMU} & Original &$ 28.6$ & $56.6$ & $49.3$ \\ 
                                      & $\cellcolor{gray!15}$\textbf{w/ RNA}  & $\cellcolor{gray!15}$$30.0$ (\textcolor{red}{$-1.4$}) & $\cellcolor{gray!15}$$56.4$ (\textcolor{red}{$-0.2$}) & $\cellcolor{gray!15}$$54.7$ (\textcolor{blue}{$+5.4$}) \\ 
        \midrule
        \multirow{2}{*}{RSV} & Original & $28.3$ & $56.3$ & $53.0$ \\ 
                             & $\cellcolor{gray!15}$\textbf{w/ RNA}  & $\cellcolor{gray!15}$$30.9$ (\textcolor{red}{$-2.6$}) & $\cellcolor{gray!15}$$56.5$ (\textcolor{blue}{$+0.2$}) & $\cellcolor{gray!15}$$56.4$ (\textcolor{blue}{$+3.4$}) \\ 
        \midrule
        \multirow{2}{*}{NPO+KL} & Original & $27.2$ & $55.8$ & $25.2$ \\ 
                                & $\cellcolor{gray!15}$\textbf{w/ RNA}  & $\cellcolor{gray!15}$$27.7$ (\textcolor{red}{$-0.5$}) & $\cellcolor{gray!15}$$ 55.5$ (\textcolor{red}{$-0.3$}) & $\cellcolor{gray!15}$$48.1$ (\textcolor{blue}{$+22.9$}) \\ 
        \midrule
        \multirow{2}{*}{NPO+MSE} & Original & $26.2$ & $56.2$ & $44.3$ \\ 
                                 & $\cellcolor{gray!15}$\textbf{w/ RNA}  & $\cellcolor{gray!15}$$28.0$ (\textcolor{red}{$-1.8$}) & $\cellcolor{gray!15}$$56.1$ (\textcolor{red}{$-0.1$}) & $\cellcolor{gray!15}$$47.7$ (\textcolor{blue}{$+3.4$}) \\ 
        \midrule
        \multirow{2}{*}{DPO+KL} & Original & $27.1$ & $53.7$ & $48.3$ \\ 
                                & $\cellcolor{gray!15}$\textbf{w/ RNA}  & $\cellcolor{gray!15}$$29.7$ (\textcolor{red}{$-2.6$}) & $\cellcolor{gray!15}$$54.1$ (\textcolor{blue}{$+0.4$}) & $\cellcolor{gray!15}$$50.2$ (\textcolor{blue}{$+1.9$}) \\ 
        \midrule
        \multirow{2}{*}{DPO+MSE} & Original & $26.0$ & $53.5$ & $27.4$ \\ 
                                 & $\cellcolor{gray!15}$\textbf{w/ RNA}  & $\cellcolor{gray!15}$$28.9$ (\textcolor{red}{$-2.9$}) & $\cellcolor{gray!15}$$53.6$ (\textcolor{blue}{$+0.1$}) & $\cellcolor{gray!15}$$52.0$ (\textcolor{blue}{$+24.6$}) \\ 
        \midrule
        \multirow{2}{*}{SimNPO+KL} & Original & $26.8$ & $55.9$ & $33.7$ \\ 
                                   & $\cellcolor{gray!15}$\textbf{w/ RNA}  & $\cellcolor{gray!15}$$27.6$ (\textcolor{red}{$-0.8$}) & $\cellcolor{gray!15}$$55.6$ (\textcolor{red}{$-0.3$}) & $\cellcolor{gray!15}$$47.0$ (\textcolor{blue}{$+6.3$}) \\ 
        \midrule
        \multirow{2}{*}{SimNPO+MSE} & Original & $27.1$ & $55.9$ & $29.4$ \\ 
                                    & $\cellcolor{gray!15}$\textbf{w/ RNA}  & $\cellcolor{gray!15}$$28.7$ (\textcolor{red}{$-1.6$}) & $\cellcolor{gray!15}$$56.0$ (\textcolor{blue}{$+0.1$}) & $\cellcolor{gray!15}$$54.8$ (\textcolor{blue}{$+25.4$}) \\
        
        \bottomrule
    \end{tabular}}
\end{minipage}
\hfill
\begin{minipage}[t]{0.495\textwidth}
    \caption{Performance of original vs. RNA models on WMDP (avg. Biology \& Cyber), MMLU, and perturbed MMLU (\textbf{4-gram}).}
    \centering
    \vspace{0.25cm}
    \resizebox{\textwidth}{!}{
    \begin{tabular}{llccc} 
        \toprule
        \multicolumn{2}{c}{\textbf{Method}} & \textbf{WMDP}$\downarrow$ & \textbf{MMLU}$\uparrow$ & \textbf{Pert. MMLU}$\uparrow$ \\ 
        \midrule
        \multirow{2}{*}{RMU} & Original & $28.7$ & $57.0$ & $48.3$ \\ 
                             & $\cellcolor{gray!15}$\textbf{w/ RNA}  & $\cellcolor{gray!15}$$28.7$ (\textcolor{black}{$+0.0$})  & $\cellcolor{gray!15}$$57.0$ (\textcolor{black}{$+0.0$})  & $\cellcolor{gray!15}$$47.5$ (\textcolor{red}{$-0.8$})  \\ 
        \midrule
        \multirow{2}{*}{Adaptive RMU} & Original & $28.6$ & $56.6$ & $44.4$ \\ 
                                      & $\cellcolor{gray!15}$\textbf{w/ RNA}  & $\cellcolor{gray!15}$$30.4$ (\textcolor{red}{$-1.8$}) & $\cellcolor{gray!15}$$56.5$ (\textcolor{red}{$-0.1$}) & $\cellcolor{gray!15}$$50.0$ (\textcolor{blue}{$+5.6$}) \\ 
        \midrule
        \multirow{2}{*}{RSV} & Original & $28.3$ & $56.3$ & $49.9$ \\ 
                             & $\cellcolor{gray!15}$\textbf{w/ RNA}  & $\cellcolor{gray!15}$$30.9$ (\textcolor{red}{$-2.6$}) & $\cellcolor{gray!15}$$56.5$ (\textcolor{blue}{$+0.2$}) & $\cellcolor{gray!15}$$54.2$ (\textcolor{blue}{$+4.9$}) \\ 
        \midrule
        \multirow{2}{*}{NPO+KL} & Original & $27.2$ & $55.8$ & $24.6$ \\ 
                                & $\cellcolor{gray!15}$\textbf{w/ RNA}  & $\cellcolor{gray!15}$$27.0$ (\textcolor{blue}{$+0.2$}) & $\cellcolor{gray!15}$$56.0$ (\textcolor{blue}{$+0.2$}) & $\cellcolor{gray!15}$$42.5$ (\textcolor{blue}{$+17.9$}) \\ 
        \midrule
        \multirow{2}{*}{NPO+MSE} & Original & $26.2$ & $56.2$ & $39.2$ \\ 
                                 & $\cellcolor{gray!15}$\textbf{w/ RNA}  & $\cellcolor{gray!15}$$27.3$ (\textcolor{red}{$-1.1$}) & $\cellcolor{gray!15}$$56.0$ (\textcolor{red}{$-0.2$}) & $\cellcolor{gray!15}$$\cellcolor{gray!15}$$40.4$ (\textcolor{blue}{$+1.2$}) \\ 
        \midrule
        \multirow{2}{*}{DPO+KL} & Original &$ 27.1$ & $53.7$ & $42.4$ \\ 
                                & $\cellcolor{gray!15}$\textbf{w/ RNA}  & $\cellcolor{gray!15}$$29.7$ (\textcolor{red}{$-2.6$}) & $\cellcolor{gray!15}$$54.1$ (\textcolor{blue}{$+0.4$}) & $\cellcolor{gray!15}$$43.7 $(\textcolor{blue}{$+1.3$}) \\ 
        \midrule
        \multirow{2}{*}{DPO+MSE} & Original & $26.0$ & $53.5$ & $26.1$ \\ 
                                 & $\cellcolor{gray!15}$\textbf{w/ RNA}  & $\cellcolor{gray!15}$$29.2$ (\textcolor{red}{$-3.2$}) & $\cellcolor{gray!15}$$53.0$ (\textcolor{red}{$-0.5$}) & $\cellcolor{gray!15}$$55.6$ (\textcolor{blue}{$+29.5$}) \\ 
        \midrule
        \multirow{2}{*}{SimNPO+KL} & Original & $26.8$ & $55.9$ & $31.9$ \\ 
                                   & $\cellcolor{gray!15}$\textbf{w/ RNA}  & $\cellcolor{gray!15}$$27.6$ (\textcolor{red}{$-0.8$}) & $\cellcolor{gray!15}$$55.6$ (\textcolor{red}{$-0.3$}) & $\cellcolor{gray!15}$$38.1$ (\textcolor{blue}{$+6.2$}) \\ 
        \midrule
        \multirow{2}{*}{SimNPO+MSE} & Original &$ 27.1$ & $55.9 $& $30.1$ \\ 
                                    & $\cellcolor{gray!15}$\textbf{w/ RNA} & $\cellcolor{gray!15}$$32.2$ (\textcolor{red}{$-5.1$}) & $\cellcolor{gray!15}$$56.7 $(\textcolor{blue}{$+0.8$}) & $\cellcolor{gray!15}$$53.4$ (\textcolor{blue}{$+23.3$}) \\ 
        \bottomrule
    \end{tabular}}
\end{minipage}
\end{table}


\begin{table}[!t]
\centering

\begin{minipage}[t]{0.495\textwidth}
    \caption{Performance of original vs. RNA models on WMDP (avg. Biology \& Cyber), MMLU, and perturbed MMLU (\textbf{8-gram}).}
    \centering
    \vspace{0.25cm}
    \resizebox{\textwidth}{!}{
    \begin{tabular}{llccc} 
        \toprule
        \multicolumn{2}{c}{\textbf{Method}} & \textbf{WMDP}$\downarrow$ & \textbf{MMLU}$\uparrow$ & \textbf{Pert. MMLU}$\uparrow$ \\ 
        \midrule
        \multirow{2}{*}{RMU} & Original & $28.7$ & $57.0$ & $44.6$ \\ 
                             & $\cellcolor{gray!15}$\textbf{w/ RNA}  & $\cellcolor{gray!15}$$28.7$ (\textcolor{black}{$+0.0$})  & $\cellcolor{gray!15}$$57.0$ (\textcolor{black}{$+0.0$})  & $\cellcolor{gray!15}$$42.8$ (\textcolor{red}{$-1.8$}) \\ 
        \midrule
        \multirow{2}{*}{Adaptive RMU} & Original & $28.6$ & $56.6$ & $42.0$ \\ 
                                      & $\cellcolor{gray!15}$\textbf{w/ RNA}  & $\cellcolor{gray!15}$$30.0$ (\textcolor{red}{$-1.4$}) & $\cellcolor{gray!15}$$56.4$ (\textcolor{red}{$-0.2$}) & $\cellcolor{gray!15}$$54.7$ (\textcolor{blue}{$+5.4$}) \\ 
        \midrule
        \multirow{2}{*}{RSV} & Original & $28.3$ & $56.3$ & $46.4$ \\ 
                             & $\cellcolor{gray!15}$\textbf{w/ RNA}  & $\cellcolor{gray!15}$$30.9$ (\textcolor{red}{$-2.6$})& $\cellcolor{gray!15}$$56.5$ (\textcolor{blue}{$+0.2$})& $\cellcolor{gray!15}$$48.1$ (\textcolor{blue}{$+1.7$}) \\ 
        \midrule
        \multirow{2}{*}{NPO+KL} & Original & $27.2$ & $55.8$ & $29.6$ \\ 
                                & $\cellcolor{gray!15}$\textbf{w/ RNA}  & $\cellcolor{gray!15}$$27.7$ (\textcolor{red}{$-0.5$})& $\cellcolor{gray!15}$$55.5$ (\textcolor{red}{$-0.3$})& $\cellcolor{gray!15}$$39.0$ (\textcolor{blue}{$+9.4$}) \\ 
        \midrule
        \multirow{2}{*}{NPO+MSE} & Original & $26.2$ & $56.2$ & $37.2$ \\ 
                                 & $\cellcolor{gray!15}$\textbf{w/ RNA}  & $\cellcolor{gray!15}$$27.3$ (\textcolor{red}{$-1.1$})& $\cellcolor{gray!15}$$56.0$ (\textcolor{red}{$-0.2$})& $\cellcolor{gray!15}$$37.2$ (\textcolor{black}{$+0.0$}) \\ 
        \midrule
        \multirow{2}{*}{DPO+KL} & Original & $27.1$ & $53.7$ & $39.8$ \\ 
                                & $\cellcolor{gray!15}$\textbf{w/ RNA}  & $\cellcolor{gray!15}$$29.7$ (\textcolor{red}{$-2.6$})& $\cellcolor{gray!15}$$54.1$ (\textcolor{blue}{$+0.4$})& $\cellcolor{gray!15}$$41.5$ (\textcolor{blue}{$+1.7$}) \\ 
        \midrule
        \multirow{2}{*}{DPO+MSE} & Original  & $26.0$ & $53.5$ & $29.1$ \\ 
                                 & $\cellcolor{gray!15}$\textbf{w/ RNA}  & $\cellcolor{gray!15}$$29.2$ (\textcolor{red}{$-3.2$})& $\cellcolor{gray!15}$$53.0$ (\textcolor{red}{$-0.5$})& $\cellcolor{gray!15}$$54.0$ (\textcolor{blue}{$+24.9$}) \\ 
        \midrule
        \multirow{2}{*}{SimNPO+KL} & Original & $26.8$ & $55.9$ & $32.7$ \\ 
                                   & $\cellcolor{gray!15}$\textbf{w/ RNA}   & $\cellcolor{gray!15}$$27.6$ (\textcolor{red}{$-0.8$}) & $\cellcolor{gray!15}$$55.6$ (\textcolor{red}{$-0.3$})& $\cellcolor{gray!15}$$36.2$ (\textcolor{blue}{$+3.7$}) \\ 
        \midrule
        \multirow{2}{*}{SimNPO+MSE} & Original & $27.1$ & $55.9$ & $29.6$ \\ 
                                    & $\cellcolor{gray!15}$\textbf{w/ RNA}  & $\cellcolor{gray!15}$$32.2$ (\textcolor{red}{$-5.1$})& $\cellcolor{gray!15}$$56.7$ (\textcolor{blue}{$+0.9$})& $\cellcolor{gray!15}$$46.1$ (\textcolor{blue}{$+16.5$}) \\ 
        \bottomrule
    \end{tabular}}
\end{minipage}
\hfill
\begin{minipage}[t]{0.495\textwidth}
    \caption{Performance of original vs. RNA models on WMDP (avg. Biology \& Cyber), MMLU, and perturbed MMLU (\textbf{16-gram}).}
    \centering
    \label{tab:5}
    \vspace{0.25cm}
    \resizebox{\textwidth}{!}{
    \begin{tabular}{llccc} 
        \toprule
        \multicolumn{2}{c}{\textbf{Method}} & \textbf{WMDP}$\downarrow$ & \textbf{MMLU}$\uparrow$ & \textbf{Pert. MMLU}$\uparrow$ \\ 
        \midrule
        \multirow{2}{*}{RMU} & Original & $28.7$ & $57.0$ & $41.8$ \\ 
                             & $\cellcolor{gray!15}$\textbf{w/ RNA}  & $\cellcolor{gray!15}$$28.7$ (\textcolor{black}{$+0.0$})  &  $\cellcolor{gray!15}$$57.0$ (\textcolor{black}{$+0.0$})  & $\cellcolor{gray!15}$$41.4$  (\textcolor{red}{$-0.4$}) \\ 
        \midrule
        \multirow{2}{*}{Adaptive RMU} & Original & $28.6$ & $56.6$ & $39.8$ \\ 
                                      & $\cellcolor{gray!15}$\textbf{w/ RNA}  & $\cellcolor{gray!15}$$30.4$ (\textcolor{red}{$-1.8$}) & $\cellcolor{gray!15}$$56.5$ (\textcolor{red}{$-0.1$}) & $\cellcolor{gray!15}$$50.0$ (\textcolor{blue}{$+5.6$}) \\ 
        \midrule
        \multirow{2}{*}{RSV} & Original & $28.3$ & $56.3$ & $43.7$ \\ 
                             & $\cellcolor{gray!15}$\textbf{w/ RNA}  & $\cellcolor{gray!15}$$28.7$ (\textcolor{red}{$-0.4$})& $\cellcolor{gray!15}$$56.8$ (\textcolor{blue}{$+0.5$})& $\cellcolor{gray!15}$$44.2$ (\textcolor{blue}{$+0.5$}) \\ 
        \midrule
        \multirow{2}{*}{NPO+KL} & Original & $27.2$ & $55.8$ & $31.2$ \\ 
                                & $\cellcolor{gray!15}$\textbf{w/ RNA}  & $\cellcolor{gray!15}$$27.7$ (\textcolor{red}{$-0.5$})& $\cellcolor{gray!15}$$55.5$ (\textcolor{red}{$-0.3$})& $\cellcolor{gray!15}$$38.2$ (\textcolor{blue}{$+7.0$}) \\ 
        \midrule
        \multirow{2}{*}{NPO+MSE} & Original & $26.2$ & $56.2$ & $36.3$ \\ 
                                 & $\cellcolor{gray!15}$\textbf{w/ RNA}  & $\cellcolor{gray!15}$$27.6$ (\textcolor{red}{$-1.4$})& $\cellcolor{gray!15}$$56.1$ (\textcolor{red}{$-0.1$})& $\cellcolor{gray!15}$$36.7$ (\textcolor{blue}{$+0.4$}) \\ 
        \midrule
        \multirow{2}{*}{DPO+KL} & Original & $27.1$ & $53.7$ & $35.9$ \\ 
                                & $\cellcolor{gray!15}$\textbf{w/ RNA}  & $\cellcolor{gray!15}$$29.7$ (\textcolor{red}{$-2.6$})& $\cellcolor{gray!15}$$54.1$ (\textcolor{blue}{$+0.4$})& $\cellcolor{gray!15}$$36.5$ (\textcolor{blue}{$+0.6$}) \\ 
        \midrule
        \multirow{2}{*}{DPO+MSE} & Original  & $26.0$ & $53.5$ & $32.6$ \\ 
                                 & $\cellcolor{gray!15}$\textbf{w/ RNA}  & $\cellcolor{gray!15}$$29.2$ (\textcolor{red}{$-3.2$})& $\cellcolor{gray!15}$$53.0$ (\textcolor{red}{$-0.5$})& $\cellcolor{gray!15}$$52.2$ (\textcolor{blue}{$+19.6$}) \\ 
        \midrule
        \multirow{2}{*}{SimNPO+KL} & Original  & $26.8$ & $55.9$ & $34.1$ \\ 
                                   &$\cellcolor{gray!15}$\textbf{w/ RNA}  & $\cellcolor{gray!15}$$27.6$ (\textcolor{red}{$-0.8$})& $\cellcolor{gray!15}$$55.6$ (\textcolor{red}{$-0.3$})& $\cellcolor{gray!15}$$36.7$ (\textcolor{blue}{$+2.6$})\\ 
        \midrule
        \multirow{2}{*}{SimNPO+MSE} & Original & $27.1$ & $55.9$ & $33.2$ \\ 
                                    & $\cellcolor{gray!15}$\textbf{w/ RNA}  & $\cellcolor{gray!15}$$32.2$ (\textcolor{red}{$-5.1$})& $\cellcolor{gray!15}$$56.7$ (\textcolor{blue}{$+0.9$})& $\cellcolor{gray!15}$$44.4$ (\textcolor{blue}{$+11.2$}) \\ 
        \bottomrule
    \end{tabular}}
\end{minipage}
\end{table}

\textbf{Setup.} For each document in the WMDP forget-set, we extract $n$-grams for $n \in \{2, 4, 8, 16\}$ and compute their feature embeddings using Sentence-BERT~\citep{reimers2019sentence}, along with the embedding of the full document. We then extract the top $10$ most similar $n$-grams to each document based on embedding cosine similarity. Perturbed MMLU QAs corresponding to these $n$-gram forget-tokens are synthesized following the procedure outlined in Subsection~\ref{appendix:A2}. We utilize model checkpoints from the previous setting and perform evaluations accordingly. Results are reported for checkpoints selected based on the optimal noise scale $\nu$, as detailed in Table~\ref{tab:1}.

\textbf{Results.} RNA’s performance is summarized in Table~\ref{tab:2} through Table~\ref{tab:5}. We observe that RNA consistently improves the robustness of unlearned models across all $n$-gram perturbations.  The most pronounced gains are observed when RNA is applied with MSE retain-losses. Specifically, for DPO+MSE, performance improvements are $+24.6$ ($2$-gram), $+29.5$ ($4$-gram), $+24.9$ ($8$-gram), and $+19.6$ ($16$-gram); for SimNPO+MSE, gains are $+25.4$, $+23.3$, $+16.5$, and $+11.2$. Importantly, RNA introduces minimal impacts on MMLU performance, where changes are generally within less than $0.5$. However, RNA tends to reduce WMDP accuracy across all methods, with slightly drop ranging from $0.5$ to $5.0$. Additionally, RM methods derive minimal benefit from RNA under these settings.

\section{Effects of Randomizing Different Latent Spaces}
\label{appendix:differnt_latent_space}
In this section, we study the effects of perturbing random noise $\bm \delta$ into the representations at different latent layers. 

\paragraph{Setup.} Since the effects of unlearning at specific layers have been previously explored in RM methods, we focus our analysis on PO w/ RNA models under the following three scenarios:

(1) \textit{Per-layer injection}: We evaluate the performance of PO w/ RNA models by injecting noise into each layer, from the first to the last layer in the model.

(2) \textit{Region-specific layer injection}: we inject noise into a set of layers grouped by position in the network and compare performance across three configs: (i) early layers ($5, 6, 7$), (ii) middle layers ($14, 15, 16$), and (iii) late layers ($28, 29, 30$).

(3) \textit{Full-layer injection}: We inject noise into all layers in the model.
\begin{figure*}[ht]
    \begin{center}
    \begin{minipage}[b]{0.32\textwidth}
    \centerline{\includegraphics[width=\textwidth]{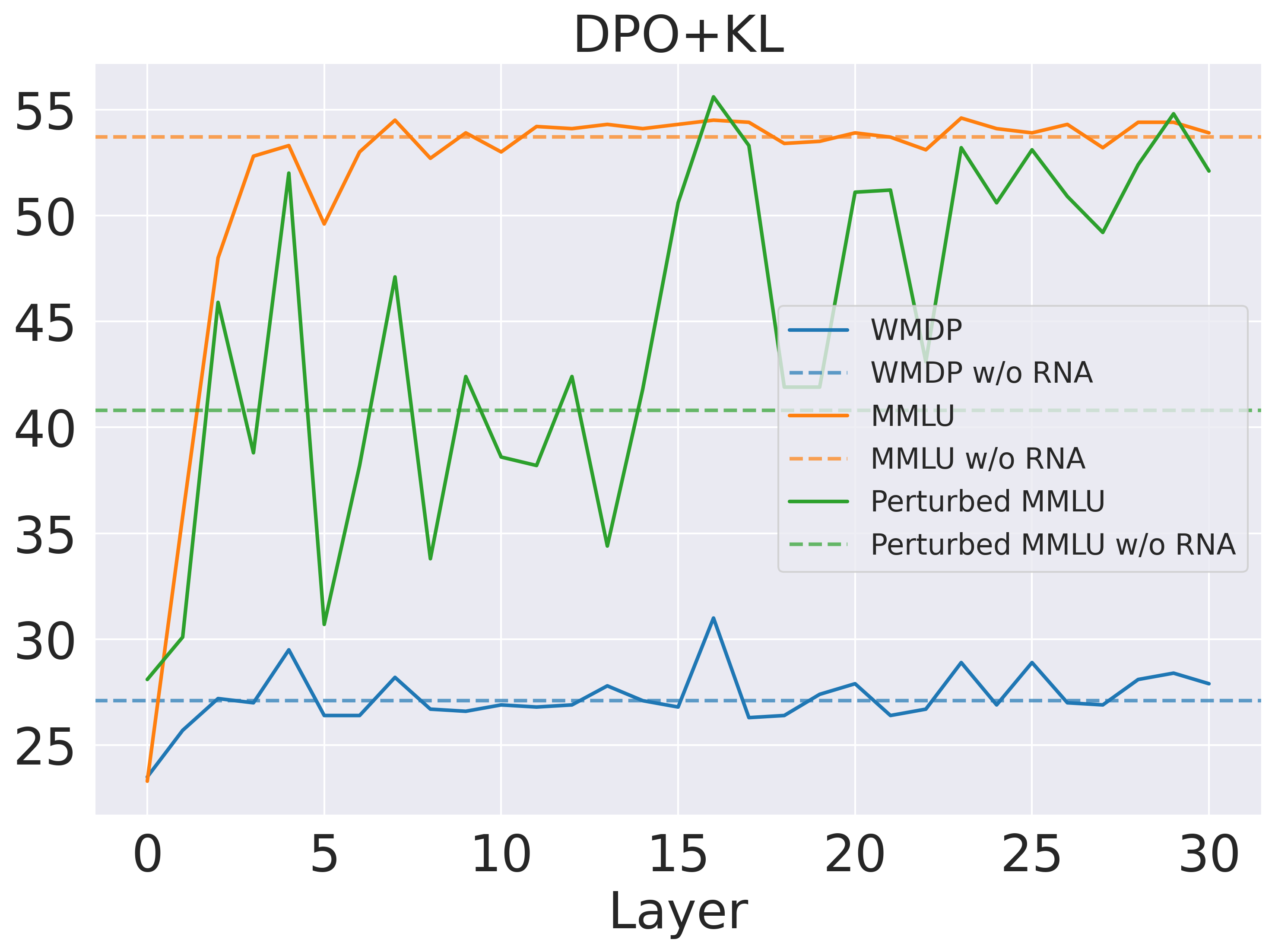}}
    \end{minipage}
    \begin{minipage}[b]{0.32\textwidth}
    \centerline{\includegraphics[width=\textwidth]{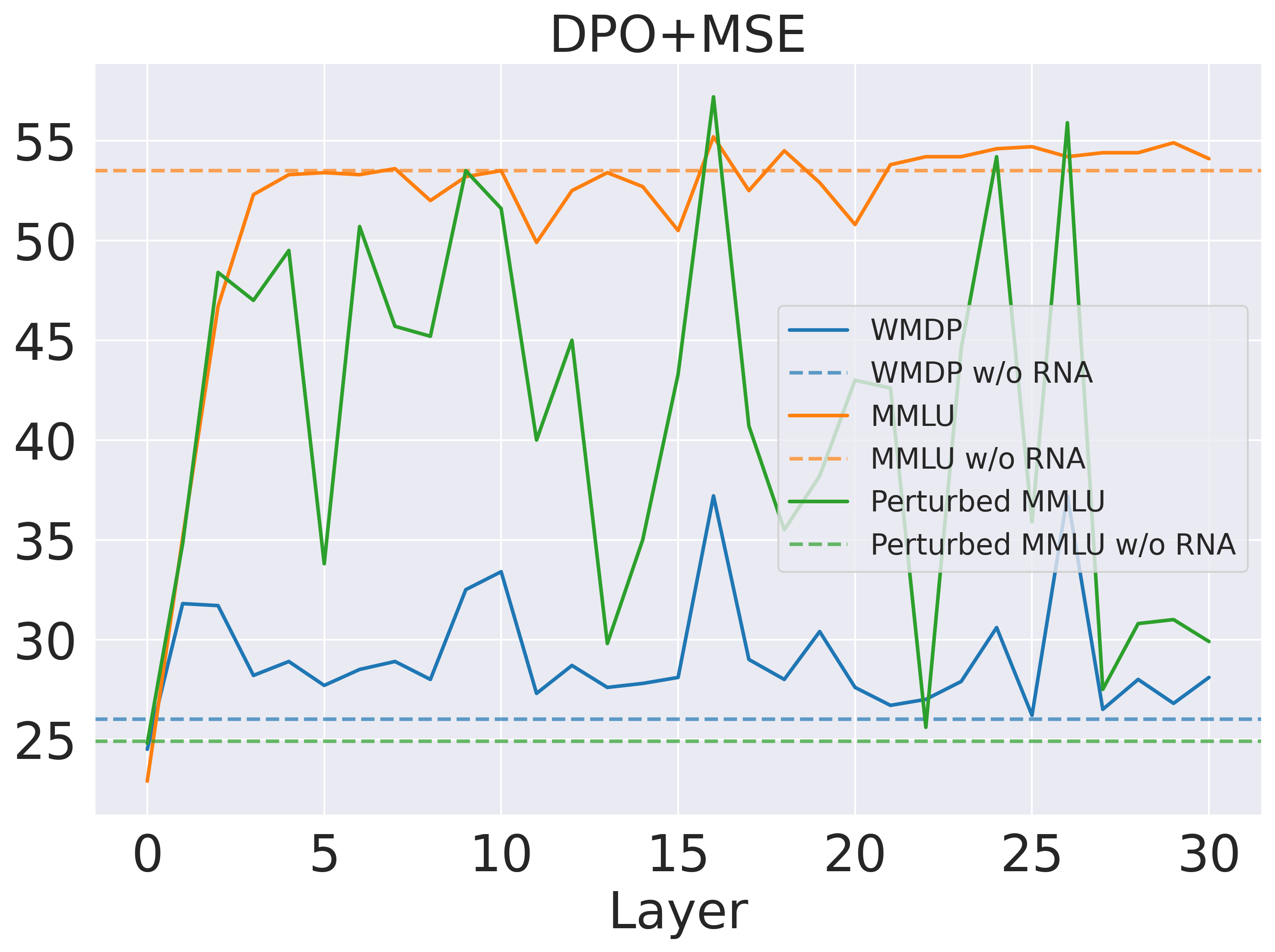}}
    \end{minipage}
    \begin{minipage}[b]{0.32\textwidth}
    \centerline{\includegraphics[width=\textwidth]{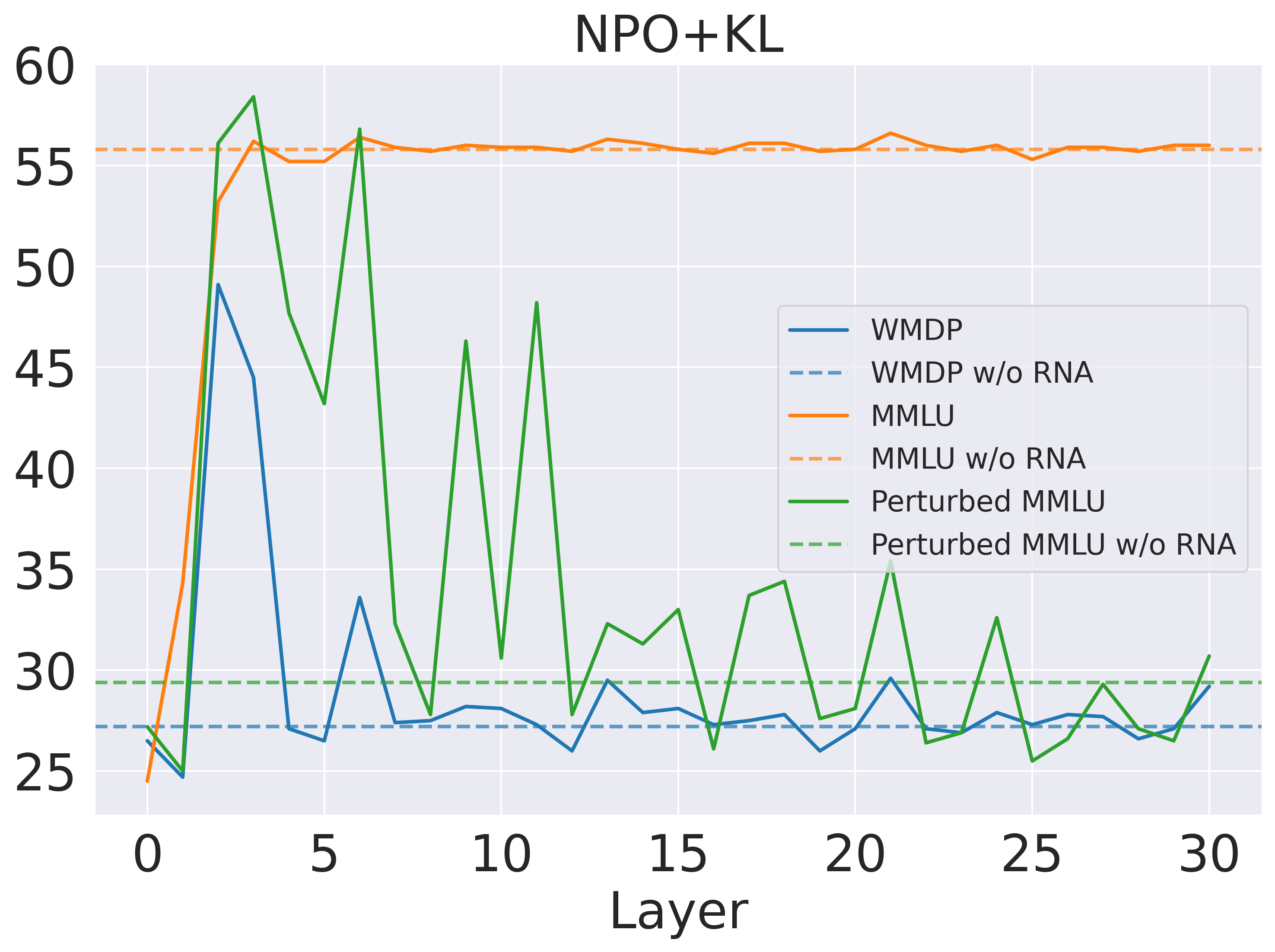}}
    \end{minipage}

    \begin{minipage}[b]{0.32\textwidth}
    \centerline{\includegraphics[width=\textwidth]{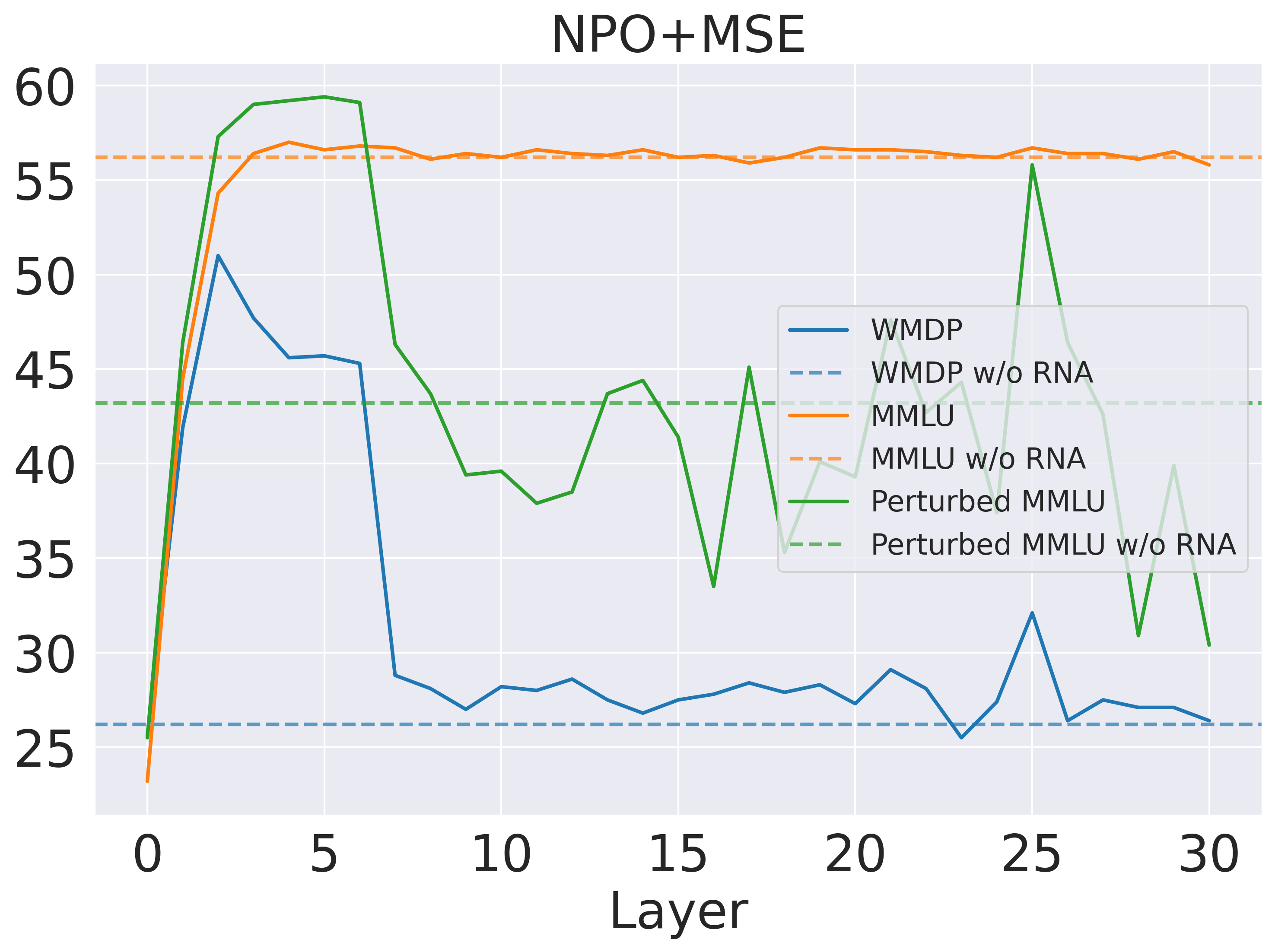}}
    \end{minipage}
    \begin{minipage}[b]{0.32\textwidth}
    \centerline{\includegraphics[width=\textwidth]{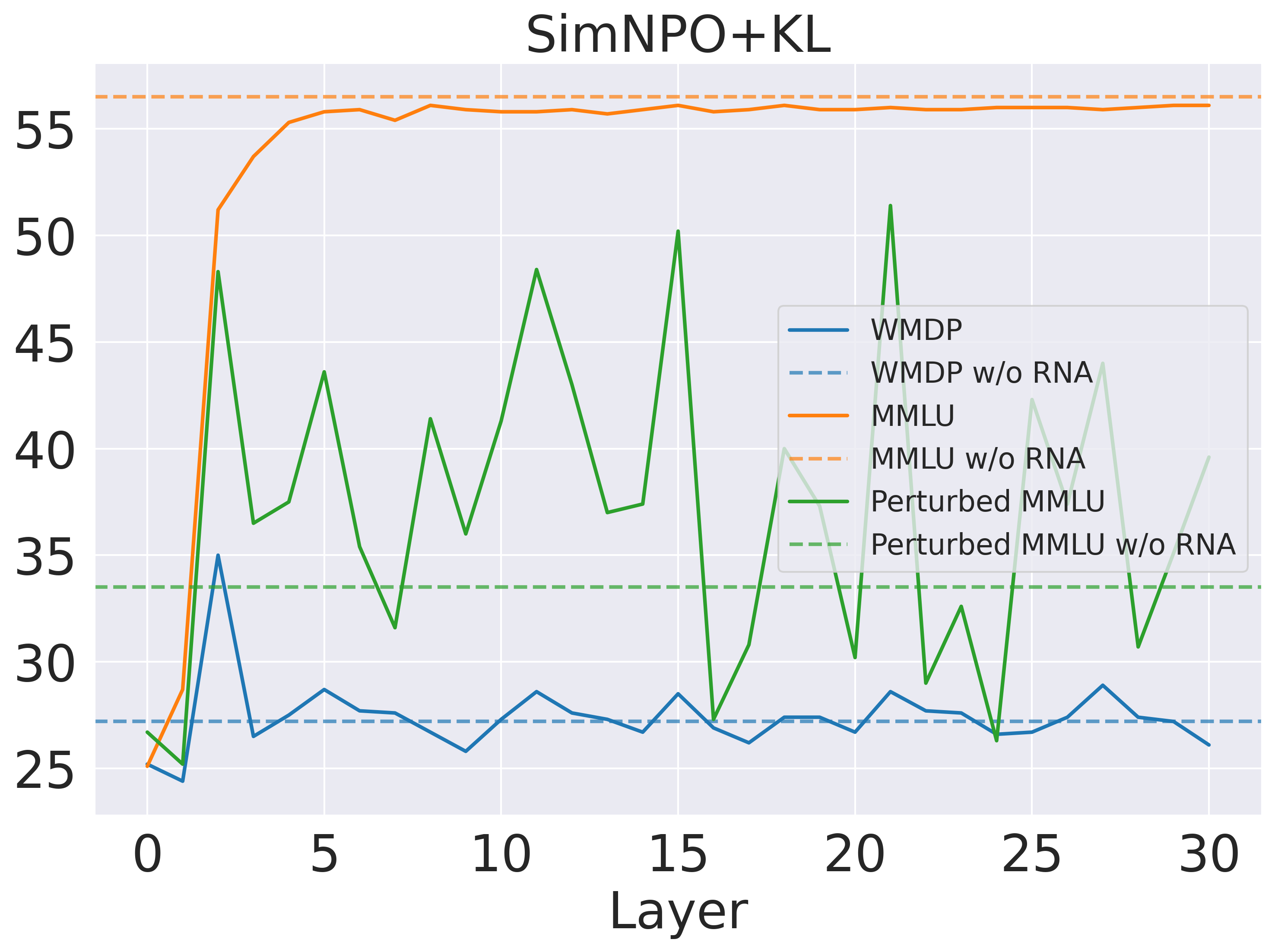}}
    \end{minipage}
    \begin{minipage}[b]{0.32\textwidth}
    \centerline{\includegraphics[width=\textwidth]{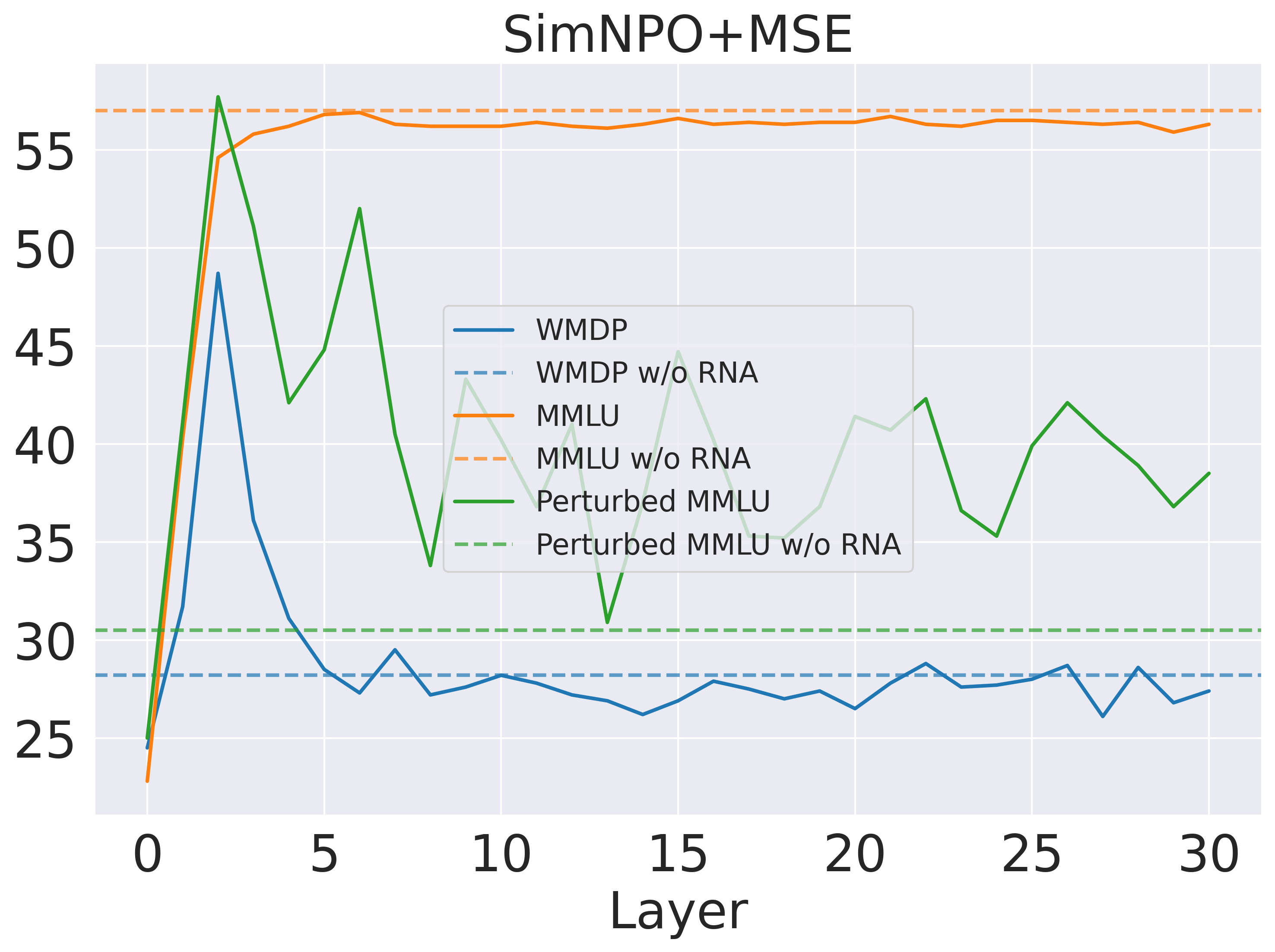}}
    \end{minipage}

    \caption{\textit{Per-layer injection}: accuracy of RNA models on MMLU, perturbed MMLU and WMDP (avg. of Biology and Cyber) across different perturbed layers.}
    \label{fig:8}
    \end{center}
\end{figure*}

\begin{figure*}[ht]
    \begin{center}
    \begin{minipage}[b]{0.32\textwidth}
    \centerline{\includegraphics[width=\textwidth]{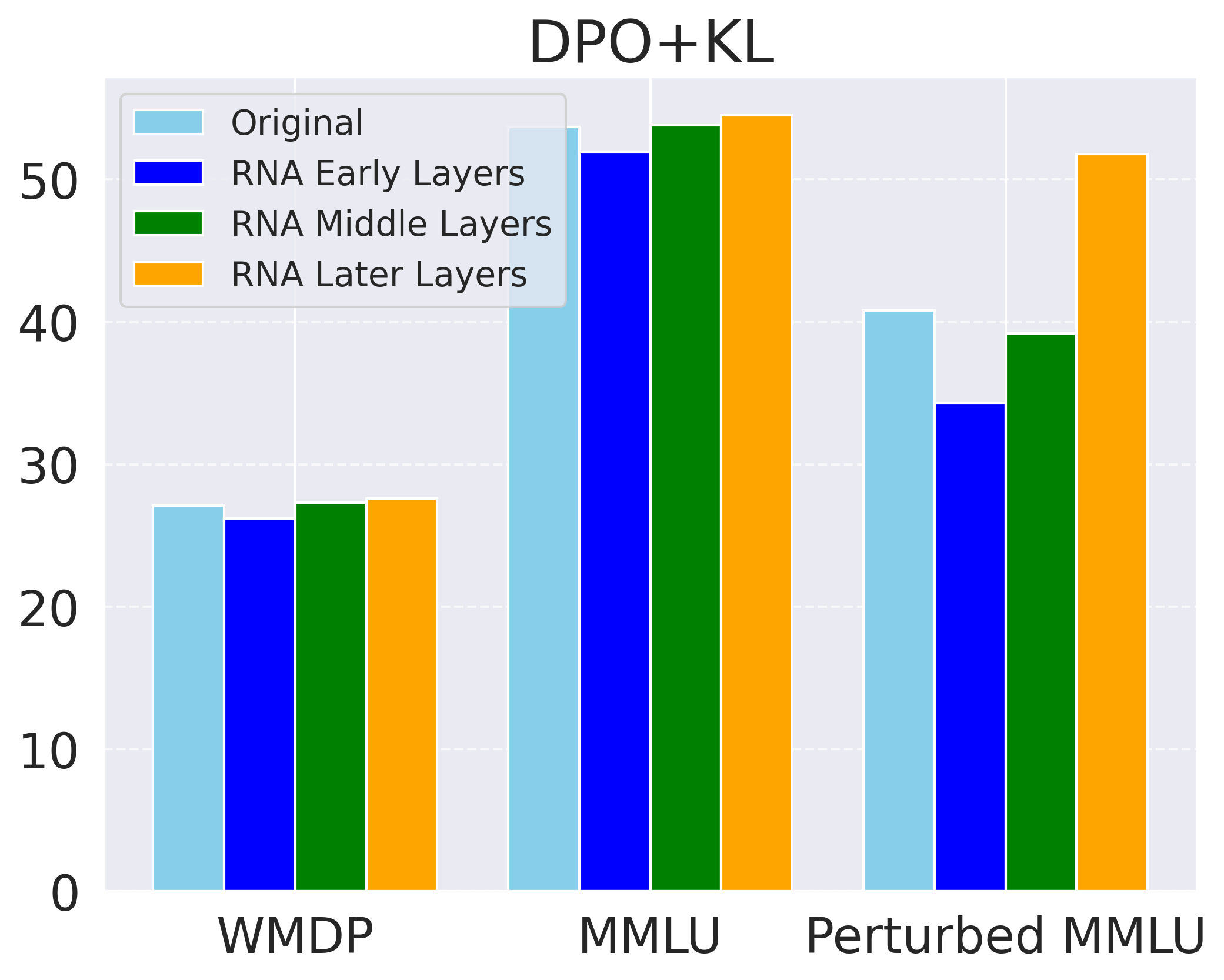}}
    \end{minipage}
    \begin{minipage}[b]{0.32\textwidth}
    \centerline{\includegraphics[width=\textwidth]{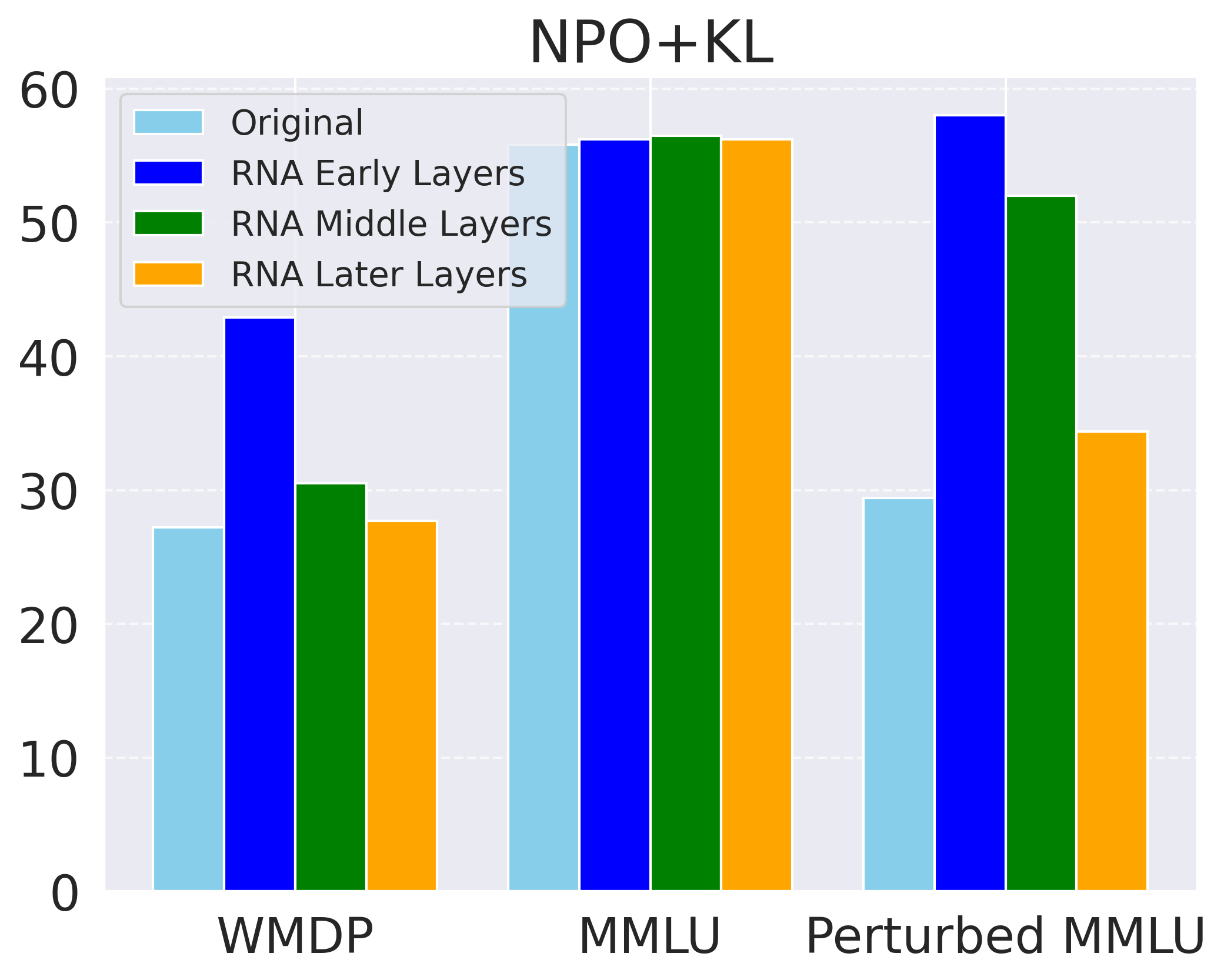}}
    \end{minipage}
    \begin{minipage}[b]{0.32\textwidth}
    \centerline{\includegraphics[width=\textwidth]{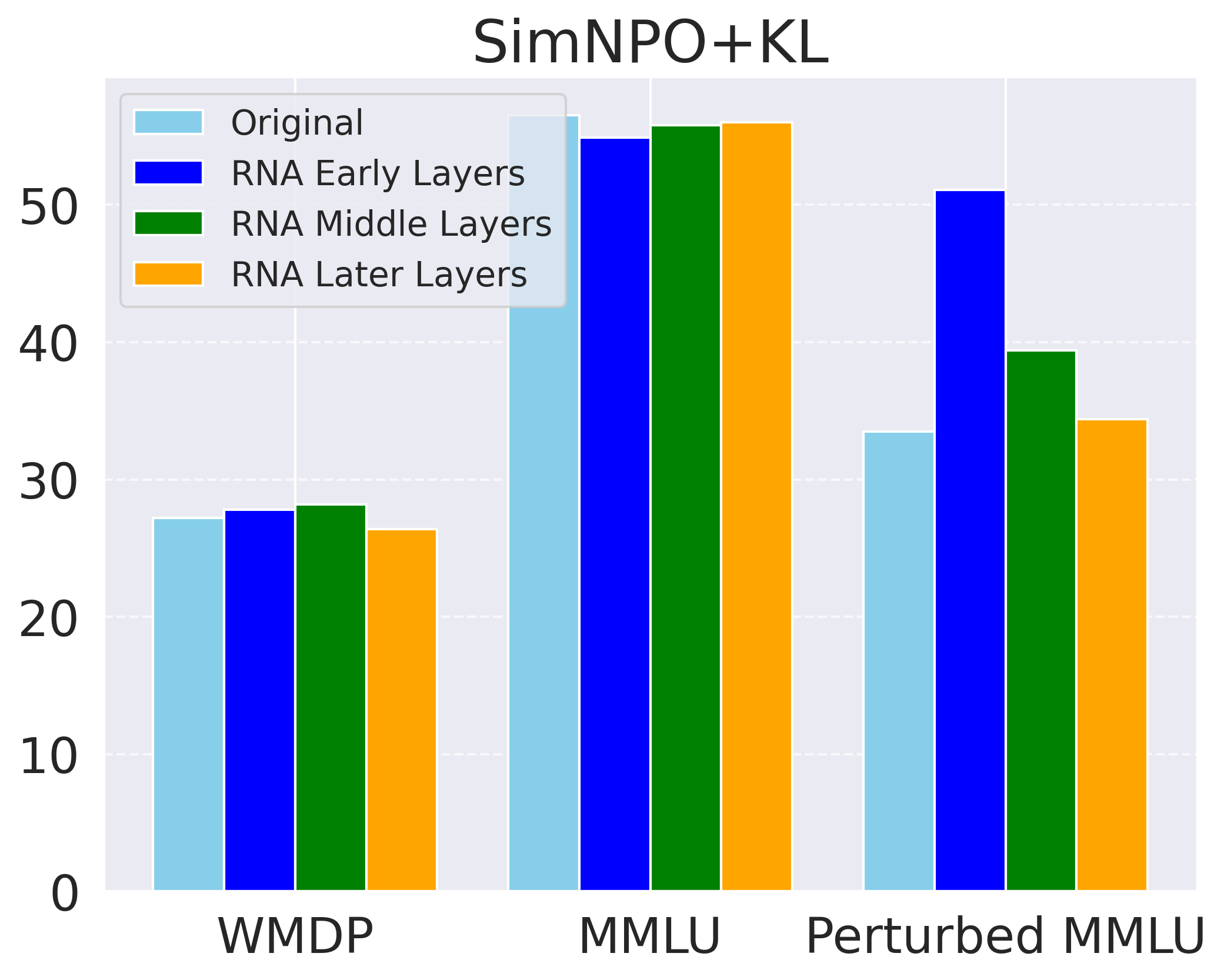}}
    \end{minipage}

    \begin{minipage}[b]{0.32\textwidth}
    \centerline{\includegraphics[width=\textwidth]{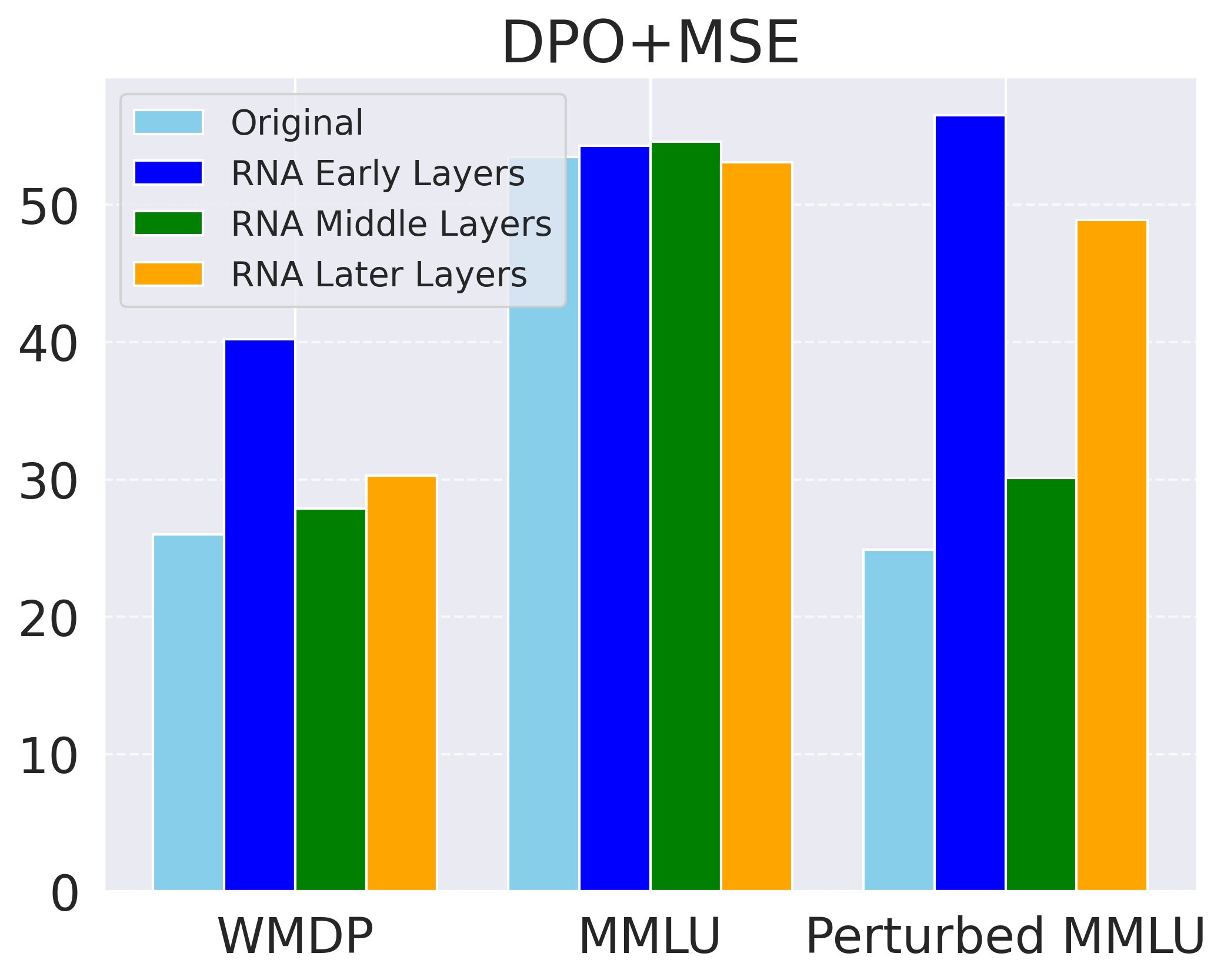}}
    \end{minipage}
    \begin{minipage}[b]{0.32\textwidth}
    \centerline{\includegraphics[width=\textwidth]{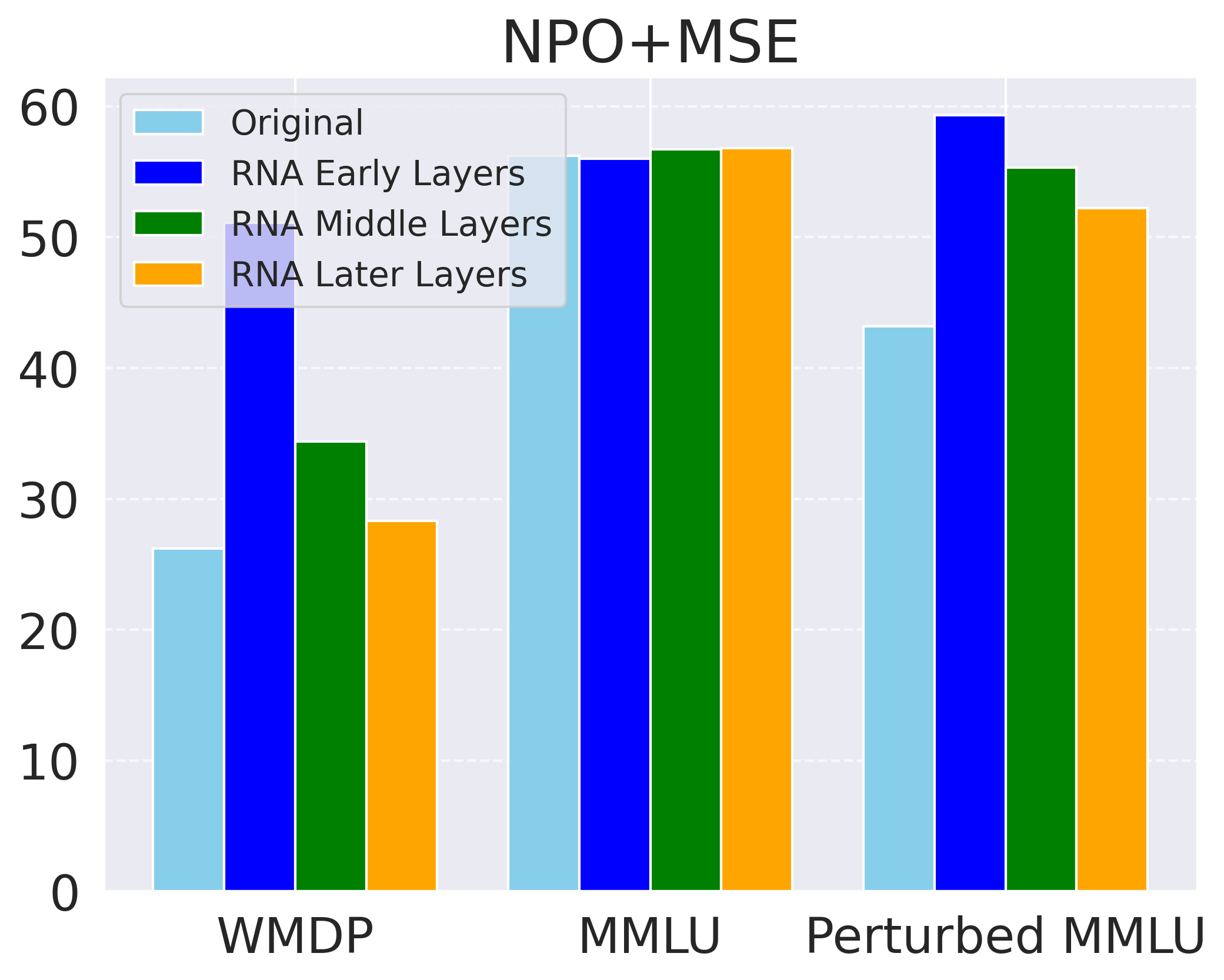}}
    \end{minipage}
    \begin{minipage}[b]{0.32\textwidth}
    \centerline{\includegraphics[width=\textwidth]{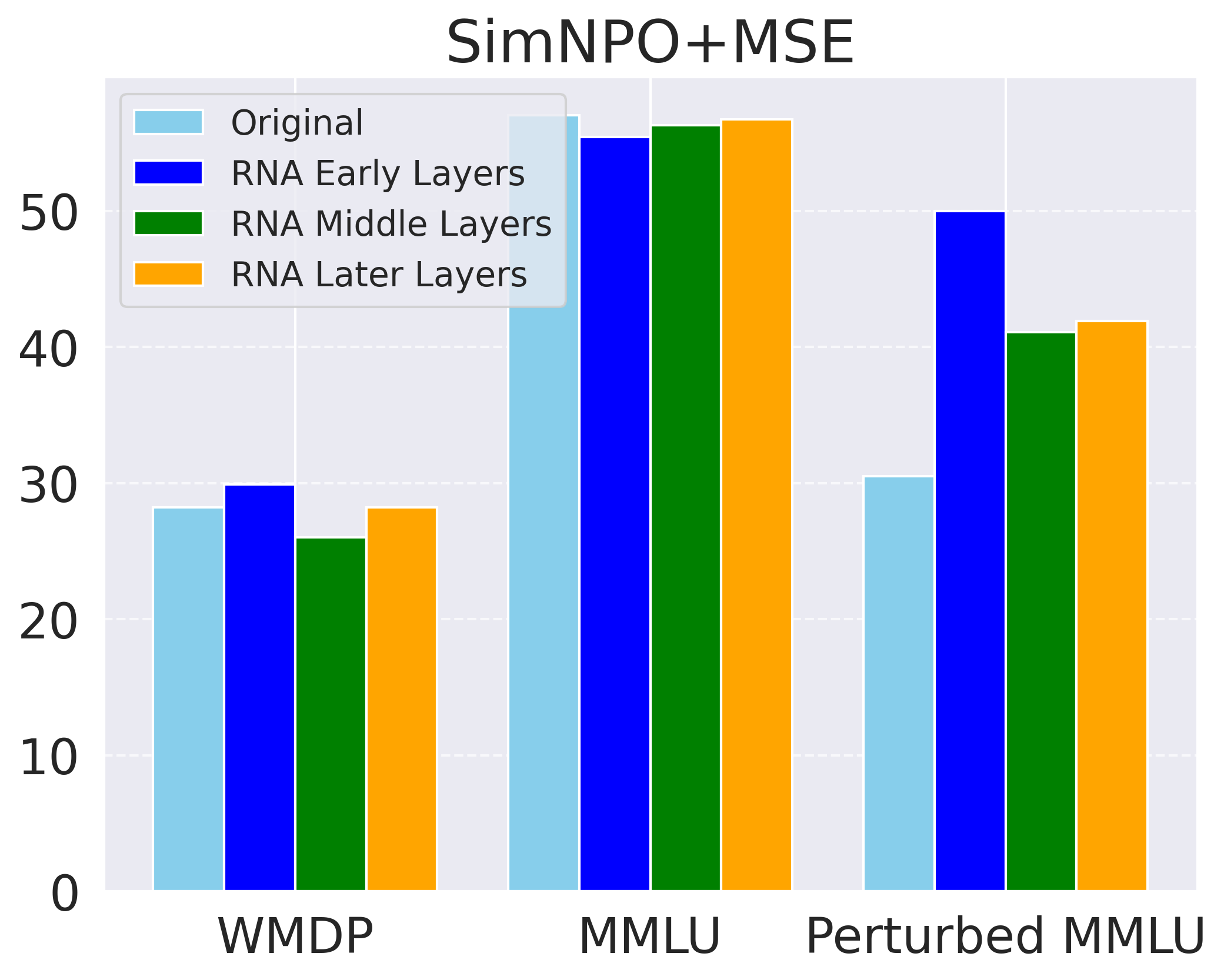}}
    \end{minipage}

    \caption{\textit{Region-specific layer injection}: accuracy of RNA models on MMLU, perturbed MMLU and WMDP (avg. of Biology and Cyber) w.r.t early layers ($5,6,7$), middle layers ($14,15,16$), and later layers ($28,29,30$).}
    \label{fig:9}
    \end{center}
\end{figure*}

\begin{figure*}[ht]
    \begin{center}
    \begin{minipage}[b]{0.32\textwidth}
    \centerline{\includegraphics[width=\textwidth]{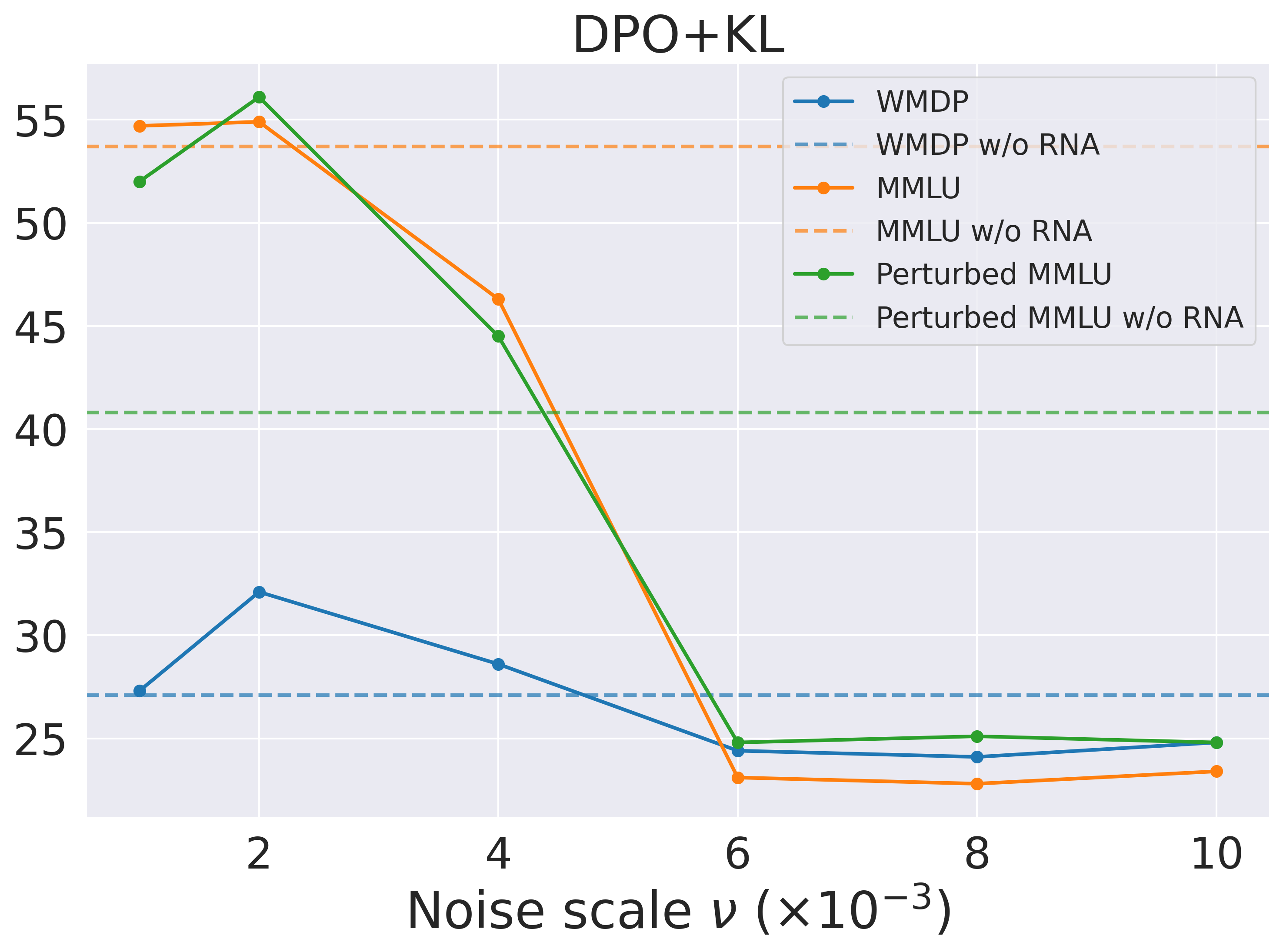}}
    \end{minipage}
    \begin{minipage}[b]{0.32\textwidth}
    \centerline{\includegraphics[width=\textwidth]{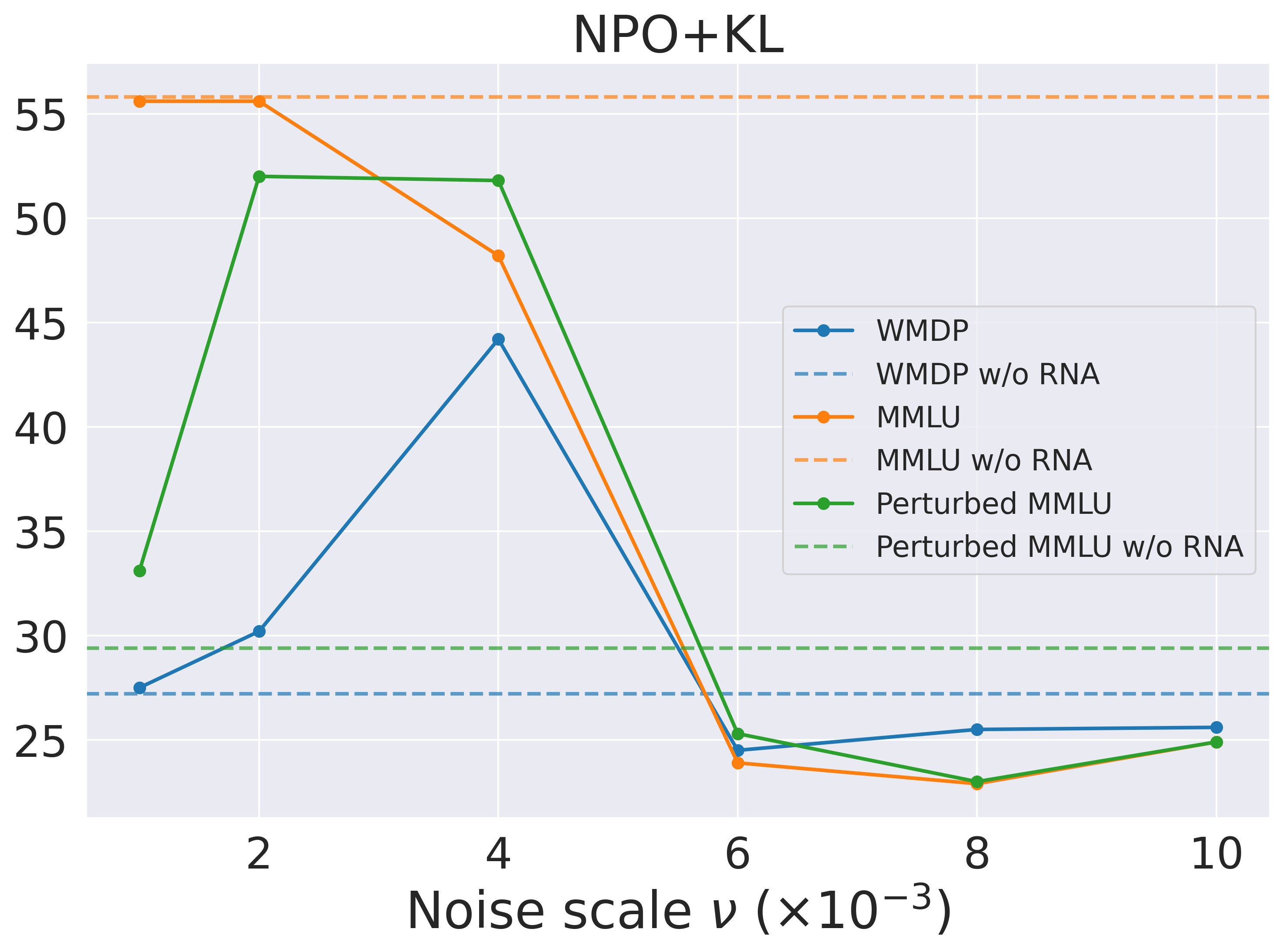}}
    \end{minipage}
    \begin{minipage}[b]{0.32\textwidth}
    \centerline{\includegraphics[width=\textwidth]{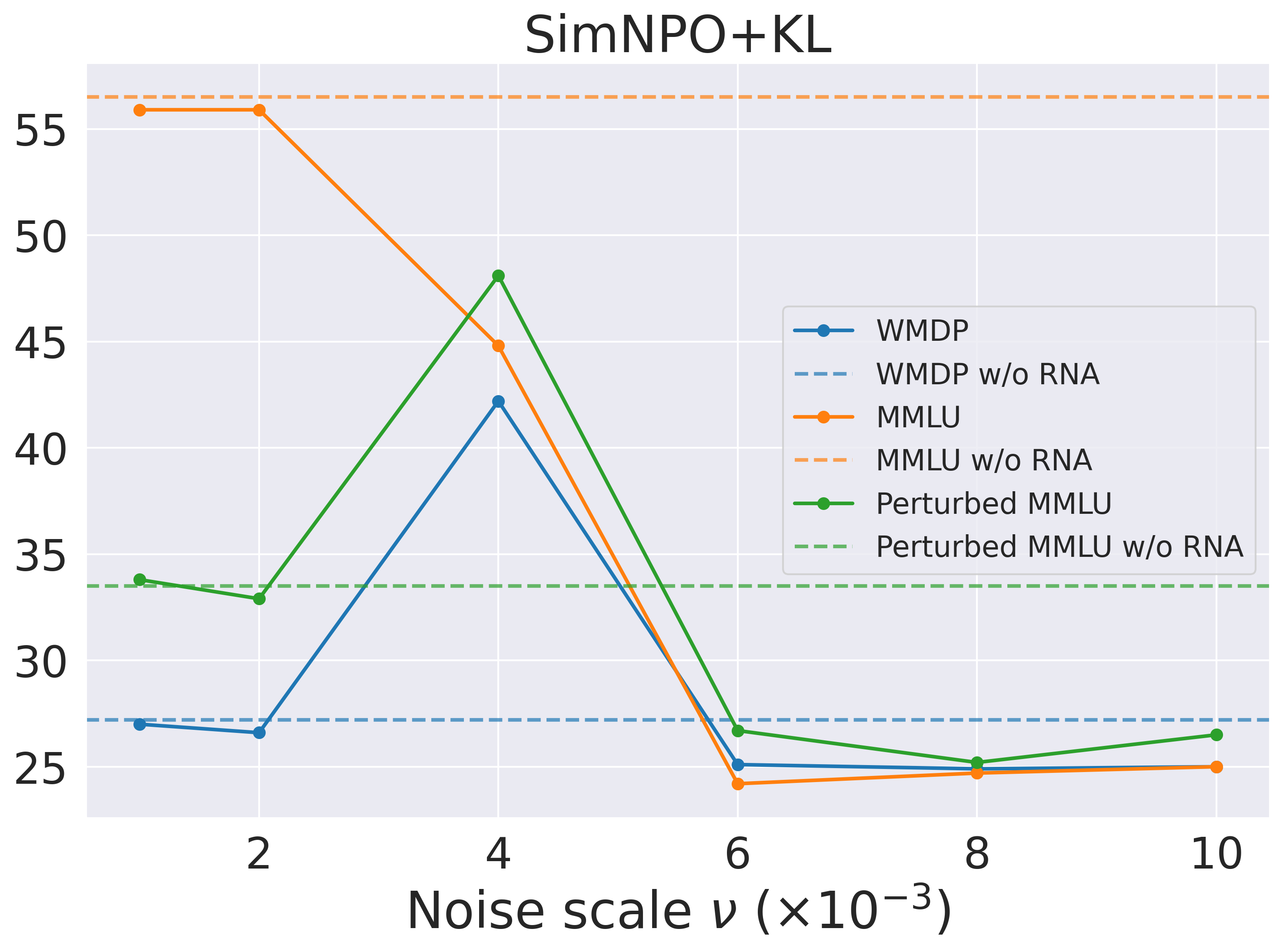}}
    \end{minipage}

    \begin{minipage}[b]{0.32\textwidth}
    \centerline{\includegraphics[width=\textwidth]{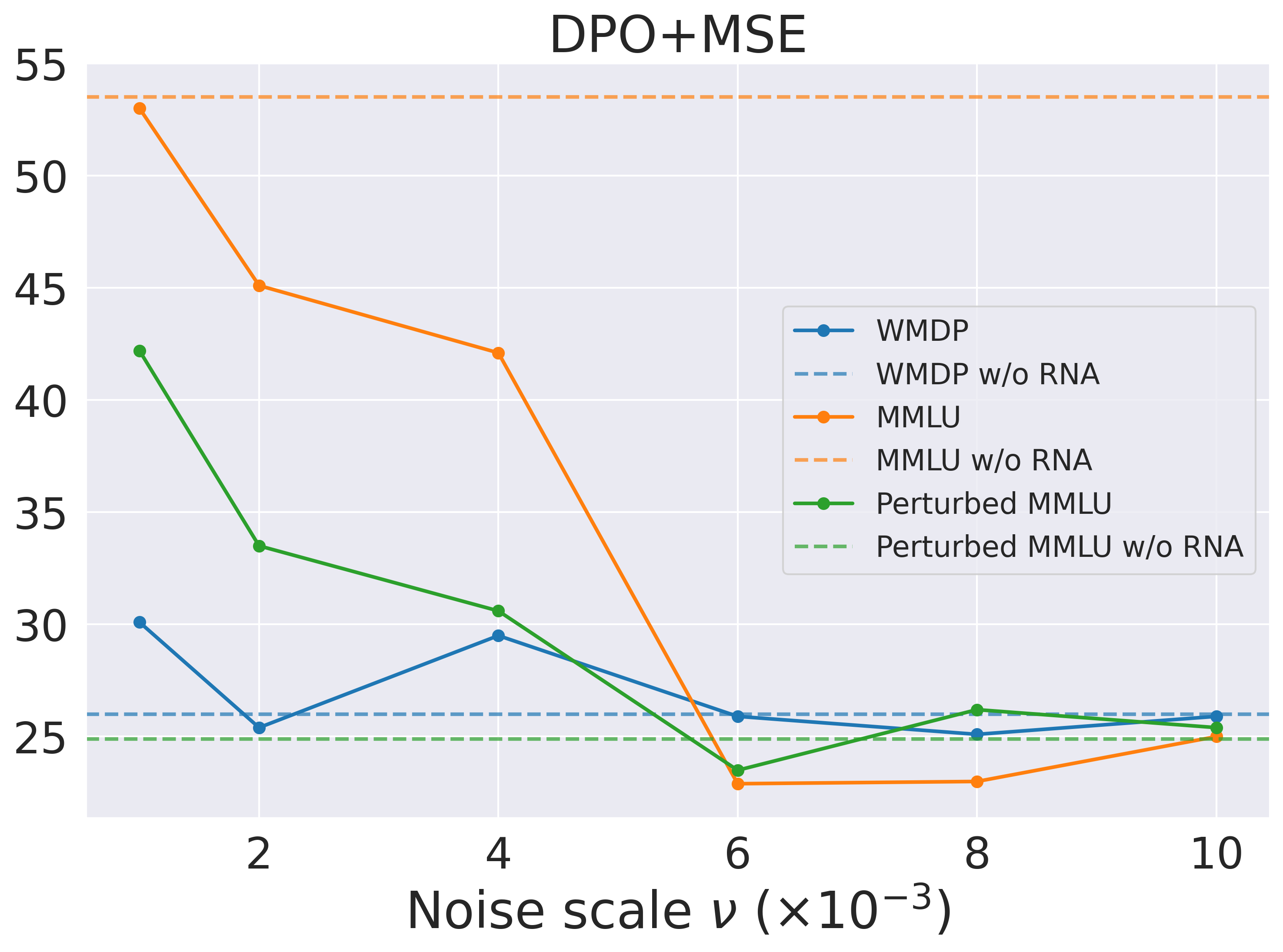}}
    \end{minipage}
    \begin{minipage}[b]{0.32\textwidth}
    \centerline{\includegraphics[width=\textwidth]{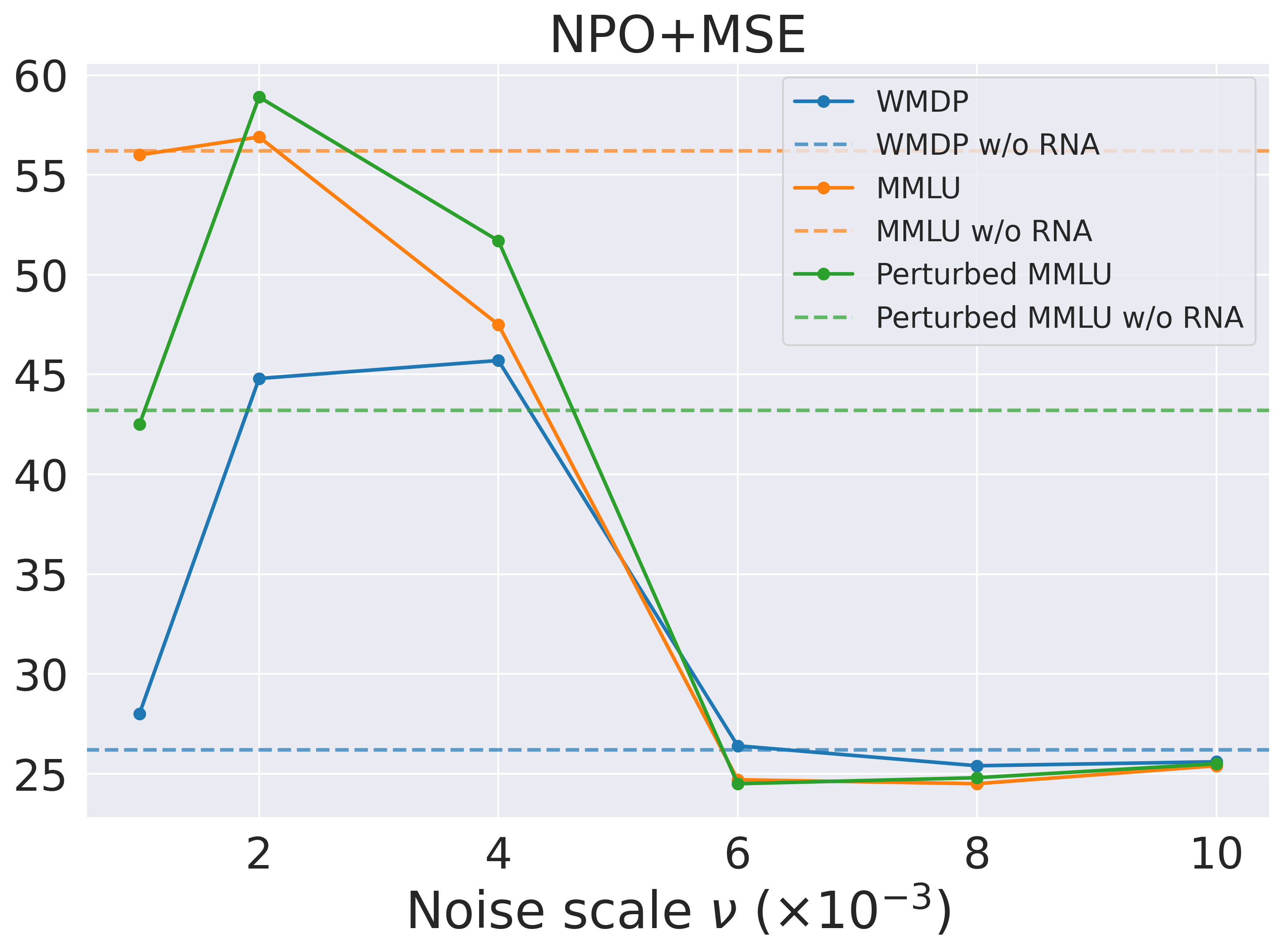}}
    \end{minipage}
    \begin{minipage}[b]{0.32\textwidth}
    \centerline{\includegraphics[width=\textwidth]{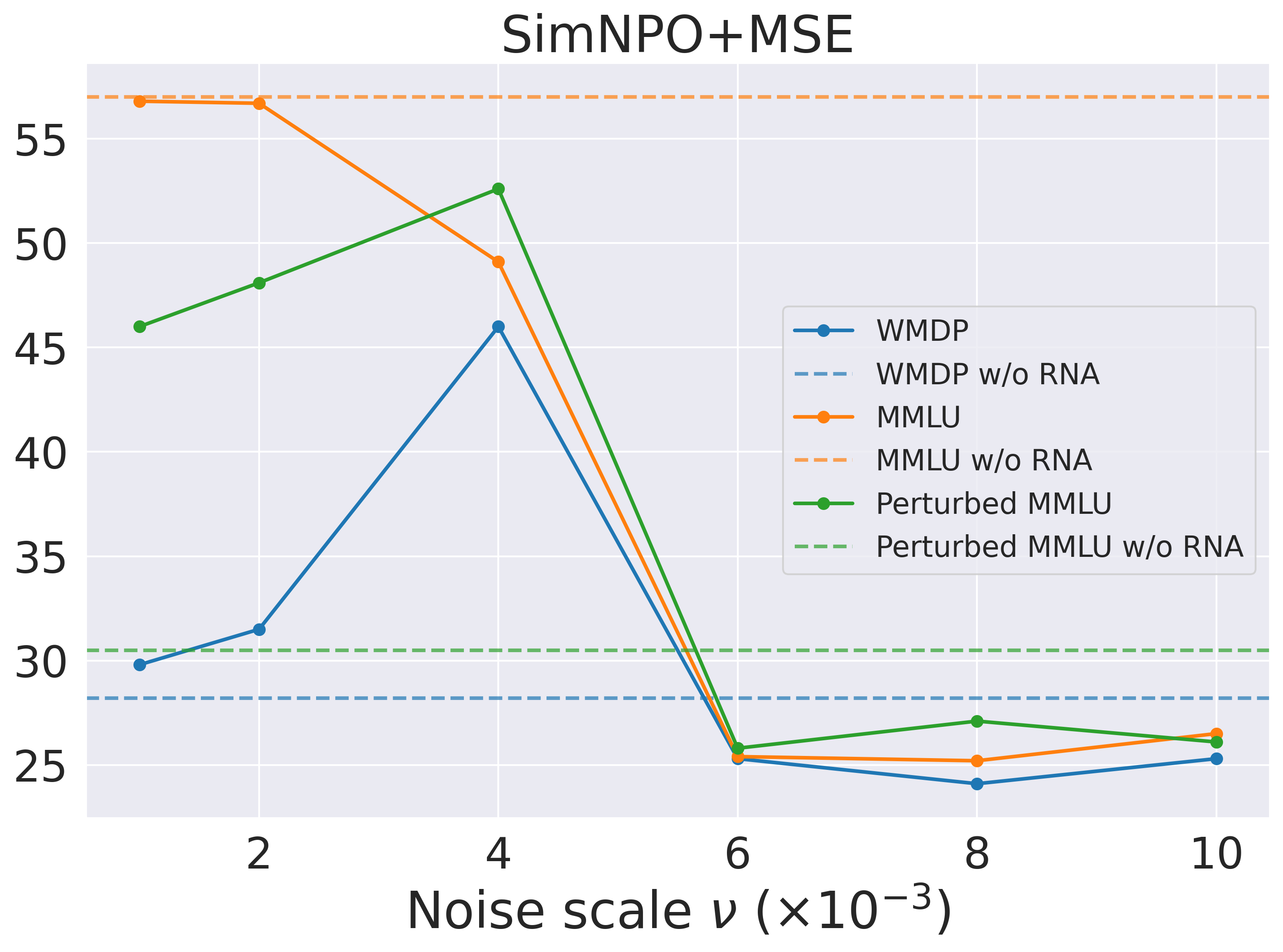}}
    \end{minipage}

    \caption{\textit{Full-layer injection}: accuracy of RNA models on MMLU, perturbed MMLU and WMDP (avg. of Biology and Cyber).}
    \label{fig:10}
    \end{center}
\end{figure*}

\textbf{Hyperparameters.} For (1) and (2), we inject a fixed noise with $\nu = 10^{-2}$. For (3), we perform grid search for $\nu \in \{10^{-3}, 2\times 10^{-3}, 4\times 10^{-3}, 6\times 10^{-3}, 8\times 10^{-3}, 10^{-2}\}$. 

\textbf{Results.} Figure~\ref{fig:8}–\ref{fig:10} demonstrate that RNA generally improves the robustness of unlearned models. While Figure~\ref{fig:8} and~\ref{fig:9} show improvements in both settings, no consistent trend emerges across all methods. Notably, \textit{models trained with MSE retain-loss achieve significant gains from RNA}. Figure~\ref{fig:10} further shows that \textit{injecting noise into all layers is particularly effective at moderate noise levels} (\textit{e.g.,} $1\times 10^{-3}$). However, as the noise scale $\nu$ increases, model accuracy declines sharply. Importantly, MMLU accuracy remains stable with RNA integration, highlighting that RNA not only boosts robustness but also preserves general knowledge and capabilities.

\section{Robustness of Unlearned Models Against Prompt Attacks}
\label{appendix:RNA_robustness_other_attacks}
\begin{table}[h]
    \centering
    \caption{\label{tab:attack}
    Accuracy under attack (AuA $\uparrow$) and ROUGE-L ($\uparrow$) of unlearning methods on adversarial perturbed MMLU,  comparing Original vs. \textbf{w/ RNA}. Improvements are shown in \textcolor{blue}{blue}, drops in \textcolor{red}{red}.
    }
    \resizebox{0.95\linewidth}{!}{
     \setlength{\tabcolsep}{3pt}
    \begin{tabular}{llcccccccc}
        \toprule
        \multirow{2}{*}{\textbf{Methods}} & \multirow{2}{*}{} 
            & \multicolumn{2}{c}{\textbf{GCG}} 
            & \multicolumn{2}{c}{\textbf{TextBugger}} 
            & \multicolumn{2}{c}{\textbf{DeepWordBug}}
            & \multicolumn{2}{c}{\textbf{TextFooler}} \\
        \cmidrule(lr){3-4} \cmidrule(lr){5-6} \cmidrule(lr){7-8} \cmidrule(lr){9-10}
            & & AuA  & ROUGE-L & AuA & ROUGE-L & AuA & ROUGE-L & AuA & ROUGE-L \\
        \midrule 
        Base & Original & $40.3$ & --- & $33.6$ & --- & $39.6$ & --- & $52.9$ & --- \\
        \midrule
        \multicolumn{10}{c}{\textbf{Representation Misdirection}} \\
        \multirow{2}{*}{RMU} 
            & Original & $33.6$ & $63.0$ & $30.5$ & $81.2$ & $38.2$ & $76.8$ & $50.5$ & $85.2$ \\
            & $\cellcolor{gray!15}$\textbf{w/ RNA}   & $\cellcolor{gray!15}$$40.3${\tiny\textcolor{blue}{$+6.7$}} & $\cellcolor{gray!15}$$60.9${\tiny\textcolor{red}{$-2.1$}} & $\cellcolor{gray!15}$$30.5${\tiny\textcolor{black}{$+0.0$}} & $\cellcolor{gray!15}$$79.1${\tiny\textcolor{red}{$-1.1$}} & $\cellcolor{gray!15}$$38.9${\tiny\textcolor{blue}{$+0.7$}} & $\cellcolor{gray!15}$$76.4${\tiny\textcolor{red}{$-0.4$}} & $\cellcolor{gray!15}$$50.1${\tiny\textcolor{red}{$-0.4$}} & $\cellcolor{gray!15}$$84.2${\tiny\textcolor{red}{$-1.0$}} \\\midrule
        \multirow{2}{*}{Adap. RMU}
            & Original & $38.5$ & $63.5$ & $30.5$ & $80.0$ & $38.9$ & $76.2$ & $49.8$ & $83.3$ \\
            & $\cellcolor{gray!15}$\textbf{w/ RNA}   & $\cellcolor{gray!15}$$43.5${\tiny\textcolor{blue}{$+5.0$}} & $\cellcolor{gray!15}$$62.4${\tiny\textcolor{red}{$-1.1$}} & $\cellcolor{gray!15}$$30.8${\tiny\textcolor{blue}{$+0.3$}} & $\cellcolor{gray!15}$$75.2${\tiny\textcolor{red}{$-4.8$}} & $\cellcolor{gray!15}$$39.2${\tiny\textcolor{blue}{$+0.3$}} & $\cellcolor{gray!15}$$72.7${\tiny\textcolor{red}{$-3.5$}} & $\cellcolor{gray!15}$$50.8${\tiny\textcolor{red}{$+1.0$}} & $\cellcolor{gray!15}$$79.2${\tiny\textcolor{red}{$-4.1$}} \\\midrule
        \multirow{2}{*}{RSV}
            & Original & $39.2$ & $63.0$ & $30.8$ & $78.2$ & $38.5$ & $75.3$ & $48.7$ & $80.6$ \\
            & $\cellcolor{gray!15}$\textbf{w/ RNA}   & $\cellcolor{gray!15}$$38.2${\tiny\textcolor{red}{$-1.0$}} & $\cellcolor{gray!15}$$62.5${\tiny\textcolor{red}{$-0.5$}} & $\cellcolor{gray!15}$$27.0${\tiny\textcolor{red}{$-3.8$}} & $\cellcolor{gray!15}$$77.1${\tiny\textcolor{red}{$-1.1$}} & $\cellcolor{gray!15}$$35.0${\tiny\textcolor{red}{$-3.5$}} & $\cellcolor{gray!15}$$73.7${\tiny\textcolor{red}{$-1.6$}} & $\cellcolor{gray!15}$$44.2${\tiny\textcolor{red}{$-4.5$}} & $\cellcolor{gray!15}$$83.2${\tiny\textcolor{red}{$+2.6$}} \\
        \midrule
        \multicolumn{10}{c}{\textbf{Preference Optimization}} \\
        \multirow{2}{*}{NPO+KL}
            & Original & $35.4$ & $51.2$ & $26.3$ & $61.3$ & $34.7$ & $61.3$ & $43.1$ & $67.7$ \\
            & $\cellcolor{gray!15}$\textbf{w/ RNA}   & $\cellcolor{gray!15}$$31.2${\tiny\textcolor{red}{$-4.2$}} & $\cellcolor{gray!15}$$55.0${\tiny\textcolor{blue}{$+4.8$}} & $\cellcolor{gray!15}$$25.2${\tiny\textcolor{red}{$-1.1$}} & $\cellcolor{gray!15}$$64.1${\tiny\textcolor{blue}{$+2.8$}} & $\cellcolor{gray!15}$$31.2${\tiny\textcolor{red}{$-3.5$}} & $\cellcolor{gray!15}$$62.3${\tiny\textcolor{blue}{$+1.0$}} & $\cellcolor{gray!15}$$39.2${\tiny\textcolor{red}{$-3.9$}} & $\cellcolor{gray!15}$$67.0${\tiny\textcolor{red}{$-0.7$}} \\\midrule
        \multirow{2}{*}{NPO+MSE}
            & Original & $40.7$ & $57.6$ & $29.1$ & $66.7$ & $40.3$ & $66.0$ & $46.6$ & $71.2$ \\
            & $\cellcolor{gray!15}$\textbf{w/ RNA}   & $\cellcolor{gray!15}$$34.3${\tiny\textcolor{red}{$-6.4$}} & $\cellcolor{gray!15}$$51.6${\tiny\textcolor{red}{$-6.0$}} & $\cellcolor{gray!15}$$24.2${\tiny\textcolor{red}{$-4.9$}} & $\cellcolor{gray!15}$$66.2${\tiny\textcolor{red}{$-0.5$}} & $\cellcolor{gray!15}$$34.3${\tiny\textcolor{red}{$-6.0$}} & $\cellcolor{gray!15}$$64.4${\tiny\textcolor{red}{$-1.6$}} & $\cellcolor{gray!15}$$46.3${\tiny\textcolor{red}{$-0.3$}} & $\cellcolor{gray!15}$$70.9${\tiny\textcolor{red}{$-0.3$}} \\\midrule
        \multirow{2}{*}{DPO+KL}
            & Original &   $30.1$ & $47.3$ & $27.7$ & $58.1$ & $34.0$ & $57.3$ & $41.7$ & $61.1$ \\
            & $\cellcolor{gray!15}$\textbf{w/ RNA}   & $\cellcolor{gray!15}$$29.8${\tiny\textcolor{red}{$-0.3$}} & $\cellcolor{gray!15}$$56.4${\tiny\textcolor{blue}{$+9.1$}} & $\cellcolor{gray!15}$$29.1${\tiny\textcolor{blue}{$+1.4$}} & $\cellcolor{gray!15}$$67.0${\tiny\textcolor{blue}{$+8.9$}} & $\cellcolor{gray!15}$$33.3${\tiny\textcolor{red}{$-0.7$}} & $\cellcolor{gray!15}$$64.7${\tiny\textcolor{blue}{$+7.4$}} & $\cellcolor{gray!15}$$42.8${\tiny\textcolor{blue}{$+1.1$}} & $\cellcolor{gray!15}$$68.9${\tiny\textcolor{blue}{$+7.8$}}\\\midrule
         \multirow{2}{*}{DPO+MSE}
            & Original & $28.0$ & $50.2$ & $19.2$ & $61.1$ & $28.4$ & $61.5$ & $36.8$ & $65.9$ \\
            & $\cellcolor{gray!15}$\textbf{w/ RNA}   & $\cellcolor{gray!15}$$27.7${\tiny\textcolor{red}{$-0.3$}} & $\cellcolor{gray!15}$$55.6${\tiny\textcolor{blue}{$+5.4$}} & $\cellcolor{gray!15}$$23.5${\tiny\textcolor{blue}{$+4.3$}} & $\cellcolor{gray!15}$$57.5${\tiny\textcolor{red}{$-3.6$}} & $\cellcolor{gray!15}$$30.5${\tiny\textcolor{blue}{$+2.1$}} & $\cellcolor{gray!15}$$58.4${\tiny\textcolor{red}{$-3.1$}} & $\cellcolor{gray!15}$$39.6${\tiny\textcolor{blue}{$+2.8$}} & $\cellcolor{gray!15}$$64.3${\tiny\textcolor{red}{$-1.6$}}\\\midrule
        \multirow{2}{*}{SimNPO+KL}
            & Original & $29.1$ & $49.9$ & $27.0$ & $61.1$ & $34.7 $& $60.7$ & $41.7$ & $64.2$ \\
            & $\cellcolor{gray!15}$\textbf{w/ RNA}   & $\cellcolor{gray!15}$$30.1${\tiny\textcolor{blue}{$+1.0$}} & $\cellcolor{gray!15}$$55.0${\tiny\textcolor{blue}{$+5.1$}} & $\cellcolor{gray!15}$$27.0${\tiny\textcolor{black}{$+0.0$}} & $\cellcolor{gray!15}$$61.7${\tiny\textcolor{blue}{$+0.6$}} & $\cellcolor{gray!15}$$35.4${\tiny\textcolor{blue}{$+0.7$}} & $\cellcolor{gray!15}$$60.8${\tiny\textcolor{blue}{$+0.1$}} & $\cellcolor{gray!15}$$44.2${\tiny\textcolor{blue}{$+2.5$}} & $\cellcolor{gray!15}$$67.0${\tiny\textcolor{blue}{$+2.8$}} \\\midrule
        \multirow{2}{*}{SimNPO+MSE}
            & Original & $35.4$ & $52.3$ & $29.8$ & $66.7$ & $36.8$ & $66.8$ & $44.5$ & $72.3$ \\
            & $\cellcolor{gray!15}$\textbf{w/ RNA}  & $\cellcolor{gray!15}$$38.2${\tiny\textcolor{blue}{$+3.8$}} & $\cellcolor{gray!15}$$58.6${\tiny\textcolor{blue}{$+6.3$}} &
            $\cellcolor{gray!15}$$31.5${\tiny\textcolor{blue}{$+1.7$}} & $\cellcolor{gray!15}$$71.9${\tiny\textcolor{blue}{$+5.2$}} & 
            $\cellcolor{gray!15}$$42.8${\tiny\textcolor{blue}{$+6.0$}} & $\cellcolor{gray!15}$$70.1${\tiny\textcolor{blue}{$+3.3$}} & $\cellcolor{gray!15}$$48.4${\tiny\textcolor{blue}{$+3.9$}} & 
            $\cellcolor{gray!15}$$75.7${\tiny\textcolor{blue}{$+3.4$}} \\
        \bottomrule
    \end{tabular}}
    
\end{table}
The retaining process is reframed as a backdoor defense against a specific type of backdoor (forget-tokens). The noise injection in RNA is reminiscent of adversarial training. There is a well-known phenomenon that, \textit{when defending against one type of attack, can inadvertently create new vulnerabilities or increase susceptibility to other attacks}~\citep{tramer2019adversarial, NEURIPS2020_8b406655, kamath2021can} on the general capabilities. In this section, we present an analysis of whether RNA makes the model become more susceptible to other adversarial attacks.

\textbf{Setup.} We employ four widely used adversarial attack methods to evaluate the side effects of RNA, including Greedy Coordinate Gradient (GCG;~\cite{zou2023universal}), TextBugger~\citep{li2018textbugger}, DeepWordBug~\citep{gao2018black}, and TextFooler~\citep{jin2020bert}. TextFooler is an adversarial word-substitution method that relies on importance scores to identify and replace important words with corresponding synonyms. TextBugger generates adversarial prompts by augmenting text through character-level perturbations. DeepWordBug generates adversarial prompts by introducing character-level perturbations, such as insertions, deletions, and substitutions. GCG is a gradient-based attack that iteratively modifies injected adversarial tokens along the directions with respect to the largest increase of loss to construct adversarial prompts. We utilize optimal checkpoints from the main setting, as detailed in Subsection~\ref{appendix:implementation}.

\textbf{Empirical effects of unlearning against attacks.} We report the accuracy and ROUGE-L score under attack of the original unlearned model and RNA models in Table~\ref{tab:attack}. As we can observe in this table, under attacks, \textbf{all unlearning methods consistently reduce models' robustness, making the models more vulnerable to adversarial prompt attacks.} For instance, the base model achieves $40.3$ AuA under GCG attack, whereas unlearned models drop to the range of $30 \to 39$ (\textit{e.g.,} RMU $33.6$, Adaptive RMU $38.5$, RSV $39.2$, NPO+KL $35.4$). Similar reductions are observed under TextBugger (base $33.6$ vs. unlearned $26 \to 31$) and DeepWordBug (base $39.6$ vs. unlearned $28 \to 38$, except NPO+MSE: $40.3$). 

\textbf{Effects of RNA on model robustness.} We observed that  RNA's impact is dependent on the underlying unlearning method, with no clear trends observed. In summary, unlearning uniformly reduces models' robustness, while RNA can partially mitigate these vulnerabilities in some cases. 

\section{Effects of RNA on Chain-of-Thought Prompting}
\label{appendix:cot}
Chain-of-Thought (CoT;~\cite{wei2022chain}) is one of the most commonly used prompting techniques for improving LLM reasoning capabilities. The effect of RNA on CoT is a fairly interesting point that might need to be investigated. We conducted additional experiments on GSM8K~\citep{cobbe2021training} and GPQA~\citep{rein2024gpqa} with zero-shot, $4$-shot, and $8$-shot CoT with Zephyr-7B model. The results shown in Table~\ref{tab:cot} demonstrated that noise added by RNA introduces minor effects on CoT.
\begin{table}[t]
    \centering
    \caption{Effects of RNA on Chain-of-Thought Prompting.}
    \resizebox{0.9\textwidth}{!}{
    \begin{tabular}{llcccccc} 
        \toprule
        \multicolumn{2}{c}{\textbf{Method}} & 
        \multicolumn{3}{c}{\textbf{GSM8K}} & 
        \multicolumn{3}{c}{\textbf{GPQA}} \\ 
        \cmidrule(lr){3-5} \cmidrule(lr){6-8}
        & & CoT zero-shot & CoT $4$-shot & CoT $8$-shot 
        & CoT zero-shot & CoT $4$-shot & CoT $8$-shot \\
        \midrule
        
        Base & Original 
        & $15.3$ & $38.9$ & $42.2$ 
        & $12.0$ & $22.3$ & $28.3$ \\  
        \midrule
        \multicolumn{8}{c}{\textbf{Representation Misdirection}} \\
        \multirow{2}{*}{RMU} 
            & Original 
            & $15.1$ & $37.4$ & $40.8$ & $12.0$ & $24.5$ & $21.8$ \\ 
            & $\cellcolor{gray!15}$\textbf{w/ RNA}  
            & $\cellcolor{gray!15}$$13.1${\tiny\textcolor{red}{$-2.0$}} & $\cellcolor{gray!15}$$36.5${\tiny\textcolor{red}{$-0.9$}} & $\cellcolor{gray!15}$$40.6${\tiny\textcolor{red}{$-0.2$}} & $\cellcolor{gray!15}$$12.0${\tiny\textcolor{gray}{$+0.0$}} & $\cellcolor{gray!15}$$24.3${\tiny\textcolor{red}{$-0.2$}} & $\cellcolor{gray!15}$$24.1${\tiny\textcolor{blue}{$+2.3$}}  \\ 
        \midrule
        
        \multirow{2}{*}{Adaptive RMU} 
            & Original 
            & $12.9$ & $36.7$ & $41.5$ & $10.9$ & $25.2$ & $21.6$ \\ 
            & $\cellcolor{gray!15}$\textbf{w/ RNA} 
            & $\cellcolor{gray!15}$$15.1${\tiny\textcolor{blue}{$+2.2$}} & $\cellcolor{gray!15}$$37.5${\tiny\textcolor{blue}{$+0.8$}} & $\cellcolor{gray!15}$$41.0${\tiny\textcolor{red}{$-0.5$}} & $\cellcolor{gray!15}$$12.2${\tiny\textcolor{blue}{$+1.3$}} & $\cellcolor{gray!15}$$19.8${\tiny\textcolor{red}{$-5.4$}} & $\cellcolor{gray!15}$$23.4${\tiny\textcolor{blue}{$+1.8$}} \\ 
        \midrule
        
        \multirow{2}{*}{RSV} 
            & Original 
            & $17.4$ & $36.7$ & $42.5$ & $8.2$ & $25.4$ & $21.4$ \\ 
            & $\cellcolor{gray!15}$\textbf{w/ RNA} 
            & $\cellcolor{gray!15}$$16.9${\tiny\textcolor{red}{$-0.5$}} & $\cellcolor{gray!15}$$37.5${\tiny\textcolor{blue}{$+0.8$}} & $\cellcolor{gray!15}$$42.8${\tiny\textcolor{blue}{$+0.3$}}  & $\cellcolor{gray!15}$$10.4${\tiny\textcolor{blue}{$+2.2$}} & $\cellcolor{gray!15}$$23.2${\tiny\textcolor{red}{$-2.2$}} & $\cellcolor{gray!15}$$25.6${\tiny\textcolor{blue}{$+4.2$}} \\ 
        \midrule
        \multicolumn{8}{c}{\textbf{Preference Optimization}} \\
        \multirow{2}{*}{NPO+KL} 
            & Original 
            & $14.2$ & $36.2$ & $40.1$ & $10.4$ & $27.0$ & $21.6$  \\ 
            & $\cellcolor{gray!15}$\textbf{w/ RNA} 
            & $\cellcolor{gray!15}$$14.7${\tiny\textcolor{blue}{$+0.5$}} & $\cellcolor{gray!15}$$36.7${\tiny\textcolor{blue}{$+0.5$}} & $\cellcolor{gray!15}$$38.9${\tiny\textcolor{red}{$-1.2$}} & $\cellcolor{gray!15}$$9.3${\tiny\textcolor{red}{$-1.1$}} & $\cellcolor{gray!15}$$22.7${\tiny\textcolor{red}{$-4.3$}} & $\cellcolor{gray!15}$$23.6${\tiny\textcolor{blue}{$+2.0$}}  \\ 
        \midrule
        
        \multirow{2}{*}{NPO+MSE} 
            & Original 
            & $10.6$ & $37.6$ & $41.0$ & $11.3$ & $26.1$ & $22.3$ \\ 
            & $\cellcolor{gray!15}$\textbf{w/ RNA} 
            & $\cellcolor{gray!15}$$11.2${\tiny\textcolor{blue}{$+0.6$}} & $\cellcolor{gray!15}$$35.7${\tiny\textcolor{red}{$-1.9$}} & $\cellcolor{gray!15}$$38.8${\tiny\textcolor{red}{$-2.2$}} & $\cellcolor{gray!15}$$9.1${\tiny\textcolor{red}{$-2.2$}} & $\cellcolor{gray!15}$$23.4${\tiny\textcolor{red}{$-2.7$}} & $\cellcolor{gray!15}$$21.4${\tiny\textcolor{red}{$-0.9$}} \\ 
        \midrule
        
        \multirow{2}{*}{DPO+KL} 
            & Original 
            & $11.9$ & $36.1$ & $37.2$ & $11.3$ & $23.2$ & $19.8$ \\ 
            & $\cellcolor{gray!15}$\textbf{w/ RNA} 
            & $\cellcolor{gray!15}$$11.3${\tiny\textcolor{red}{$-0.6$}} & $\cellcolor{gray!15}$$36.9${\tiny\textcolor{blue}{$+0.8$}} & $\cellcolor{gray!15}$$38.7${\tiny\textcolor{blue}{$+1.5$}} & $\cellcolor{gray!15}$$11.3${\tiny\textcolor{gray}{$+0.0$}} & $\cellcolor{gray!15}$$23.2${\tiny\textcolor{gray}{$+0.0$}} & $\cellcolor{gray!15}$$19.8${\tiny\textcolor{gray}{$+0.0$}} \\ 
        \midrule
        
        \multirow{2}{*}{DPO+MSE} 
            & Original 
            & $10.0$ & $36.0$ & $39.8$ & $11.6$ & $23.8$ & $22.5$ \\ 
            & $\cellcolor{gray!15}$\textbf{w/ RNA} 
            & $\cellcolor{gray!15}$$14.9${\tiny\textcolor{blue}{$+4.9$}} & $\cellcolor{gray!15}$$37.5${\tiny\textcolor{blue}{$+1.5$}} & $\cellcolor{gray!15}$$40.5${\tiny\textcolor{blue}{$+0.7$}} & $\cellcolor{gray!15}$$14.2${\tiny\textcolor{blue}{$+2.6$}} & $\cellcolor{gray!15}$$24.3${\tiny\textcolor{blue}{$+0.5$}} & $\cellcolor{gray!15}$$24.1${\tiny\textcolor{blue}{$+1.6$}} \\ 
        \midrule
        
        \multirow{2}{*}{SimNPO+KL} 
            & Original 
            & $15.6$ & $36.5$ & $41.0$ & $11.1$ & $20.9$ & $18.7$ \\ 
            & $\cellcolor{gray!15}$\textbf{w/ RNA} 
            & $\cellcolor{gray!15}$$17.8${\tiny\textcolor{blue}{$+2.2$}} & $\cellcolor{gray!15}$$37.5${\tiny\textcolor{blue}{$+1.0$}} & $\cellcolor{gray!15}$$41.8${\tiny\textcolor{blue}{$+0.8$}} & $\cellcolor{gray!15}$$8.0${\tiny\textcolor{red}{$-3.1$}} & $\cellcolor{gray!15}$$23.6${\tiny\textcolor{blue}{$+2.7$}} & $\cellcolor{gray!15}$$20.3${\tiny\textcolor{blue}{$+1.6$}} \\ 
        \midrule
        \multirow{2}{*}{SimNPO+MSE} 
            & Original 
            & $11.0$ & $38.2$ & $39.5$ & $8.2$ & $24.5$ & $24.3$ \\ 
            & $\cellcolor{gray!15}$\textbf{w/ RNA} 
            & $\cellcolor{gray!15}$$11.0${\tiny\textcolor{gray}{$+0.0$}} & $\cellcolor{gray!15}$$37.9${\tiny\textcolor{red}{$-0.3$}} & $\cellcolor{gray!15}$$40.2${\tiny\textcolor{blue}{$+0.7$}} & $\cellcolor{gray!15}$$13.6${\tiny\textcolor{blue}{$+5.4$}} & $\cellcolor{gray!15}$$25.6${\tiny\textcolor{blue}{$+1.1$}} & $\cellcolor{gray!15}$$23.2${\tiny\textcolor{red}{$-1.1$}} \\ 
        \bottomrule 
    \end{tabular}}
    \label{tab:cot}
\end{table}

\section{Performance of Other Models}
\label{appendix:other_models}
Our experiments in the main text are based on the Zephyr-7B model, which serves as a representative setup. To assess RNA's generalization beyond the original setup, we conducted additional experiments using the Llama-3-8B~\citep{dubey2024llama} and Mistral-7B~\citep{mistral} models on two representative unlearning methods from 2 classes: RMU and NPO+KL, across various tasks. These results provide further empirical evidence for RNA's generalization and robustness.

\paragraph{Hyperparameters.} All models are fine-tuned using Adam~\citep{kingma2014adam} for $T=500$ update steps with a learning rate of $5\times 10^{-5}$, batch size $4$, and maximum sequence length of $500$ for WMDP-Biology and $768$ for WMDP-Cyber. The unlearned layer is fixed at $l=7$. Retain weights are set to $\alpha_{\text{biology}}=\alpha_{\text{cyber}}=1200$ for both models. The coefficient values are $c_{\text{biology}}=c_{\text{cyber}}=20$ for Llama-3-8B and $c_{\text{biology}}=c_{\text{cyber}}=6.5$ for Mistral-7B. For NPO+KL, we perform a grid search over $(\alpha_{\text{biology}}, \alpha_{\text{cyber}})$ and select $(5,10)$ for Llama-3-8B and $(30,40)$ for Mistral-7B. For RMU w/ RNA, we set the perturbed layer to $l=7$, tune the noise scale via grid search, and report the best performance at $\nu=7\times 10^{-2}$ (Llama-3-8B) and $\nu=3\times 10^{-2}$ (Mistral-7B). For NPO+KL w/ RNA, we also perturb layer $l=7$ and select the best scales $\nu=6\times 10^{-2}$ (Llama-3-8B) and $\nu=3\times 10^{-3}$ (Mistral-7B).

\paragraph{Results.} As shown in Table~\ref{tab:models}, across Llama-3-8B and Mistral-7B, RNA significantly enhances unlearning robustness of models while introducing a small trade-off on forget performance. For forget-tasks (WMDP-Biology and Cyber), RNA slightly increases the accuracy, \textit{e.g.,} RMU on Llama-3-8B drops from $31.4 \to 34.6$ ($-3.2$) and NPO+KL from $27.9 \to 33.2$ ($-5.3$). For retain-tasks (MMLU and Perturbed MMLU and MMLU subsets), RNA substantially improves performance, particularly on perturbed MMLU and MMLU subsets (C. Bio. and C. Sec.). For example, perturbed MMLU on Llama-3-8B with NPO+KL improves from $26.2 \to 47.3$ ($+21.1$), and MMLU C. Bio. from $30.5 \to 55.5$ ($+25.0$). Overall, RNA effectively recovers or enhances accuracy on retain-tasks while slightly compromising forget-task performance, demonstrating a favorable trade-off between unlearning and model robustness. 

\begin{table}[t]
\centering
    \caption{\label{tab:models} 
     Performance of Llama-3-8B and Mistral-7B on WMDP and MMLU, Perturbed MMLU, MMLU subsets benchmarks using RMU and NPO+KL, comparing Original vs. \textbf{w/ RNA}. Improvements are shown in \textcolor{blue}{blue}, drops in \textcolor{red}{red}.
    }
    \vspace{0.25cm}
    \resizebox{0.9\linewidth}{!}{
    \setlength{\tabcolsep}{3pt}
    \begin{tabular}{lcccccc cccccc}
        \toprule
        \multirow{3}{*}{\textbf{Models \& Methods}} 
            & \multicolumn{4}{c}{\textbf{Llama-3-8B}} 
            & \multicolumn{4}{c}{\textbf{Mistral-7B}} \\
        \cmidrule(lr){2-5} \cmidrule(lr){6-9} 
            & \multicolumn{2}{c}{RMU} 
            & \multicolumn{2}{c}{NPO+KL} 
            & \multicolumn{2}{c}{RMU} 
            & \multicolumn{2}{c}{NPO+KL} \\
        \cmidrule(lr){2-3} \cmidrule(lr){4-5} 
        \cmidrule(lr){6-7} \cmidrule(lr){8-9} 
            & Original & $\cellcolor{gray!15}$\textbf{w/ RNA} & Original & $\cellcolor{gray!15}$\textbf{w/ RNA} & Original & $\cellcolor{gray!15}$\textbf{w/ RNA} 
            & Original & $\cellcolor{gray!15}$\textbf{w/ RNA} \\
        \midrule
        WMDP ($\downarrow$) & $31.4$ & $\cellcolor{gray!15}$$34.6${\tiny\textcolor{red}{$-3.2$}} & $27.9$ & $\cellcolor{gray!15}$$33.2${\tiny\textcolor{red}{$-5.3$}} & $31.7$ & $\cellcolor{gray!15}$$31.7${\tiny\textcolor{black}{$+0.0$}} & $29.3$ & $\cellcolor{gray!15}$$34.0${\tiny\textcolor{red}{$-4.7$}}  \\
        \midrule
        MMLU ($\uparrow$) & $60.3$ & $\cellcolor{gray!15}$$60.2${\tiny\textcolor{red}{$-0.1$}} & $53.8$ & $\cellcolor{gray!15}$$54.4${\tiny\textcolor{blue}{$+0.6$}} & $58.2$ & $\cellcolor{gray!15}$$58.6${\tiny\textcolor{blue}{$+0.4$}} & $56.5$ & $\cellcolor{gray!15}$$56.4${\tiny\textcolor{red}{$-0.1$}} & \\
        \midrule
        Perturbed MMLU ($\uparrow$) & $34.4$ & $\cellcolor{gray!15}$$47.3${\tiny\textcolor{blue}{$+12.9$}} & $26.2$ & $\cellcolor{gray!15}$$47.3${\tiny\textcolor{blue}{$+21.1$}} & $27.2$ & $\cellcolor{gray!15}$$42.2${\tiny\textcolor{blue}{$+15.0$}}  & $31.4$ & $\cellcolor{gray!15}$$53.5${\tiny\textcolor{blue}{$+22.1$}} & \\
        \midrule
        MMLU C. Bio. ($\uparrow$) & $34.7$ & $\cellcolor{gray!15}$$60.4${\tiny\textcolor{blue}{$+25.7$}} & $30.5$ & $\cellcolor{gray!15}$$55.5${\tiny\textcolor{blue}{$+25.0$}} & $25.0$ & $\cellcolor{gray!15}$$38.1${\tiny\textcolor{blue}{$+13.1$}} & $32.6$ & $\cellcolor{gray!15}$$52.0${\tiny\textcolor{blue}{$+19.4$}} & \\
        \midrule
        MMLU C. Sec. ($\uparrow$) & $29.0$ & $\cellcolor{gray!15}$$33.0${\tiny\textcolor{blue}{$+4.0$}} & $30.0$ & $\cellcolor{gray!15}$$46.0${\tiny\textcolor{blue}{$+16.0$}} & $33.0$ & $\cellcolor{gray!15}$$46.0${\tiny\textcolor{blue}{$+13.0$}} & $35.0$ & $\cellcolor{gray!15}$$30.0${\tiny\textcolor{red}{$-5.0$}} & \\
        \bottomrule
    \end{tabular}}
\end{table}

\section{Performance of RNA under Miscalibrated Unlearning}
\label{appendix:muse}
\textbf{Miscalibrated unlearning} refers to scenarios where unlearning is either \textbf{over-unlearn}, \textit{i.e.,} the model successfully unlearns the target knowledge but suffers catastrophic degradation in general knowledge, or \textbf{under-unlearn}, \textit{i.e.,} the model fails to sufficiently remove the target knowledge. When unlearning is under-unlearn, the backdoor signals are too weak, \textit{i.e.,} forget-representations are not well-aligned with random vectors, making them less harmful when they appear in retain-queries. Over-unlearn occurs when the unlearning methods fail to distinguish between forget and retain knowledge, leading to catastrophic degradation of both. In such cases, random noises injected by RNA may be either redundant (for small $\nu$) or recover both forget and retain knowledge (for large $\nu$). Theoretically, RNA is a variance reduction defense against sensitivity caused by forget-tokens, \textbf{not a method for miscalibrated unlearning strength}. When unlearning is poorly calibrated, smoothing from RNA becomes ineffective. We conduct an empirical analysis of these two cases to evaluate our intuition. Results are shown in Figure~\ref{tab:rmu_miscalibrated} and Figure~\ref{tab:npo_miscalibrated}. Overall, we found that RNA fails to enhance retain-robustness when unlearning is poorly calibrated.

\textbf{Setup.} We employ \textsc{Muse-news\_target}\footnote{\url{https://huggingface.co/muse-bench/MUSE-news_target}} as the base model for unlearning. \textsc{Muse-news\_target} is the Llama-2-7B~\citep{touvron2023llama} model fine-tuned on the News corpus (BBC news articles). We employ two representative unlearning methods, RMU and NPO+KL. 

\textbf{Hyperparameters.} For RMU, we perform a heuristic search over the coefficient $c \in [100, 120, 130, 140, 150]$. We set the retain-weight $\alpha_r=1200$ (coefficient of forget-loss), forget-weight $\alpha_f = 1.0$ (coefficient of retain-loss), $T=500$ gradient steps, unlearn with layer $l=7$, RNA noise added at layer $7$, learning rate $2e-5$, maximum sequence length $256$. MUSE-News forget-set is used as $\mathcal{D}_f$, Wikitext is used as $\mathcal{D}_r$. For NPO+KL, we search over $(\alpha_f, \alpha_r) \in [(1,1), (5,1), (10,1), (20,1), (1,5), (1,10), (1,20)]$, $\beta$ is set to $0.1$. For each pair $(\alpha_f, \alpha_r)$, we grid search RNA's noise scale $\nu \in [1e-3, 2e-3, 3e-3]$. We report VerbMem and KnowMem.

\begin{table}[t]
\centering
\caption{
Performance of original RMU unlearned models and RNA models under over-unlearn and under-unlearn in MUSE News.}
\setlength{\tabcolsep}{2pt}
\resizebox{0.975\linewidth}{!}{
\begin{tabular}{llcccc} 
    \toprule
    \multicolumn{2}{c}{\textbf{Method}} 
    & \textbf{VerbMem$_f$} $\downarrow$ 
    & \textbf{KnowMem$_f$} $\downarrow$  
    & \textbf{KnowMem$_r$} (benign) $\uparrow$ 
    & \textbf{KnowMem$_r$} (perturbed) $\uparrow$ \\ 
    \midrule
    \multicolumn{2}{c}{Base model (MUSE-news\_target)} 
    & $57.2$ & $64.2$ & $64.2$ & $51.8$ \\ 
    \midrule
    
    \multirow{4}{*}{RMU ($c=100$)} 
        & Original 
        & $57.3$ & $65.5$ & $55.0$ & $51.0$ (\textbf{under-unlearn}) \\ 
        & \textbf{w/ RNA} ($\nu=1\mathrm{e}{-3}$)  
        & $56.7$ & $66.1$ & $56.0$ & $49.3$ \\
        & \textbf{w/ RNA} ($\nu=2\mathrm{e}{-3}$)  
        & $56.6$ & $65.7$ & $55.1$ & $50.8$ \\ 
        & \textbf{w/ RNA} ($\nu=3\mathrm{e}{-3}$)  
        & $56.7$ & $66.1$ & $55.8$ & $50.4$ \\ 
    \midrule
    \multirow{4}{*}{RMU ($c=110$)} 
        & Original 
        & $56.5$	& $66.1$	&$55.1$	&$50.2$ (\textbf{under-unlearn})\\ 
        & \textbf{w/ RNA} ($\nu=1\mathrm{e}{-3}$)  
        & $56.4$&	$66.1$	&$55.7$	&$50.7$ \\
        & \textbf{w/ RNA} ($\nu=2\mathrm{e}{-3}$)  
        & $56.6$&$64.3$	&$54.9$	&$49.8$  \\ 
        & \textbf{w/ RNA} ($\nu=3\mathrm{e}{-3}$)  
        & $55.3$&$	65.0$&$	54.9$	&$50.8$  \\ 
    \midrule
    \multirow{4}{*}{RMU ($c=120$)} 
        & Original 
        & $56.2$& 	$64.1$& 	$55.8$	&$ 49.6$ (\textbf{under-unlearn})\\ 
        & \textbf{w/ RNA} ($\nu=1\mathrm{e}{-3}$)  
        & $55.8$	& $65.9$	& $55.2$	& $50.1$  \\
        & \textbf{w/ RNA} ($\nu=2\mathrm{e}{-3}$)  
        &$ 55.9$& 	$66.3$	& $55.3$& 	$50.2$  \\ 
        & \textbf{w/ RNA} ($\nu=3\mathrm{e}{-3}$)  
        &$ 56.0 $& $66.1$ &$ 55.9$ &	$50.8$\\ 
    \midrule
    \multirow{4}{*}{RMU ($c=130$)} 
        & Original 
        & $54.1$	&$56.2	$&$49.0$	&$45.4$  (\textbf{under-unlearn})\\ 
        & \textbf{w/ RNA} ($\nu=1\mathrm{e}{-3}$)  
        & $55.7$&	$65.2$&	$56.3$	&$49.8$  \\
        & \textbf{w/ RNA} ($\nu=2\mathrm{e}{-3}$)  
        & $55.9$&	$65.2$&	$55.9$	&$50.0$  \\ 
        & \textbf{w/ RNA} ($\nu=3\mathrm{e}{-3}$)  
        & $55.0$&	$66.0$&	$56.0$	&$50.4$ \\ 
    \midrule
    \multirow{4}{*}{RMU ($c=140$)} 
        & Original 
        & $53.9$&	$43.2$	&$36.3$	&$38.1$ (\textbf{under-unlearn})\\ 
        & \textbf{w/ RNA} ($\nu=1\mathrm{e}{-3}$)  
        & $54.6$	&$62.9$&	$51.9$	&$48.2$ \\
        & \textbf{w/ RNA} ($\nu=2\mathrm{e}{-3}$)  
        & $55.0$&	$62.8$&	$52.8$&	$49.1$  \\ 
        & \textbf{w/ RNA} ($\nu=3\mathrm{e}{-3}$)  
        & $55.7$&	$64.9$	&$54.5$	&$50.2$ \\ 
    \midrule
    \multirow{4}{*}{RMU ($c=150$)} 
        & Original 
        & $49.2$ & $13.7$ & $18.0$ & $18.7$ (\textbf{over-unlearn}) \\ 
        & \textbf{w/ RNA} ($\nu=1\mathrm{e}{-3}$)  
        & $53.6$ & $48.6$ & $43.5$ & $37.8$ \\
        & \textbf{w/ RNA} ($\nu=2\mathrm{e}{-3}$)  
        & $54.2$ & $51.4$ & $44.2$ & $39.6$ \\ 
        & \textbf{w/ RNA} ($\nu=3\mathrm{e}{-3}$)  
        & $54.3$ & $53.9$ & $51.0$ & $45.3$ \\  
    \bottomrule
\end{tabular}
}
\label{tab:rmu_miscalibrated}
\end{table}

\begin{table}[t]
\centering
\caption{Performance of NPO+KL unlearned models and  NPO+KL w/ RNA models under over-unlearn and under-unlearn in MUSE News.}
\setlength{\tabcolsep}{2pt}
\resizebox{0.975\linewidth}{!}{
\begin{tabular}{llcccc} 
    \toprule
    \multicolumn{2}{c}{\textbf{Method}} 
    & \textbf{VerbMem$_f$} $\downarrow$ 
    & \textbf{KnowMem$_f$} $\downarrow$  
    & \textbf{KnowMem$_r$} (benign) $\uparrow$ 
    & \textbf{KnowMem$_r$} (perturbed) $\uparrow$ \\ 
    \midrule
    \multicolumn{2}{c}{Base model (MUSE-news\_target)} 
    & $57.2$ & $64.2$ & $64.2$ & $51.8$ \\ 
    \midrule
    
    \multirow{4}{*}{NPO+KL ($1,1$)} 
        & Original 
        & $57.5$&	$61.4$&$	52.6$	&$48.8$  (\textbf{under-unlearn})\\ 
        & \textbf{w/ RNA} ($\nu=1\mathrm{e}{-3}$)  
        & $	56.8$&	$61.0$&$	52.0$&	$47.6$\\
        & \textbf{w/ RNA} ($\nu=2\mathrm{e}{-3}$)  
        &$ 56.8$&	$61.0$&$	52.8$	&$46.6$ \\ 
        & \textbf{w/ RNA} ($\nu=3\mathrm{e}{-3}$)  
        & $56.5$&	$60.2$	&$52.7$&	$46.5$ \\ 
    \midrule
    \multirow{4}{*}{NPO+KL ($1,5$)} 
        & Original 
        & $57.5$	&$59.5$	& $52.7$	&$47.8$  (\textbf{under-unlearn})\\ 
        & \textbf{w/ RNA} ($\nu=1\mathrm{e}{-3}$)  
        & $58.0$	& 
        $59.9$	&$53.8$	&$47.0 $ \\
        & \textbf{w/ RNA} ($\nu=2\mathrm{e}{-3}$)  
        & $56.3$&	$60.4$	&$53.6$&	$46.8$ \\ 
        & \textbf{w/ RNA} ($\nu=3\mathrm{e}{-3}$)  
        &$ 57.5$	&$59.8$&	$53.0$	&$46.7$  \\ 
    \midrule
    \multirow{4}{*}{NPO+KL ($1,10$)} 
        & Original 
        & $58.6$ & $60.0$ & $52.6$&	$45.7$ (\textbf{under-unlearn}) \\ 
        & \textbf{w/ RNA} ($\nu=1\mathrm{e}{-3}$)  
        &$ 58.7$&	$59.8$	&$53.0$	&$46.4$ \\
        & \textbf{w/ RNA} ($\nu=2\mathrm{e}{-3}$)  
        & $58.3$	& $59.9$ & $53.0$	&$46.7$ \\ 
        & \textbf{w/ RNA} ($\nu=3\mathrm{e}{-3}$)  
        & $58.9$&	$60.1$	&$53.9$	&$46.5$ \\ 
    \midrule
    \multirow{4}{*}{NPO+KL ($1,20$)} 
        & Original 
        & $57.7$&	$59.5$&	$54.3$	&$47.4 $ (\textbf{under-unlearn}) \\ 
        & \textbf{w/ RNA} ($\nu=1\mathrm{e}{-3}$)  
        & $58.2$	&$61.2$&	$53.8$	&$47.6$  \\
        & \textbf{w/ RNA} ($\nu=2\mathrm{e}{-3}$)  
        & $58.8$&	$62.8$&	$53.5$	&$46.5$  \\ 
        & \textbf{w/ RNA} ($\nu=3\mathrm{e}{-3}$)  
        & $58.2$	&$62.8$	&$54.7$	&$47.7$  \\ 
    \midrule
    \multirow{4}{*}{NPO+KL ($5,1$)} 
        & Original 
        & $56.8$&	$59.9$	&$51.8$	&$48.6$ (\textbf{under-unlearn}) \\ 
        & \textbf{w/ RNA} ($\nu=1\mathrm{e}{-3}$)  
        & $55.7$&	$60.0$&	$52.6$	&$47.8$  \\
        & \textbf{w/ RNA} ($\nu=2\mathrm{e}{-3}$)  
        & $56.9$	&$59.4$&	$53.1$&	$48.5$  \\ 
        & \textbf{w/ RNA} ($\nu=3\mathrm{e}{-3}$)  
        & $56.7$	&$60.1$&	$51.9$	&$47.7$  \\ 
    \midrule
    \multirow{4}{*}{NPO+KL ($10,1$)} 
        & Original 
        & $57.3$&	$60.6$	&$53.1$&	$48.4$ (\textbf{under-unlearn})\\ 
        & \textbf{w/ RNA} ($\nu=1\mathrm{e}{-3}$)  
        & $56.7$	&$61.5$&	$52.3$&	$48.2 $\\
        & \textbf{w/ RNA} ($\nu=2\mathrm{e}{-3}$)  
        & $55.7$&	$59.9$	&$52.1$	&$48.1$ \\ 
        & \textbf{w/ RNA} ($\nu=3\mathrm{e}{-3}$)  
        & $57.1$&	$59.3$	&$52.6$&	$48.3 $\\  
    \midrule
    \multirow{4}{*}{NPO+KL ($20,1$)} 
        & Original 
        & $56.4$&	$58.2$	&$52.1$	&$48.2$ (\textbf{under-unlearn})\\ 
        & \textbf{w/ RNA} ($\nu=1\mathrm{e}{-3}$)  
        & $56.6$	&$59.3	$&$52.1$	&$47.8$ \\
        & \textbf{w/ RNA} ($\nu=2\mathrm{e}{-3}$)  
        & $56.6$	&$60.6$&	$51.9$	&$48.1$ \\ 
        & \textbf{w/ RNA} ($\nu=3\mathrm{e}{-3}$)  
        &$ 56.1$	&$ 60.8$	& $51.7	$&$47.9$ \\  
    \bottomrule
\end{tabular}
}
\label{tab:npo_miscalibrated}
\end{table}

\section{Limitations}
We posit the following limitations of this study and discuss potential future works.

We have evaluated our methods primarily on WMDP, a widely used and representative benchmark. We acknowledge the existence of other benchmarks, such as MUSE~\citep{shi2025muse} and TOFU~\citep{maini2024tofu}, but these are less suitable for our experimental setup. Specifically, TOFU is designed to remove the influence of specific data points, making it less applicable in generative settings. While MUSE could be suitable, previous work~\citep{shi2025muse} has shown that methods evaluated on MUSE often exhibit over-forgetting or under-forgetting. Since our study focuses on retain-robustness, which requires a careful balance between forgetting and retaining, MUSE is not ideal. These factors make it challenging to apply MUSE and TOFU in our current experiments.

Due to computational constraints, experiments are conducted only on the 7B or 8B models and with updates to a limited set of layer parameters, which may risk overlooking interesting aspects of generalization. Although RNA has demonstrated effectiveness, it relies heavily on hyperparameter grid search to identify an optimal noise scale, making it computationally expensive for extremely large models with hundreds of billions of parameters. 

\label{appendix:discussion_limitation}
\section{AI Usage Declaration}
\label{appendix:declaration}
AI tools were used for grammar checking and formatting of tables and figures. To our best knowledge and belief, we hereby declare that, all technical content and implementations were written by the authors.

\end{document}